\documentclass{article} 

\usepackage{microtype}
\usepackage{graphicx}
\usepackage[skip=1pt]{caption}
\usepackage{subcaption}
\usepackage{booktabs}
\usepackage{hyperref}
\usepackage{tcolorbox}
\tcbuselibrary{skins, breakable}

\usepackage{enumitem}
\setlist{nolistsep}
\usepackage{placeins}
\usepackage{wrapfig}
\usepackage[final]{colm2026_conference}

\usepackage{url}

\usepackage{lineno}

\definecolor{darkblue}{rgb}{0, 0, 0.5}
\hypersetup{colorlinks=true, citecolor=darkblue, linkcolor=darkblue, urlcolor=darkblue}

\usepackage{amsmath}
\usepackage{amssymb}
\usepackage{mathtools}
\usepackage{amsthm}

\theoremstyle{plain}
\newtheorem{theorem}{theorem}[section]
\newtheorem{proposition}[theorem]{Proposition}

\newtheorem{corollary}[theorem]{Corollary}
\theoremstyle{definition}

\theoremstyle{remark}

\DeclareMathOperator{\rank}{rank}

\DeclareMathOperator*{\argmax}{arg\,max}

\usepackage[group-separator={,}]{siunitx}

\usepackage{float}

\definecolor{green(html/cssgreen)}{rgb}{0.0, 0.5, 0.0}

\newcommand{\colorchar}[1]{%
  \ifnum\pdfstrcmp{#1}{A}=0
    \textcolor{green(html/cssgreen)}{\small{\texttt{#1}}}%
  \else
    \textcolor{red}{\small{\texttt{#1}}}%
  \fi
}

\usepackage[capitalize,noabbrev,nameinlink]{cleveref}
\crefname{resultboxctr}{Result}{Results}
\Crefname{resultboxctr}{Result}{Results}
\crefalias{theorem}{proposition}

\newtcolorbox[use counter=resultboxctr]{resultbox}[2][]{
    colback=gray!5,
    colframe=gray!50,
    label={#2},
    boxrule=0.4pt,
    arc=3pt,
    left=6pt, right=6pt, top=4pt, bottom=4pt,
    fontupper=\normalsize,
    colbacktitle=gray!20,
    coltitle=black,
    fonttitle=\normalsize\bfseries,
    toptitle=1pt,
    bottomtitle=1pt,
    title={Result~\thetcbcounter\ifx&#1&\else: #1\fi},
}

\title{\emph{Lost in Backpropagation}: \\ The LM Head is a Gradient Bottleneck}

\author{Nathan Godey, Yoav Artzi \\
Cornell University \\
New York, USA \\
\texttt{godeynathan@gmail.com} 
}

\begin{document}

\ifcolmsubmission
\linenumbers
\fi

\maketitle

\begin{abstract}
The last layer of neural language models (LMs) projects output features of dimension $D$ to logits in dimension $V$, the size of the vocabulary, where usually $D \ll V$. This mismatch is known to raise risks of limited expressivity in neural LMs, creating a so-called softmax bottleneck.
We show the softmax bottleneck is not only an expressivity bottleneck but also an optimization bottleneck. Backpropagating $V$-dimensional gradients through a rank-$D$ linear layer induces unavoidable compression, which alters the training feedback provided to the vast majority of the parameters. 
We present a theoretical analysis of this phenomenon and measure empirically that 95-99\% of the gradient norm is suppressed by the output layer, resulting in vastly suboptimal update directions. 
We conduct controlled pretraining experiments showing that the gradient bottleneck makes trivial patterns unlearnable, and drastically affects the training dynamics of LLMs. 
We argue that this inherent flaw contributes to training inefficiencies at scale independently of the model architecture, and raises the need for new LM head designs. 
%


\end{abstract}

\section{Introduction}\label{sec:intro}

There is significant research effort focusing on developing new architectures for the hidden layers of language models (LMs), aiming to improve efficiency and performance at training and inference times~\citep[e.g.][]{gu2024mamba,ye2025differential,yang2025gated}.
Despite fundamental differences, all considered architectures share a common structure for the output layer: a single linear mapping, followed by a softmax. 
This component is often called the \emph{LM head}. 
In other words, most autoregressive LMs can be seen as (potentially massive) feature extractors for a multi-class classifier, where classes correspond to tokens.

The LM head design is a standard choice for multi-class classification. 
However, the language modeling  classification setup itself is fairly unusual: the number of extracted features, i.e. the hidden dimension $D$ of the model, is commonly orders of magnitude smaller than the number of classes, i.e. the token vocabulary size $V$. 
This mismatch has been shown to limit the expressivity of LMs, and may lead to representation degeneration and performance saturation for small models~\citep{softmax_bottleneck,pmlr-v97-ganea19a,godey2024why}.
Yet, an important aspect of this mismatched mechanism has so far been ignored: 
\emph{the loss gradients in the high-dimensional logit space are backpropagated to a lower-dimension space before the backward pass reaches the rest of the layers}.
In this work, we frame the softmax bottleneck not as an expressivity issue but instead as a significant factor in training dynamics, where stronger bottlenecks hurt the data efficiency of LMs regardless of the underlying architectural choices.
\begin{figure}[t!]
    \centering
    \begin{subfigure}[t]{0.49\linewidth}
        \centering
        \includegraphics[width=\linewidth]{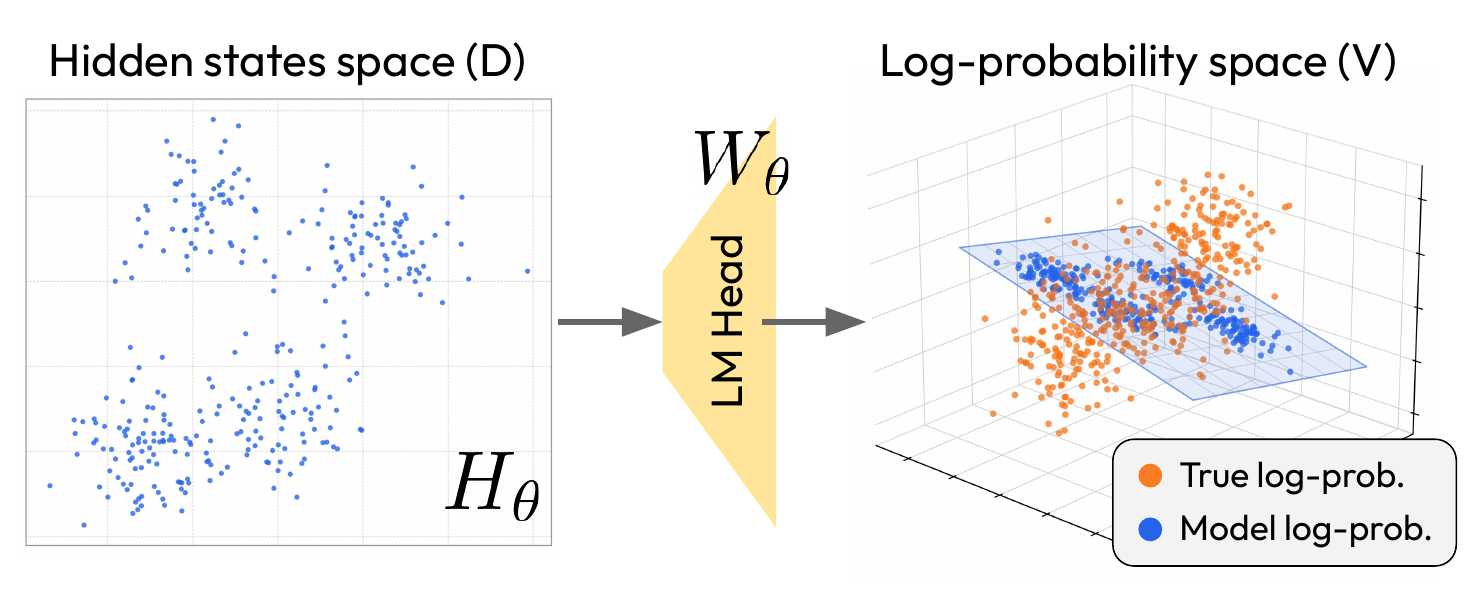}
        \caption{Expressivity view}
        \label{fig:schema_express}
    \end{subfigure}
    \hfill
    \begin{subfigure}[t]{0.49\linewidth}
        \centering
        \includegraphics[width=\linewidth]{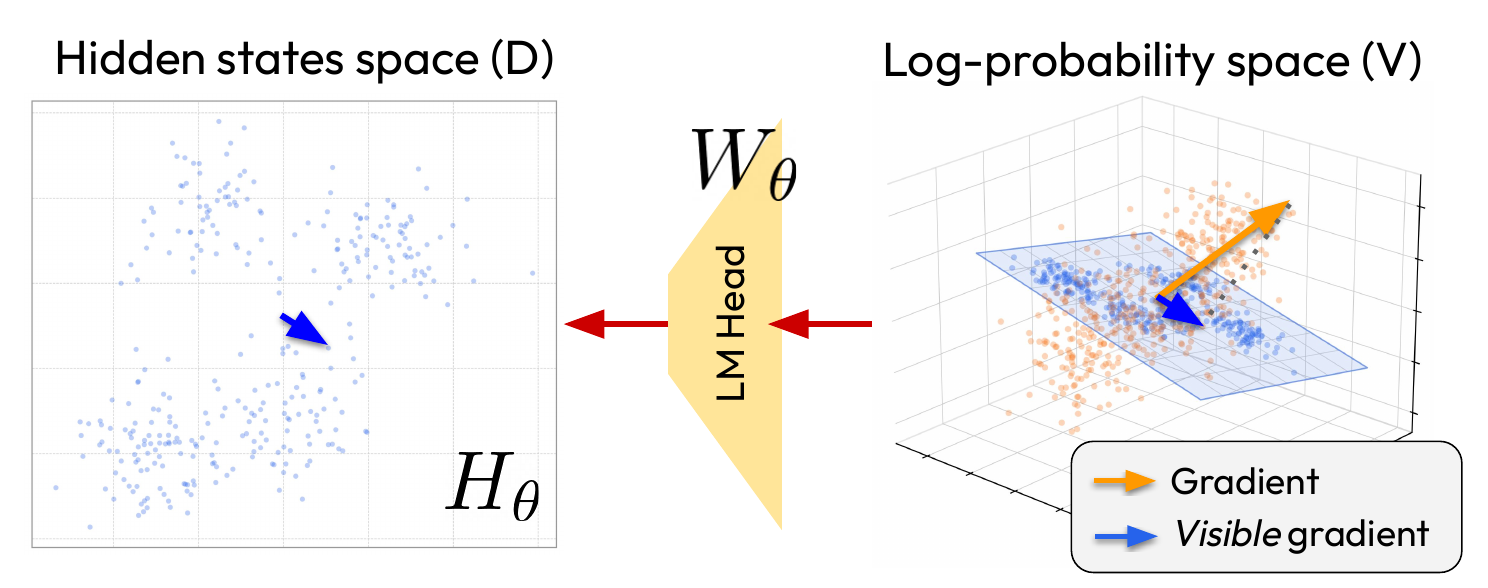}
        \caption{Optimization view}
        \label{fig:schema_optim}
    \end{subfigure}
    \caption{Two views of the softmax bottleneck.
\textbf{(a) Expressivity view:} The LM head $W_\theta$ projects hidden states $H_\theta$
into a rank-$D$ subspace of the $V$-dimensional log-probability space, constraining
where model log-probabilities ({\color{blue}$\bullet$}) can lie relative to the
true log-probabilities ({\color{orange}$\bullet$});
\textbf{(b) Optimization view:} During backpropagation, the full logit gradient
(\textcolor{orange}{$\rightarrow$}) lives in the $V$-dimensional space, but only
a projected component (\textcolor{blue}{$\rightarrow$}) is visible when
passing through $W_\theta$ back to the hidden state space.}
\label{fig:schema}
\end{figure}

We show both empirically and theoretically that the softmax bottleneck induces lossy compression during backpropagation, destroying $>$95\% of the gradient norm. Low-rank LM heads hamper optimization dynamics, making some trivial patterns unlearnable (\Cref{ssec:exp:spamlang}), and reducing LLM training efficiency by up to $\times 16$ for the same backbone (\Cref{ssec:exp:llm_training}).
Our work sheds light on a so far overlooked weakness in current LLM design, outlining important directions for future work.\footnote{Code, data and checkpoints will be released shortly.} 

Our main contributions are:

\begin{itemize}
    \item A theoretical analysis of the gradient bottleneck at the LM head, analyzing the structure of the logit gradient under mild assumptions, showing that part of the gradient is inevitably suppressed during backpropagation;
    \item A set of 2B parameter LMs with varying controlled bottlenecks, showing that training convergence speed is increasingly affected as the bottleneck gets stronger, with noticeable gaps up to our maximum tested hidden dimension (4{,}096);
    \item An example of a trivial synthetic language where expressivity is not an issue, but for which the gradient bottleneck makes it increasingly harder for the model to learn correctly;
    \item Empirical demonstration that the gradient bottleneck destroys 95-99\% of the gradient norm, and that it transfers energy from the most important part of the backpropagation signal to the tail of the coefficients in the form of random noise.
\end{itemize}



\section{Theoretical Overview}\label{sec:theory_ow}

The softmax bottleneck has historically been framed as an \emph{expressivity} problem:
when the hidden dimension $D$ is much smaller than the vocabulary size $V$, the LM head
imposes a low-rank constraint on the output log-probabilities, preventing the model from
representing arbitrary next-token distributions~\citep{softmax_bottleneck}.
We show that this framing is incomplete.
As illustrated in~\Cref{fig:schema}, the same bottleneck severely distorts the gradients that propagate back through the LM
head during training, independently of any expressivity concern.
The full formal development and all proofs are in \Cref{sec:theory,app:proofs}; here we provide
the key intuitions and state our main results.

\paragraph{Problem Setup}
Let \mbox{$C$} be the number of distinct contexts in the training data, where contexts are defined as sequences of past tokens \mbox{$\mathbf{x}_{<t}$} taken in a vocabulary of size \mbox{$V$}. The model backbone (typically a Transformer) with parameters \mbox{$\theta$} produces a hidden state \mbox{$H_{\theta, i} \in \mathbb{R}^D$} for each context \mbox{$i \in [1,C]$}; we stack these into the \emph{hidden state matrix} \mbox{$H_\theta \in \mathbb{R}^{C \times D}$}. The \emph{LM head} \mbox{$W_\theta \in \mathbb{R}^{V \times D}$} projects each representation to logits, giving the \emph{logit matrix} \mbox{$L_\theta = H_\theta W^\top_\theta \in \mathbb{R}^{C \times V}$}. The softmax function \mbox{$\sigma$} computes next-token probabilities: \mbox{$P_\theta = \sigma(L_\theta) \in \Delta^{C \times V}$}, where \mbox{$\Delta^{m \times n}$} is the set of row-stochastic matrices in \mbox{$\mathbb{R}^{m \times n}$}. The training loss is the cross-entropy between \mbox{$P_\theta$} and the empirical next-token distribution obtained by counting next-tokens over all contexts \mbox{$\tilde{N} \in \Delta^{C \times V}$}. The key structural fact is that the logit matrix \mbox{$L_\theta$} has rank at most \mbox{$D$}, since it is a product of a \mbox{$C \times D$} and a \mbox{$D \times V$} matrix.

\subsection{Expressivity: the Classical Softmax Bottleneck View}\label{sec:theory_ow:expressivity}

The classical softmax bottleneck~\citep{softmax_bottleneck} occurs when $D \ll V$ and the model cannot represent arbitrary next-token distributions:
$\operatorname{rank}(L_\theta) \leq D$ forces the output log-probabilities into a low-dimensional
subspace of $\mathbb{R}^V$. As a result, if the log-probabilities associated with the true next-token distribution span a space of rank $>D$ in $\mathbb{R}^V$, a neural LM \emph{cannot} perfectly match the true distribution~\citep{softmax_bottleneck}.

However, this constraint is weaker than it might appear. 
When the hidden dimension $D \ge 2$, a neural LM as described above has sufficient expressivity to assign the correct top-1 next-token probability
to arbitrary precision, regardless of vocabulary size $V$. 
We formally show this in \Cref{prop:top1} and \Cref{proof:top1}.
The low-rank constraint therefore does not prevent the model from predicting
\emph{which} token comes next, nor from assigning it the right probability. 
Recent work extends this result to show that even more complex truncated probability distributions can be expressed even when $D \ll V$~\citep{basri2026the}. 

Nevertheless, the fact that a LM has the theoretical expressivity to predict a token with the correct empirical probability does not mean that it can easily be optimized to do so, leading to studying the optimization viewpoint.

\subsection{Optimization: Logit Updates Are Rank-Constrained}
\label{sec:theory_ow:gradientdescent}

We analyze training through the lens of how the logits $L_\theta$ evolve under gradient descent.
The first-order optimal update to the logits is the logit gradient
\mbox{$\nabla_L \mathcal{L} = \operatorname{diag}(f)(P_\theta - \tilde{N})$} where $f$ is the context frequency vector in $\mathbb{R}^C$, $P_\theta$ is the model next-token probability matrix, and $\tilde{N}$ is the empirical next-token probability matrix.
Training is efficient in the first order when the updates for the LM head $W_\theta$ and the hidden states $H_\theta$ move the logit matrix $L_\theta$ along its gradient $-\nabla_L \mathcal{L}$.

The actual update to the logit matrix $L_\theta$ induced by gradient descent on the projection parameters $W_\theta$, however, is constrained.
Because any update $\Delta W_\theta$ to the LM head produces a logit change $H_\theta \Delta {W_\theta}^\top$
of rank at most $D$, and similarly for the contribution from updating the  state $H_\theta$, the logit update has rank at most $2D$. As the logit gradient lives in $\mathbb{R}^{C \times V}$, it may  have a rank up to $V$, which is usually much larger than $2D$. We show that the logit gradient is structurally high-rank, implying that the logit update cannot align with its direction.

\begin{resultbox}[(informal; \Cref{eq:delta,prop:upd_res})]{}
    \label{result:fullgd_error}
     Under mild conditions about the data distribution, the logit gradient $\nabla_L\mathcal{L}$ has full rank $V$. Hence, the update direction is provably suboptimal, with a residual lower--bounded by the tail singular values of $(P_\theta - \tilde{N})$ beyond rank $2D$.
\end{resultbox}

   

\noindent Our main theoretical contribution is in the proof that the logit gradient is structurally high-rank. 
\Cref{proof:rankPN} provides the complete proof.
Informally, we extract from the logit gradient a submatrix analogous to a fully-connected graph laplacian up to a permutation of rows and a potential deletion of columns that correspond to specific tokens. When no columns have to be deleted to obtain the appropriate submatrix, the logit gradient inherits the property of the laplacian and is near full-rank ($V-1$). When columns have to be deleted, we show that this submatrix is strictly diagonally dominant and is thus also full rank by Gershgorin's circle theorem, where the rank depends on the number of remaining columns.

\subsection{The Bottleneck Persists Under SGD and Alternative Heads}\label{sec:theory_ow:sgd}


\paragraph{Stochastic Gradient Descent}
One might hope that mini-batches include fewer contexts, reducing the effective rank of \mbox{$(P_\theta - \tilde{N})$} taken at batch-level and relaxing the bottleneck. However, we show that this is not the case.

\begin{resultbox}[(informal; \Cref{prop:sgd,cor:sgd})]{result:sgd_error}
 Near convergence, the prediction error within each mini-batch has rank close to
$V - 1$, as in the full-batch setting.
\end{resultbox}

\noindent The proof (\Cref{proof:sgd}) emerges from considering contexts where the empirical next-token probability vector is dense when measured over the full dataset but sparse when measured over the current batch, which is very likely to happen in natural language. In such cases, the logit gradient itself is dense, and under sufficient density-related conditions, the rank of the logit gradient is near full rank.

\paragraph{Alternative LM Head Designs}
Prior work proposed mixture-based or nonlinear output layers to boost the expressible
rank of the output log-probabilities~\citep{softmax_bottleneck,mixtape,pmlr-v97-ganea19a}.
These address the expressivity bottleneck but not the optimization bottleneck.
For any mapping from the hidden states to the logits $g_\theta(H_\theta) = L_\theta$, the gradient reaching the hidden states
is $\nabla_H \mathcal{L} = \nabla_L \mathcal{L} \cdot J_g(H_\theta)$, where the
Jacobian $J_g$ has rank at most $D$ regardless of the choice for $g$ (\Cref{eq:jacobian}). It is possible that some choices for $g$ lead to better conditioning, but the prior work on softmax bottleneck does not explicitly target such choices. Designing optimization-compliant $g$ is an important direction for future work.

\subsection{Empirical Validation \& Discussion}\label{sec:theory_ow:empirical}
The residual bound in \Cref{result:fullgd_error} is stated in terms of singular values, which raises a legitimate
concern: even if $\operatorname{rank}(P_\theta - \tilde{N}) > 2D$, the tail singular
values could be negligible, making the bound vacuous. Our theory therefore establishes
the \emph{inevitability} of a compression in the backpropagation through the LM head,
but not its severity. We close this gap empirically through three complementary
observations.

First, we verify that logit gradients are near full-rank in practice: the empirical rank of
$\nabla_L \mathcal{L}$ for Pythia models~\citep{biderman2023pythia} on the
Pile~\citep{gao2020pile} grows rapidly with batch size and approaches $V$
(\Cref{fig:pythia_grad_rank}), validating empirically the results on the rank of the logit gradient in  \Cref{result:fullgd_error,result:sgd_error}.

While the bound obtained in \Cref{result:fullgd_error} is non-vacuous, the singular value spectrum of
$\nabla_L \mathcal{L}$ is not flat. \Cref{fig:svd_components} shows that
the number of SVD components needed to almost reconstruct the logit gradient up to 99.9\% is substantial (up to 30{,}000), but a relatively small number of dimensions
already captures a substantial share (50--70\%) of the signal. In other words, a rank-$D$
projection that actively targets the dominant directions of $\nabla_L \mathcal{L}$
could theoretically recover a meaningful portion of the gradient.

However, as we discuss in \Cref{ssec:exp:grad_comp}, $W_\theta$ does not optimize for this compression empirically. Projecting
$\nabla_L \mathcal{L}$ onto the null space of ${W_\theta}^\top$ destroys $95$--$99\%$ of the gradient norm across GPT-2, Pythia, Llama~3, OLMo~2, and Qwen~3, with a cosine similarity between the surviving signal and the original of only $0.1$--$0.2$
(\Cref{fig:lost_gradient,fig:cossim_grad}). The backpropagation through ${W_\theta}^\top$ is thus destructive: rather than preserving the dominant gradient directions, it discards nearly all of the signal, redirecting energy to the tail of the
coefficients in the form of noise.



\section{Experiments}\label{sec:experiments}

\subsection{Consequences for LLM Training}\label{ssec:exp:llm_training}

We measure the impact of the gradient bottleneck effect for actual pretraining setups. A difficulty that arises in such experiments is the inherent dependency of model size on the hidden dimension. As a result, studying models with different $D$ values either implies comparing models of different sizes, or adjusting the depth of the models to account for the increased width. However, prior work~\citep{petty-etal-2024-impact} reports that deeper models show stronger language modeling and downstream performance, which may be a confounding factor when adjusting depth and width in such a way.

To disentangle the impact of architectural choices and of the gradient bottleneck effect on performance, we train LMs that share the same Transformer stack, based on the Llama3 architecture, with a hidden dimension of $d_m=4096$ and 6 hidden layers, representing a total of 2B parameters (or 1.8B without the output embeddings). However, the language modeling head of each model mimics a hidden dimension $D \leq d_m$ by adopting a low-rank structure $W_\theta = A_\theta B_\theta$, where $A_\theta \in \mathbb{R}^{V \times D}$ and $B_\theta \in \mathbb{R}^{D \times d_m}$. As a result, the Transformer backbone is equally expressive across all compared models, but $D$ provides control over the strength of the gradient bottleneck.

\begin{figure}[t!]
    \centering
    \begin{subfigure}[t]{0.49\linewidth}
        \centering
        \includegraphics[width=\linewidth]{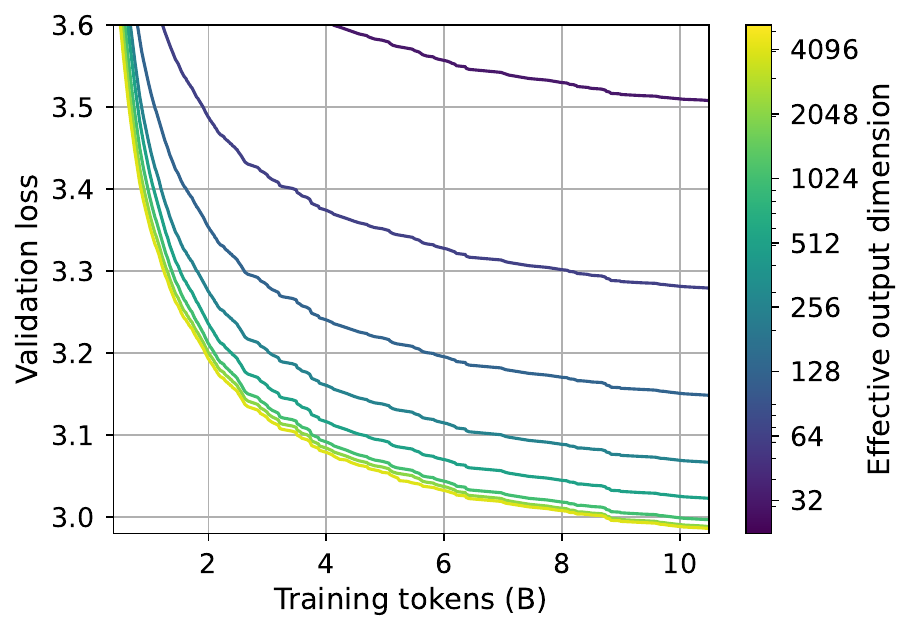}
        \caption{Validation loss.}
        \label{fig:training_dyn}
    \end{subfigure}
    \hfill
    \begin{subfigure}[t]{0.49\linewidth}
        \centering
        \includegraphics[width=\linewidth]{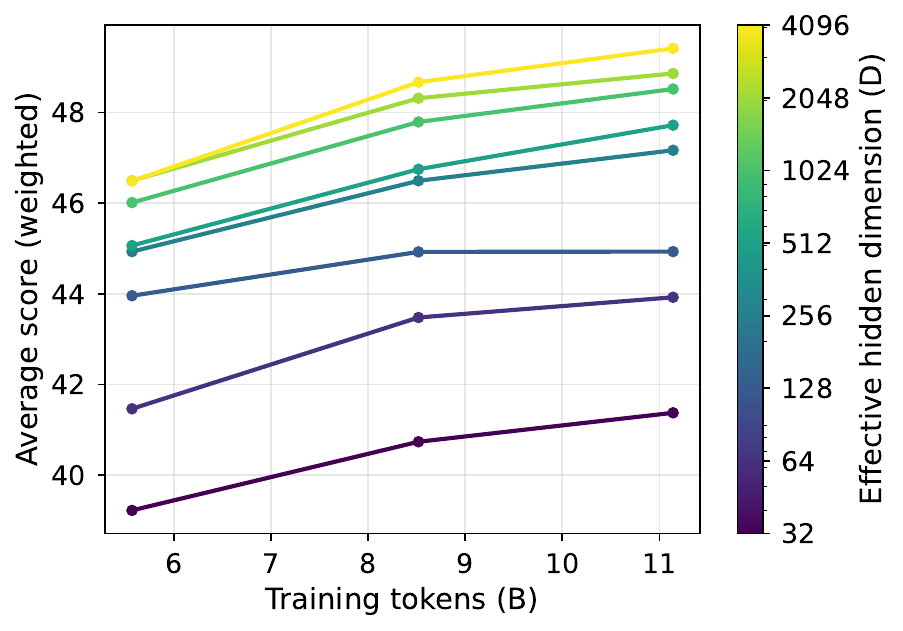}
        \caption{Downstream zero-shot scores.}
        \label{fig:benchmark_perf}
    \end{subfigure}
    \caption{Impact of the gradient bottleneck on 2B model training. We control the rank $D$ of the output linear layer to isolate the bottleneck effect without changing the Transformer backbone. Both validation loss and downstream performance improve consistently with $D$, with a $\times$16 convergence speedup between $D{=}32$ and $D{=}4096$.}
    \label{fig:llm_training}
\end{figure}

We train 8 such 2B parameter models on $\sim$11B tokens of Fineweb-Edu data~\citep{fineweb}, with $D=2^i$ for $i\in[5, 12]$. To avoid exacerbating the effects of the gradient bottleneck by choosing a large vocabulary, we use the relatively compact SmolLM2 tokenizer~\citep{allal2025smollm2smolgoesbig} that counts $V=\num{49152}$ tokens. We use the WSD learning rate schedule~\citep{hu2024minicpm} to perform cooldown at different points in training (5B, 8.5B, 11B). All details about the pretraining setup are in \Cref{app:exp_setups}.

\Cref{fig:training_dyn} shows the validation loss evolution along training. We perform zero-shot downstream evaluation on our models after LR cooldown in \Cref{fig:benchmark_perf}. It clearly appears that both the language modeling and downstream performance converge faster for larger values of $D$. The validation loss of all models is separated by consistent and non-negligible gaps along training. The $D=\num{4096}$ reaches the final loss level of the $D=32$ variant within 700M tokens, displaying a $\times16$ speedup in convergence. The final average score gap for $D=\num{2048}$ and $D=\num{4096}$ is +0.55, showing a noticeable performance gap even for large hidden dimension.  We report detailed scores in \Cref{app:bench_scores}.

A potential confounding factor for this experiment is the slight difference in total parameter count across models (ranging from 1.8B to 2.0B parameters). However, it is not clear whether counting embedding parameters is relevant for downstream performance extrapolation~\citep{kaplan_scaling,chinchilla_scaling}, and the reported gaps are much more significant than such size discrepancies would predict.

\subsection{Disentangling Expressivity and Optimization Issues}
\label{ssec:exp:spamlang}


\begin{figure}[t!]
    \centering
    \begin{subfigure}[t]{0.51\linewidth}
        \centering
        \includegraphics[width=\linewidth]{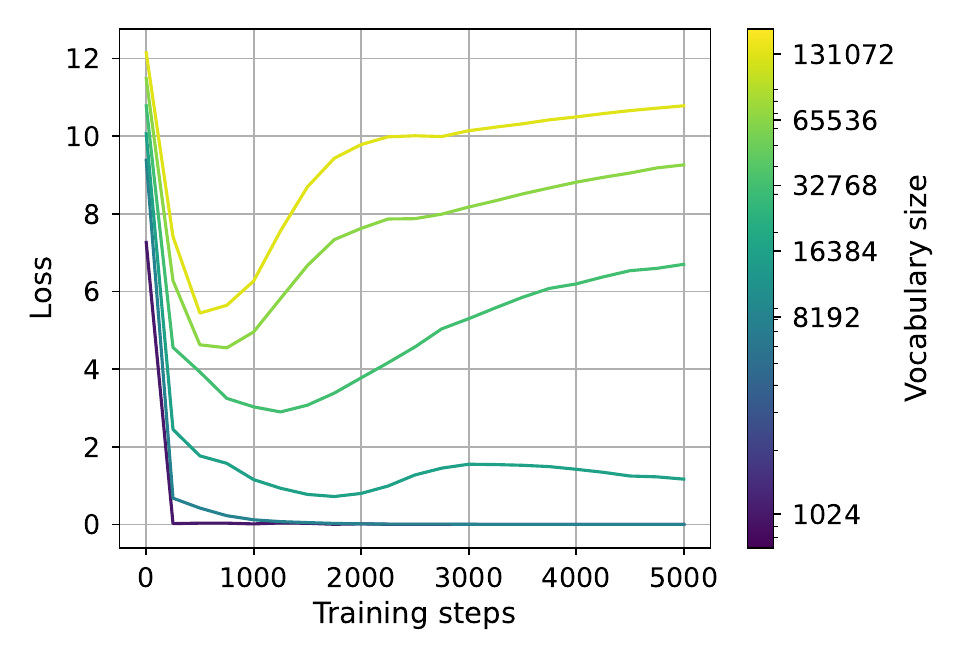}
        \caption{Training curves (LR: 5e-4).}
        \label{fig:spamlang_training_curves}
    \end{subfigure}
    \hfill
    \begin{subfigure}[t]{0.47\linewidth}
        \centering
        \includegraphics[width=\linewidth]{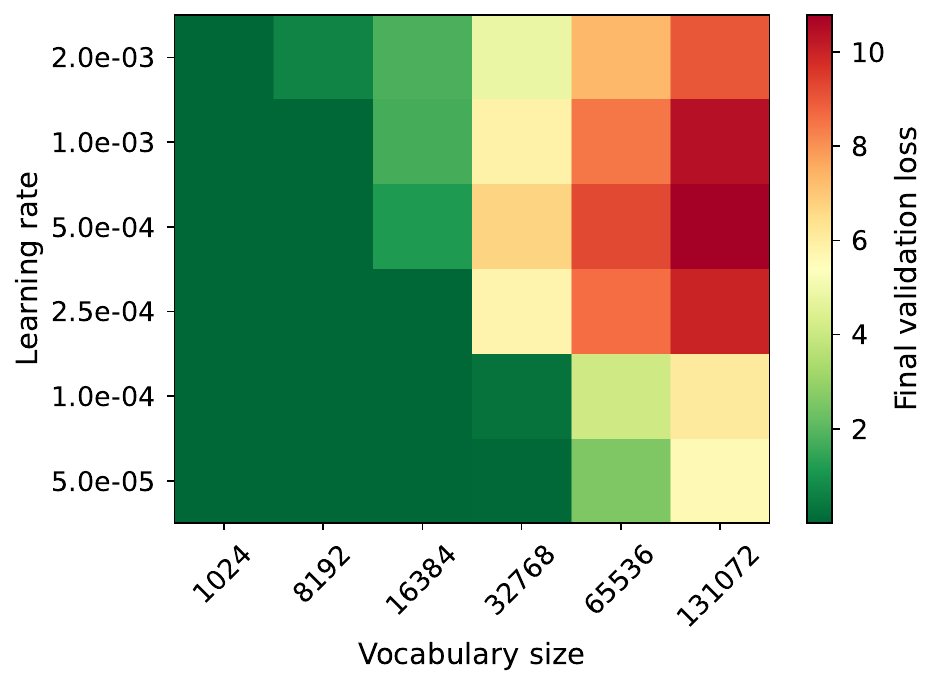}
        \caption{Final validation loss across vocabulary sizes and learning rates ($D{=}576$).}
        \label{fig:spamlang_final}
    \end{subfigure}
    \caption{\texttt{SpamLang} experiments. Despite Transformers being provably expressive enough to learn this synthetic language for any vocabulary size (\Cref{prop:top1}), the gradient bottleneck makes learning increasingly difficult as $V$ grows.}
    \label{fig:spamlang}
\end{figure}

Framing the softmax bottleneck problem solely as an expressivity limitation does not account for all the phenomena observed when $D \ll V$. 
We illustrate this by considering a trivial synthetic language (\texttt{SpamLang}) made of $V$ symbols, where each sequence consists of one uniformly drawn symbol repeated throughout the whole sequence: 
\[(w_1, w_1, ..., w_1), (w_2, w_2, ..., w_2), ...\]

By virtue of \Cref{prop:top1}, and given that Transformers can perfectly learn the identity function~\citep{Yun2020Are}, for any $V$ and $D \geq 2$, we can manually pick the weights of a Transformer-based model to reach a loss arbitrarily close to $0$. Hence, Transformers have sufficient expressivity for this problem. 
We train a Transformer-based LM with 106M non-embedding parameters and $D = \num{576}$ on 41M tokens of \texttt{SpamLang} for vocabulary sizes $V$ ranging from 1{,}024 to 131{,}072. Critically, even for the largest vocabulary size, each distinct token should occur roughly 300 times in expectation.

\begin{wraptable}{r}{0.45\linewidth}
\vspace{-1em}
\centering
\begin{tabular}{r l}
\toprule
\textbf{V} & \textbf{Generated sequence} \\
\midrule
16384 & \small{\texttt{A}}\colorchar{A}\colorchar{A}\colorchar{A}\colorchar{A}\colorchar{A}\colorchar{A}\colorchar{A}\colorchar{A}\colorchar{A}\colorchar{A}\colorchar{A}\colorchar{A}\colorchar{A}\colorchar{A}\colorchar{A}\colorchar{A}\colorchar{A}\colorchar{A}\colorchar{A} \\
\midrule
32768 & \small{\texttt{A}}\colorchar{A}\colorchar{Ġ}\colorchar{p}\colorchar{A}\colorchar{Ġ}\colorchar{A}\colorchar{A}\colorchar{Ġ}\colorchar{A}\colorchar{Ġ}\colorchar{i}\colorchar{Ġ}\colorchar{A}\colorchar{A}\colorchar{'}\colorchar{Ġ}\colorchar{Ġ}\colorchar{A}\colorchar{Ġ} \\
\midrule
65536 & \small{\texttt{A}}\colorchar{-}\colorchar{|}\colorchar{l}\colorchar{u}\colorchar{-}\colorchar{-}\colorchar{|}\colorchar{B}\colorchar{-}\colorchar{|}\colorchar{A}\colorchar{N}\colorchar{Ġ}\colorchar{R}\colorchar{Ġ}\colorchar{b}\colorchar{.}\colorchar{Ġ}\colorchar{|} \\
\midrule
131072 & \small{\texttt{A}}\colorchar{ç}\colorchar{.}\colorchar{|}\colorchar{ç}\colorchar{Ġ}\colorchar{ç}\colorchar{ç}\colorchar{ç}\colorchar{ç}\colorchar{y}\colorchar{ç}\colorchar{ç}\colorchar{æ}\colorchar{m}\colorchar{ç}\colorchar{è}\colorchar{×}\colorchar{ç}\colorchar{ç} \\
\bottomrule
\end{tabular}
\vspace{5pt}
\caption{Generated samples from our \texttt{SpamLang} models for different vocabulary sizes (LR: 2.5e-4). We use a single token (here \texttt{A}) as a seed and generate with temperature 0.9. We use the first character of the Qwen3-Base tokens to represent \texttt{SpamLang} tokens. We use colors to label \texttt{seed}, and \textcolor{green(html/cssgreen)}{\small{\texttt{correct}}}/\colorchar{incorrect} predictions.}
\label{tab:genrepeat}
\vspace{-0.5em}
\end{wraptable}

\Cref{fig:spamlang_training_curves} reports the loss on the training distribution across training with a learning rate of 5e-4. Convergence is increasingly difficult as the vocabulary size increases compared to our fixed hidden dimension. To mitigate the confounding effect of optimization hyperparameters, we explore various learning rates in \Cref{fig:spamlang_final}. Moreover, we report results with and without embedding weight tying in \Cref{app:exp_setups}, as reaching the solution described in the proof of \Cref{prop:top1} may be easier in the case of weight tying. The results show that although the models are robust to most learning rate choices for smaller vocabulary sizes, they tend to become more and more sensitive to high learning rates, up to a point where no learning rate choice (in the tested range) leads to proper convergence. In \Cref{tab:genrepeat}, we generate using our models for increasing vocabulary sizes to illustrate the failures of models when $V$ is large. We also notice that the repetition mechanism is only partially learned, as models sometimes repeat tokens for a few steps only.

This experiment shows that expressivity is not a sufficient scope to understand the softmax bottleneck issue. It adequately illustrates how increasing $V$ and thus the dimensionality of the logits gradients makes it difficult -- and in cases impossible -- to learn basic patterns as the supervision feedback gets more and more compressed.

\subsection{Analysis of the Gradient Compression}\label{ssec:exp:grad_comp}

A way to characterize empirical patterns of gradient compression in the common LM pretraining setup is studying the null space of the projection matrix $W_\theta^\top$ (or the more general projection of the hidden state $J_g(H_\theta)$), as:
\[
\nabla_H\mathcal{L} =  \nabla_L\mathcal{L} \cdot J_g(H_\theta) =  \nabla_L\mathcal{L} \cdot W_\theta\;\;.
\]
Hence, the projection of the logits gradient on $\ker (W_\theta^\top)$ is exactly the part of the logits gradient that is destroyed by the gradient bottleneck.
We extract a basis for $\ker (W_\theta^\top)$ for different pretrained models by performing a $QR$ decomposition of $W_\theta$ and by selecting the trailing $V-D$ columns of $Q$. We can measure the fraction of the logits gradient norm that is lost in the backpropagation process:
\begin{equation}
\label{eq:lostgrad}
\frac{||p_{\ker(W_\theta^\top)} (\nabla_L \mathcal{L})||_F}{||\nabla_L \mathcal{L}||_F}\;\;.
\end{equation}

\begin{figure}[t]
    \centering
    \begin{subfigure}[t]{0.49\linewidth}
        \centering
        \includegraphics[width=\linewidth]{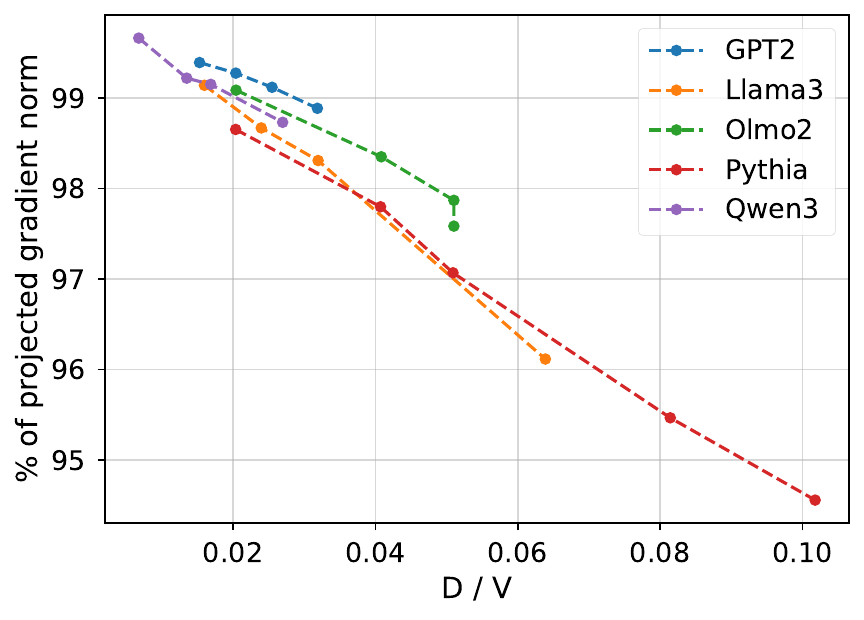}
        \caption{Fraction of projected gradient norm.}
        \label{fig:lost_gradient}
    \end{subfigure}
    \hfill
    \begin{subfigure}[t]{0.49\linewidth}
        \centering
        \includegraphics[width=\linewidth]{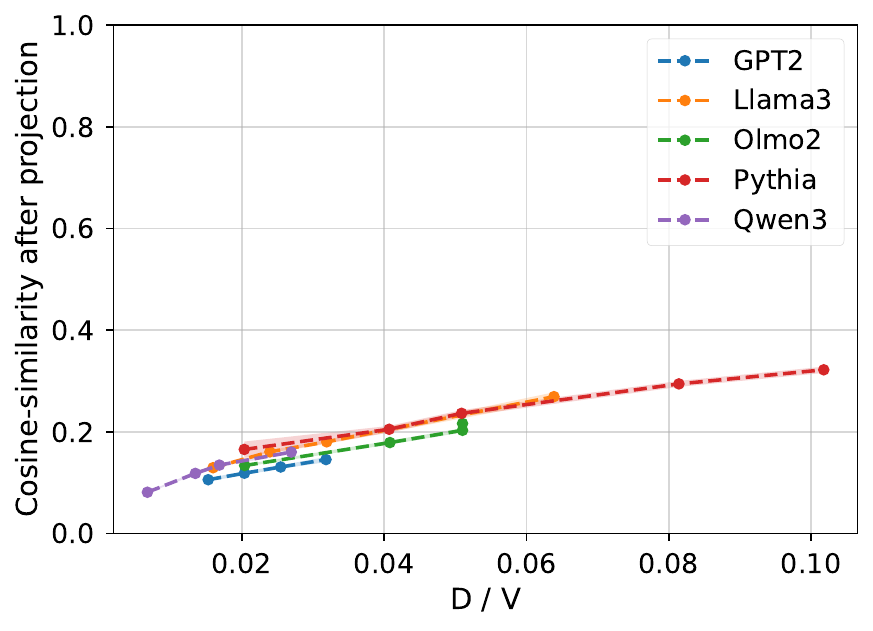}
        \caption{Cosine similarity after projection.}
        \label{fig:cossim_grad}
    \end{subfigure}
    \caption{Gradient compression as a function of $D/V$, for several model families. The logit gradient norm is almost entirely destroyed by backpropagation through $W_\theta$, and the surviving projected gradient is only weakly aligned with the original.}
    \label{fig:gradient_compression}
\end{figure}

\Cref{fig:lost_gradient} reports the fraction of the gradient norm that is destroyed as measured by \Cref{eq:lostgrad} for several model families: GPT2~\citep{radford2019language}, Pythia~\citep{biderman2023pythia}, Llama3~\citep{llama3modelcard}, OLMo2~\citep{olmo20252olmo2furious}, and Qwen3-Base~\citep{qwen3technicalreport}. We use 10{,}000 shuffled documents from the FineWeb dataset~\citep{fineweb} as a proxy for usual web-crawled pretraining datasets. We observe that the gradients are almost completely suppressed for all model families, with $\sim$95-99\% of their norm being projected to $\ker (W_\theta^\top)$. In \Cref{fig:cossim_grad}, we report the average cosine similarity between the logit gradient and its projection on the complement of $\ker(W_\theta^\top)$, which can be framed as the part of the gradient that is visible for all preceding parameters. We observe that most values are comprised between 0.1 and 0.3 showing that the projection is in general mildly aligned with the original gradient. All families follow a similar trend where both the fraction of lost logit gradient norm diminishes as $D$ increases. We perform a similar analysis across intermediate checkpoints of OLMo2-1B in \Cref{app:training_dynamics} to investigate the training dynamics of these metrics, showing that they remain nearly constant throughout training.

\begin{figure*}[t!]
    \centering
    \begin{subfigure}[t]{0.49\textwidth}
        \centering
        \includegraphics[width=\textwidth]{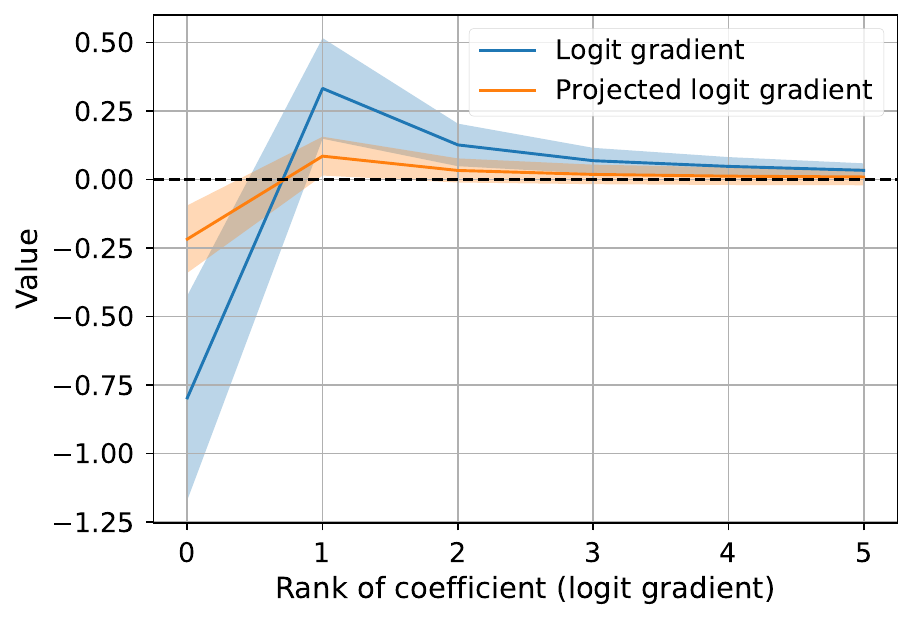}
    \end{subfigure}%
    ~ 
    \begin{subfigure}[t]{0.49\textwidth}
        \centering
        \includegraphics[width=\textwidth]{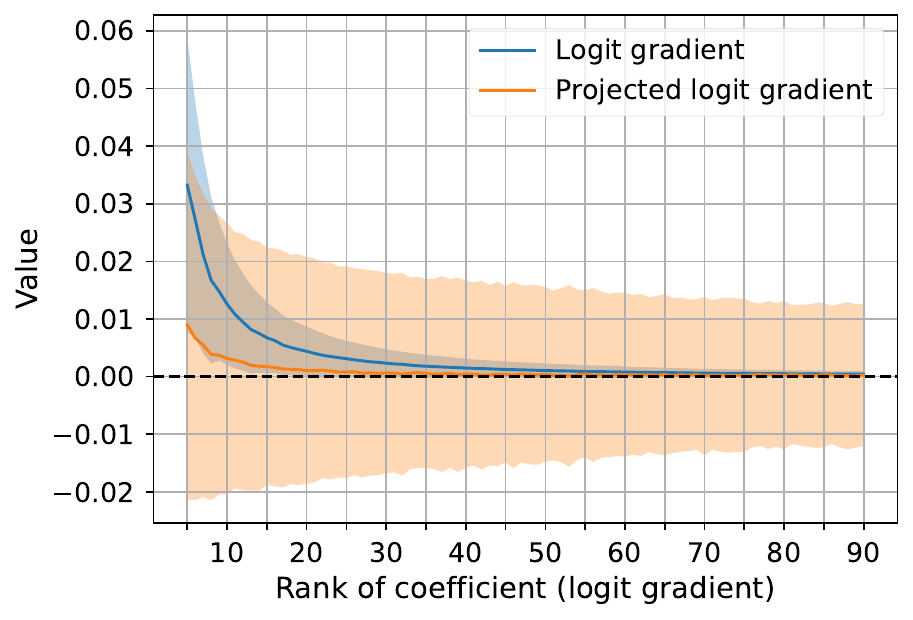}
    \end{subfigure}
    \caption{Average full and projected logits gradient coefficients, sorted by absolute values of the full gradient. The gradients are taken for Llama-3.1-70B on Fineweb documents. The plot on the left-hand side focuses on the main coefficients, while the plot on the right-hand side zooms in on smaller coefficients. The standard deviation is reported using filled areas around the curves.}
    \label{fig:logit_grad_shape}
\end{figure*}

\Cref{fig:logit_grad_shape} directly compares the distributions of the coefficients of the full and projected logits gradients for Llama-3.1-70B. The negative coefficient corresponding to the observed token maintains its sign during the projection. The pattern for the first 5--10 coefficients is strongly diminished but still correlated with the full gradient. However, the fraction of the norm dedicated to the tail of the coefficients -- corresponding to tokens that were predicted with low probability -- is more important for the projected gradient. It takes the form of a higher variance, which can be viewed as a form of noise on the training feedback. We perform the same analysis for Llama-3.1-8B and OLMo2-32B in \Cref{app:grad_comp} and reach similar conclusions. In \Cref{app:gradcomp_ex}, we provide a fine-grained token-level analysis of this distortion on a simple example, showing strong and noticeable misalignment and exposing interesting semantic and lexical patterns.

\subsection{Efficiency of the Update Direction}
\label{ssec:exp:update_dir}

The training feedback suffers from a destructive compression at the output layer level, which implies a loss of information during the backpropagation process. As a result, the hidden states $H_\theta$ and the subsequent model weights are updated based on partial views of the training batch. This raises the question of the efficiency of optimization methods in that setup: to what extent does descending in that compressed direction yield a decrease of the loss on the current training batch?

We answer this question by comparing the loss dynamics for two update directions for the logits: the first-order optimal logits gradient $\nabla_L \mathcal{L}$, and the direction in logit space resulting from updating the hidden states along their gradient, i.e. $\nabla_H \mathcal{L} \cdot W_\theta^\top$. In \Cref{fig:direction_eff}, we extract both directions for several trained models on batches of 1{,}024 tokens of Fineweb data and compute the loss variation resulting from updating the logits by a fraction of their norm in each direction.

\begin{figure*}[t!]
    \centering
    \begin{subfigure}[t]{0.33\textwidth}
        \centering
        \includegraphics[width=\textwidth]{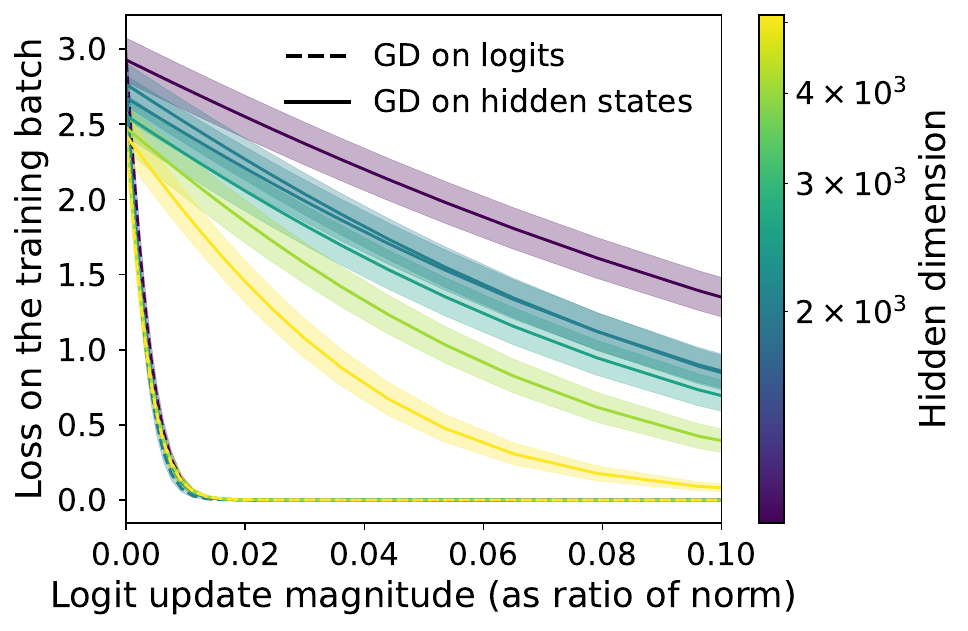}
        \caption{Pythia}
    \end{subfigure}%
    ~ 
    \begin{subfigure}[t]{0.33\textwidth}
        \centering
        \includegraphics[width=\textwidth]{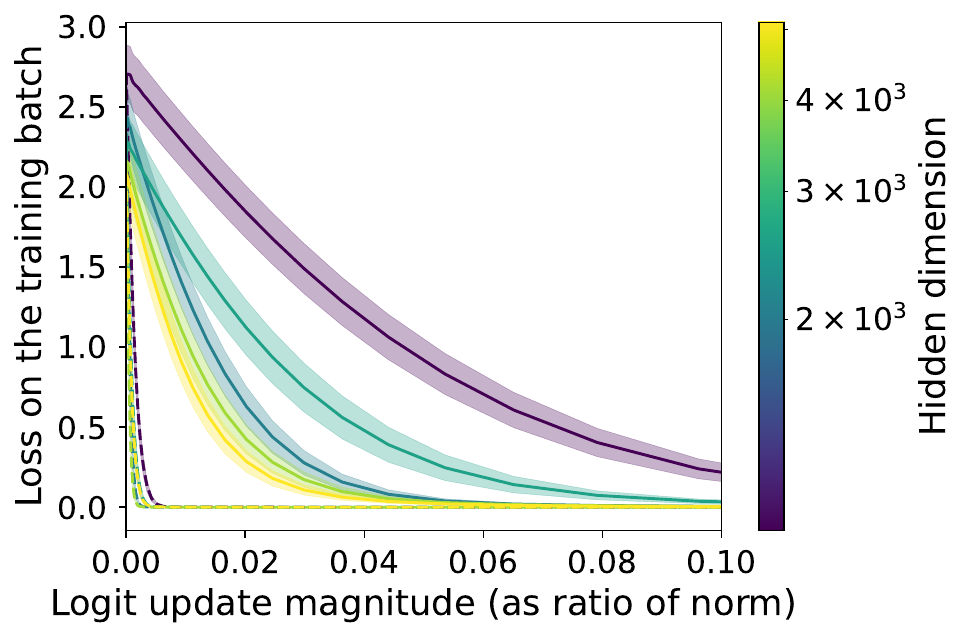}
        \caption{Qwen3-Base}
    \end{subfigure}%
    ~ 
    \begin{subfigure}[t]{0.33\textwidth}
        \centering
        \includegraphics[width=\textwidth]{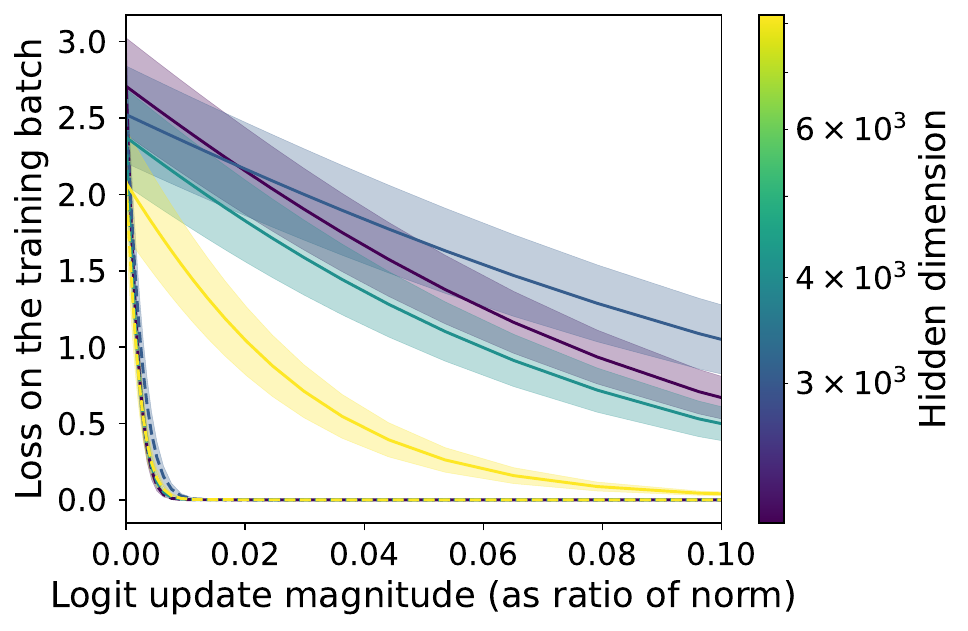}
        \caption{Llama3}
    \end{subfigure}%
    \caption{Effect of first-order updates on logits and hidden states for the language modeling loss for different families of models.}
    \label{fig:direction_eff}
\end{figure*}

For each family, \Cref{fig:direction_eff} clearly shows that updating hidden states along $\nabla_H\mathcal{L}$ is orders of magnitude less efficient than directly updating logits along $\nabla_L\mathcal{L}$, given the same budget for the logit update norm. Moreover, updating $H$ along $\nabla_H\mathcal{L}$ is more efficient as the hidden dimension $D$ increases, proving that the gradient compression becomes less harmful for update efficiency. This phenomenon is independent of model size, as the loss curves for the 1B and 1.4B Pythia models that share the same hidden dimension hyperparameter collapse into each other.
This shows that the gradient compression at the output layer of LLMs is destructive and redirects supervision feedback energy from top components to the tail of the gradient coefficients in the form of noise. This compression harms optimization because the resulting compressed gradients turn out to provide much less efficient update directions for the training loss.
\section{Related Works \& Discussion}
\label{sec:rw}
\paragraph{Softmax Bottleneck and Alternatives} \citet{softmax_bottleneck} are the first to show that softmax-based LMs face an expressivity limitation when the hidden dimension is smaller than the rank of the target log-probability matrix. \citet{chang-mccallum-2022-softmax} extended this analysis, showing that a single hidden state cannot simultaneously be close to all possible next-word embeddings, making models unable to represent multi-mode word distributions. \citet{grivas-etal-2022-low} show that some permutations of next-token probability ordering cannot be reached when $D < V$, but observe rare occurrences of such issues in LMs. To address these expressivity limitations, several architectural alternatives have been proposed~\citep{softmax_bottleneck,sigsoftmax,pmlr-v97-ganea19a}, which all consist in breaking the softmax function into several components to increase the logit rank. While these works focus on the expressivity consequences of the softmax bottleneck, our work reveals a complementary optimization perspective, harming training efficiency and potentially impeding convergence towards expressive weight configurations.

\vspace{-0.7em}
\paragraph{Representation Degeneration and Gradient Flow} \citet{gao2018representation} identified the representation degeneration problem in neural language generation, observing that output embeddings tend to concentrate in a narrow high-dimensional cone when trained on large datasets. \citet{godey2024why} link this issue to the softmax bottleneck, showing that the emergence of such degeneration can coincide with a performance drop for small LMs. They hypothesize that the softmax bottleneck can lead to optimization issues through unstable dynamics when the LM head reaches a form of spectral saturation. Our findings could shed light on these optimization dynamics. Moreover, even when the representations do not degenerate, the gradient bottleneck effect is still limiting the amount of information that can be backpropagated to the model.

\vspace{-0.7em}
\paragraph{Logit Gradient \& Equilibrium Analysis}
\citet{mircea2024gradient} study the loss saturation of Pythia models through an analysis of the logits gradients and expose a form of tug-of-war effect between positive and negative components that harms learning. Although we do not decompose the logits gradient similarly, we hypothesize that some connections should be made by considering the conservation of sign patterns under the low-rank gradient bottleneck. \citet{finlayson2024closing} expose how some patterns of next-token probability emerge through necessary token entanglement in the backpropagation process. While their framing of the softmax bottleneck issue is connected to ours, they only consider implications for expressivity at convergence, and they do not analyze the logits gradients dimensionality in depth.

\vspace{-0.7em}
\paragraph{Toward Mitigations} We conducted several preliminary experiments to mitigate the gradient bottleneck without redesigning the LM head, including regularizing $W_\theta$ toward orthogonality, adding an auxiliary loss penalizing misalignment between the logit update and the full logit gradient, and using feedback alignment~\citep{lillicrap2016random}. All approaches led to slower convergence. This underscores that the gradient bottleneck is not easily patched and warrants dedicated architectural innovation rather than post-hoc corrections.

\vspace{-0.7em}
\paragraph{Discussion \& Implications} Our findings show that neural LMs suffer from severe gradient compression at their output layer, which represents an inherent limitation for the training efficiency of LLMs. We hypothesize that taking hidden dimensions into account when computing scaling laws~\citep{chinchilla_scaling} could help refine their extrapolation quality. A more optimistic view of our work could also indicate that LMs have stronger potential than currently believed, and that convergence speed and/or performance gains could be obtained by better channeling the supervision signal through a better-suited logits prediction module. Finally, while our work focuses on characterizing the problem, it opens promising directions for innovations that better preserve gradient flow, whether through pre-conditioning, optimization techniques, or well-suited softmax alternatives. These avenues may offer substantial gains in training efficiency.

\section{Conclusion}\label{sec:conclusion}

We have demonstrated that the softmax bottleneck in neural language models is not merely an expressivity limitation but a fundamental optimization bottleneck. Our theory-grounded empirical analysis shows that 95--99\% of the supervision signal is lost during backpropagation through the output layer, in a transfer from informative components to the tail as random noise.
Through controlled experiments, we showed that this gradient compression can make even trivial patterns difficult to learn as vocabulary size grows, and significantly slows convergence in realistic 2B parameter pretraining runs. These findings suggest that current LMs train less efficiently than they could.
We hope this work inspires renewed attention to this overlooked but critical component of language model architecture.




\bibliography{biblio}

@article{lillicrap2016random,
  author    = {Lillicrap, Timothy P. and Cownden, Daniel and Tweed, Douglas B. and Akerman, Colin J.},
  title     = {Random synaptic feedback weights support error backpropagation for deep learning},
  journal   = {Nature Communications},
  volume    = {7},
  number    = {1},
  pages     = {13276},
  year      = {2016},
  doi       = {10.1038/ncomms13276},
  url       = {https://doi.org/10.1038/ncomms13276}
}

@inproceedings{
basri2026the,
title={The Softmax Bottleneck Does Not Limit the Probabilities of the Most Likely Tokens},
author={Ronen Basri and David Jacobs},
booktitle={The Fourteenth International Conference on Learning Representations},
year={2026},
url={https://openreview.net/forum?id=DgJqQk6y19}
}

@inproceedings{
yang2025gated,
title={Gated Delta Networks: Improving Mamba2 with Delta Rule},
author={Songlin Yang and Jan Kautz and Ali Hatamizadeh},
booktitle={The Thirteenth International Conference on Learning Representations},
year={2025},
url={https://openreview.net/forum?id=r8H7xhYPwz}
}

@inproceedings{
ye2025differential,
title={Differential Transformer},
author={Tianzhu Ye and Li Dong and Yuqing Xia and Yutao Sun and Yi Zhu and Gao Huang and Furu Wei},
booktitle={The Thirteenth International Conference on Learning Representations},
year={2025},
url={https://openreview.net/forum?id=OvoCm1gGhN}
}

@inproceedings{
gu2024mamba,
title={Mamba: Linear-Time Sequence Modeling with Selective State Spaces},
author={Albert Gu and Tri Dao},
booktitle={First Conference on Language Modeling},
year={2024},
url={https://openreview.net/forum?id=tEYskw1VY2}
}

@misc{olmo20252olmo2furious,
      title={2 OLMo 2 Furious}, 
      author={Team OLMo and Pete Walsh and Luca Soldaini and Dirk Groeneveld and Kyle Lo and Shane Arora and Akshita Bhagia and Yuling Gu and Shengyi Huang and Matt Jordan and Nathan Lambert and Dustin Schwenk and Oyvind Tafjord and Taira Anderson and David Atkinson and Faeze Brahman and Christopher Clark and Pradeep Dasigi and Nouha Dziri and Allyson Ettinger and Michal Guerquin and David Heineman and Hamish Ivison and Pang Wei Koh and Jiacheng Liu and Saumya Malik and William Merrill and Lester James V. Miranda and Jacob Morrison and Tyler Murray and Crystal Nam and Jake Poznanski and Valentina Pyatkin and Aman Rangapur and Michael Schmitz and Sam Skjonsberg and David Wadden and Christopher Wilhelm and Michael Wilson and Luke Zettlemoyer and Ali Farhadi and Noah A. Smith and Hannaneh Hajishirzi},
      year={2025},
      eprint={2501.00656},
      archivePrefix={arXiv},
      primaryClass={cs.CL},
      url={https://arxiv.org/abs/2501.00656}, 
}

@inproceedings{
hu2024minicpm,
title={Mini{CPM}: Unveiling the Potential of Small Language Models with Scalable Training Strategies},
author={Shengding Hu and Yuge Tu and Xu Han and Ganqu Cui and Chaoqun He and Weilin Zhao and Xiang Long and Zhi Zheng and Yewei Fang and Yuxiang Huang and Xinrong Zhang and Zhen Leng Thai and Chongyi Wang and Yuan Yao and Chenyang Zhao and Jie Zhou and Jie Cai and Zhongwu Zhai and Ning Ding and Chao Jia and Guoyang Zeng and dahai li and Zhiyuan Liu and Maosong Sun},
booktitle={First Conference on Language Modeling},
year={2024},
url={https://openreview.net/forum?id=3X2L2TFr0f}
}

@inproceedings{mixtape,
  author    = {Yang, Zhilin and Luong, Thang and Salakhutdinov, Russ R and Le, Quoc V},
  booktitle = {Advances in Neural Information Processing Systems},
  editor    = {H. Wallach and H. Larochelle and A. Beygelzimer and F. d\textquotesingle Alch\'{e}-Buc and E. Fox and R. Garnett},
  pages     = {},
  publisher = {Curran Associates, Inc.},
  title     = {Mixtape: Breaking the Softmax Bottleneck Efficiently},
  url       = {https://proceedings.neurips.cc/paper_files/paper/2019/file/512fc3c5227f637e41437c999a2d3169-Paper.pdf},
  volume    = {32},
  year      = {2019}
}

@inproceedings{petty-etal-2024-impact,
  title     = {The Impact of Depth on Compositional Generalization in Transformer Language Models},
  author    = {Petty, Jackson  and
               Steenkiste, Sjoerd  and
               Dasgupta, Ishita  and
               Sha, Fei  and
               Garrette, Dan  and
               Linzen, Tal},
  editor    = {Duh, Kevin  and
               Gomez, Helena  and
               Bethard, Steven},
  booktitle = {Proceedings of the 2024 Conference of the North American Chapter of the Association for Computational Linguistics: Human Language Technologies (Volume 1: Long Papers)},
  month     = jun,
  year      = {2024},
  address   = {Mexico City, Mexico},
  publisher = {Association for Computational Linguistics},
  url       = {https://aclanthology.org/2024.naacl-long.402/},
  doi       = {10.18653/v1/2024.naacl-long.402},
  pages     = {7239--7252}
}

@misc{allal2025smollm2smolgoesbig,
  title         = {SmolLM2: When Smol Goes Big -- Data-Centric Training of a Small Language Model},
  author        = {Loubna Ben Allal and Anton Lozhkov and Elie Bakouch and Gabriel Martín Blázquez and Guilherme Penedo and Lewis Tunstall and Andrés Marafioti and Hynek Kydlíček and Agustín Piqueres Lajarín and Vaibhav Srivastav and Joshua Lochner and Caleb Fahlgren and Xuan-Son Nguyen and Clémentine Fourrier and Ben Burtenshaw and Hugo Larcher and Haojun Zhao and Cyril Zakka and Mathieu Morlon and Colin Raffel and Leandro von Werra and Thomas Wolf},
  year          = {2025},
  eprint        = {2502.02737},
  archiveprefix = {arXiv},
  primaryclass  = {cs.CL},
  url           = {https://arxiv.org/abs/2502.02737}
}

@inproceedings{fineweb,
  title     = {The FineWeb Datasets: Decanting the Web for the Finest Text Data at Scale},
  author    = {Guilherme Penedo and Hynek Kydl{\'\i}{\v{c}}ek and Loubna Ben allal and Anton Lozhkov and Margaret Mitchell and Colin Raffel and Leandro Von Werra and Thomas Wolf},
  booktitle = {The Thirty-eight Conference on Neural Information Processing Systems Datasets and Benchmarks Track},
  year      = {2024},
  url       = {https://openreview.net/forum?id=n6SCkn2QaG}
}

@article{llama3modelcard,
  title  = {Llama 3 Model Card},
  author = {AI@Meta},
  year   = {2024},
  url    = {https://github.com/meta-llama/llama3/blob/main/MODEL_CARD.md}
}

@misc{qwen3technicalreport,
  title         = {Qwen3 Technical Report},
  author        = {Qwen Team},
  year          = {2025},
  eprint        = {2505.09388},
  archiveprefix = {arXiv},
  primaryclass  = {cs.CL},
  url           = {https://arxiv.org/abs/2505.09388}
}

@inproceedings{
Yun2020Are,
title={Are Transformers universal approximators of sequence-to-sequence functions?},
author={Chulhee Yun and Srinadh Bhojanapalli and Ankit Singh Rawat and Sashank Reddi and Sanjiv Kumar},
booktitle={International Conference on Learning Representations},
year={2020},
url={https://openreview.net/forum?id=ByxRM0Ntvr}
}

@inproceedings{godey2024why,
  title     = {Why do small language models underperform? Studying Language Model Saturation via the Softmax Bottleneck},
  author    = {Nathan Godey and {\'E}ric Villemonte de la Clergerie and Beno{\^\i}t Sagot},
  booktitle = {First Conference on Language Modeling},
  year      = {2024},
  url       = {https://openreview.net/forum?id=MoitXWlXcS}
}

@inproceedings{finlayson2024closing,
  title     = {Closing the Curious Case of Neural Text Degeneration},
  author    = {Matthew Finlayson and John Hewitt and Alexander Koller and Swabha Swayamdipta and Ashish Sabharwal},
  booktitle = {The Twelfth International Conference on Learning Representations},
  year      = {2024},
  url       = {https://openreview.net/forum?id=dONpC9GL1o}
}

@inproceedings{mircea2024gradient,
  title     = {Gradient Dissent in Language Model Training and Saturation},
  author    = {Andrei Mircea and Ekaterina Lobacheva and Irina Rish},
  booktitle = {High-dimensional Learning Dynamics 2024: The Emergence of Structure and Reasoning},
  year      = {2024},
  url       = {https://openreview.net/forum?id=tJj3psv9nm}
}

@inproceedings{grivas-etal-2022-low,
  title     = {Low-Rank Softmax Can Have Unargmaxable Classes in Theory but Rarely in Practice},
  author    = {Grivas, Andreas  and
               Bogoychev, Nikolay  and
               Lopez, Adam},
  editor    = {Muresan, Smaranda  and
               Nakov, Preslav  and
               Villavicencio, Aline},
  booktitle = {Proceedings of the 60th Annual Meeting of the Association for Computational Linguistics (Volume 1: Long Papers)},
  month     = may,
  year      = {2022},
  address   = {Dublin, Ireland},
  publisher = {Association for Computational Linguistics},
  url       = {https://aclanthology.org/2022.acl-long.465/},
  doi       = {10.18653/v1/2022.acl-long.465},
  pages     = {6738--6758}
}

@inproceedings{pmlr-v97-ganea19a,
  title     = {Breaking the Softmax Bottleneck via Learnable Monotonic Pointwise Non-linearities},
  author    = {Ganea, Octavian and Gelly, Sylvain and Becigneul, Gary and Severyn, Aliaksei},
  booktitle = {Proceedings of the 36th International Conference on Machine Learning},
  pages     = {2073--2082},
  year      = {2019},
  editor    = {Chaudhuri, Kamalika and Salakhutdinov, Ruslan},
  volume    = {97},
  series    = {Proceedings of Machine Learning Research},
  month     = {09--15 Jun},
  publisher = {PMLR},
  url       = {https://proceedings.mlr.press/v97/ganea19a.html}
}

@inproceedings{gao2018representation,
  title     = {Representation Degeneration Problem in Training Natural Language Generation Models},
  author    = {Jun Gao and Di He and Xu Tan and Tao Qin and Liwei Wang and Tieyan Liu},
  booktitle = {International Conference on Learning Representations},
  year      = {2019},
  url       = {https://openreview.net/forum?id=SkEYojRqtm}
}

@article{radford2019language,
  title  = {Language Models are Unsupervised Multitask Learners},
  author = {Radford, Alec and Wu, Jeff and Child, Rewon and Luan, David and Amodei, Dario and Sutskever, Ilya},
  year   = {2019}
}

@inproceedings{biderman2023pythia,
  title        = {Pythia: A suite for analyzing large language models across training and scaling},
  author       = {Biderman, Stella and Schoelkopf, Hailey and Anthony, Quentin Gregory and Bradley, Herbie and O’Brien, Kyle and Hallahan, Eric and Khan, Mohammad Aflah and Purohit, Shivanshu and Prashanth, USVSN Sai and Raff, Edward and others},
  booktitle    = {International Conference on Machine Learning},
  pages        = {2397--2430},
  year         = {2023},
  organization = {PMLR}
}

@misc{chinchilla_scaling,
  title         = {Training Compute-Optimal Large Language Models},
  author        = {Jordan Hoffmann and Sebastian Borgeaud and Arthur Mensch and Elena Buchatskaya and Trevor Cai and Eliza Rutherford and Diego de Las Casas and Lisa Anne Hendricks and Johannes Welbl and Aidan Clark and Tom Hennigan and Eric Noland and Katie Millican and George van den Driessche and Bogdan Damoc and Aurelia Guy and Simon Osindero and Karen Simonyan and Erich Elsen and Jack W. Rae and Oriol Vinyals and Laurent Sifre},
  year          = {2022},
  eprint        = {2203.15556},
  archiveprefix = {arXiv},
  primaryclass  = {cs.CL}
}

@inproceedings{chang-mccallum-2022-softmax,
  title     = {Softmax Bottleneck Makes Language Models Unable to Represent Multi-mode Word Distributions},
  author    = {Chang, Haw-Shiuan  and
               McCallum, Andrew},
  editor    = {Muresan, Smaranda  and
               Nakov, Preslav  and
               Villavicencio, Aline},
  booktitle = {Proceedings of the 60th Annual Meeting of the Association for Computational Linguistics (Volume 1: Long Papers)},
  month     = may,
  year      = {2022},
  address   = {Dublin, Ireland},
  publisher = {Association for Computational Linguistics},
  url       = {https://aclanthology.org/2022.acl-long.554},
  doi       = {10.18653/v1/2022.acl-long.554},
  pages     = {8048--8073}
}

@inproceedings{sigsoftmax,
  author    = {Kanai, Sekitoshi and Fujiwara, Yasuhiro and Yamanaka, Yuki and Adachi, Shuichi},
  booktitle = {Advances in Neural Information Processing Systems},
  editor    = {S. Bengio and H. Wallach and H. Larochelle and K. Grauman and N. Cesa-Bianchi and R. Garnett},
  pages     = {},
  publisher = {Curran Associates, Inc.},
  title     = {Sigsoftmax: Reanalysis of the Softmax Bottleneck},
  url       = {https://proceedings.neurips.cc/paper_files/paper/2018/file/9dcb88e0137649590b755372b040afad-Paper.pdf},
  volume    = {31},
  year      = {2018}
}

@inproceedings{softmax_bottleneck,
  title     = {Breaking the Softmax Bottleneck: A High-Rank {RNN} Language Model},
  author    = {Zhilin Yang and Zihang Dai and Ruslan Salakhutdinov and William W. Cohen},
  booktitle = {International Conference on Learning Representations},
  year      = {2018},
  url       = {https://openreview.net/forum?id=HkwZSG-CZ}
}

@article{kaplan_scaling,
  title   = {Scaling Laws for Neural Language Models},
  author  = {Jared Kaplan and Sam McCandlish and Tom Henighan and Tom B. Brown and Benjamin Chess and Rewon Child and Scott Gray and Alec Radford and Jeff Wu and Dario Amodei},
  journal = {ArXiv},
  year    = {2020},
  volume  = {abs/2001.08361},
  url     = {https://api.semanticscholar.org/CorpusID:210861095}
}

@inproceedings{lambada,
  author    = {Paperno, Denis  and  Kruszewski, Germ\'{a}n  and  Lazaridou,
               Angeliki  and  Pham, Ngoc Quan  and  Bernardi, Raffaella  and  Pezzelle,
               Sandro  and  Baroni, Marco  and  Boleda, Gemma  and  Fernandez, Raquel},
  title     = {The {LAMBADA} dataset: Word prediction requiring a broad
               discourse context},
  booktitle = {Proceedings of the 54th Annual Meeting of the Association for
               Computational Linguistics (Volume 1: Long Papers)},
  month     = {August},
  year      = {2016},
  address   = {Berlin, Germany},
  publisher = {Association for Computational Linguistics},
  pages     = {1525--1534},
  url       = {http://www.aclweb.org/anthology/P16-1144}
}

@misc{gao2020pile,
  title         = {The Pile: An 800GB Dataset of Diverse Text for Language Modeling},
  author        = {Leo Gao and Stella Biderman and Sid Black and Laurence Golding and Travis Hoppe and Charles Foster and Jason Phang and Horace He and Anish Thite and Noa Nabeshima and Shawn Presser and Connor Leahy},
  year          = {2020},
  eprint        = {2101.00027},
  archiveprefix = {arXiv},
  primaryclass  = {cs.CL}
}
\bibliographystyle{colm2026_conference}

\clearpage
\appendix
\crefalias{section}{appendix}
\crefalias{subsection}{appendix}

\section{Theoretical Analysis}\label{sec:theory}

\subsection{Setup: Notation and Loss Formulation}\label{sec:theory:setup}

We express the language modeling objective in a matrix form that will allow us to analyze both expressivity and gradient flow through the LM head.

We model language as a probability distribution $P$ over token sequences $(x_t)_{t=1}^L$, where tokens take values in a vocabulary of size $V$. We observe a dataset
\[
X = (x_{s,t})_{s=1,\dots,S;\,t=1,\dots,L_s}
\]
sampled from $P$, with $S$ the number of sampled sequences and $l_s$ the length of sequence $s \in [1, S]$. We denote $T=\sum_s l_s$ the total number of tokens. We decompose a language model as a backbone $\varphi_\theta$ that maps contexts to $\mathbb{R}^D$, and a linear layer $W_\theta\in\mathbb{R}^{V\times D}$. The standard maximum likelihood objective for a language model $(\varphi_\theta, W_\theta)$ can then be written as:
\begin{equation}\label{eq:nll}
\mathcal{L}(\theta, X)
=
-\frac{1}{T}
\sum_{s,t}
\big\langle \mathbf{1}_{x_{s,t}},
\log \sigma(W_\theta \varphi_\theta(\mathbf{x}_{s,<t}))
\big\rangle\;\;,
\end{equation}
where $\langle. ,.\rangle$ is the usual dot product, $\mathbf{1}_i$ is a one-hot vector in $\mathbb{R}^V$ and $\sigma$ denotes the softmax. The $\log$ function is taken element-wise.

Intuitively, in Transformers, $W_\theta\in\mathbb{R}^{V\times D}$ is the LM head projection and $\varphi_\theta$ is the rest of the network (i.e. below the LM head).

\paragraph{Fully Expressive Representations}
We assume that $\varphi_\theta$ is deterministic and sufficiently expressive so that each distinct context $c = \mathbf{x}_{s,<t}$ can be assigned any vector \mbox{$h_c = \varphi_\theta(c) \in\mathbb{R}^D$} without loss of generality. Under this assumption, optimization over $\theta$ is equivalent to optimization over parameters $\{h_c\}$ and $W_\theta$.

We rewrite the conventional log-likelihood loss in matrix notation, similar to~\citet{softmax_bottleneck}. 
Let $\mathcal{C}$ be the set of distinct contexts observed in $X$, with $|\mathcal{C}|=C$, and the next-token count vector for context $c\in\mathcal{C}$ be:
\begin{equation}
N_c
=
\sum_{s,t}
\delta_{c=\mathbf{x}_{s,<t}}\mathbf{1}_{x_{s,t}}
\in\mathbb{N}^V\;\;.
\end{equation}
where $\delta_{b} \in \{0,1\}$ is an indicator for boolean expression $b$.

Stacking these into a matrix $N\in\mathbb{R}^{C\times V}$ and letting \mbox{$H_\theta\in\mathbb{R}^{C\times D}$} denote the matrix of context representations (i.e. the representation before the LM head), the loss can be written compactly as:
\begin{equation}\label{eq:loss-matrix}
\mathcal{L}
=
-\frac{1}{T}
\langle
N,\log\sigma(H_\theta {W_\theta}^\top)
\rangle_F\;\;.
\end{equation}
where $\langle.,.\rangle_F$ is the Frobenius scalar product on matrices defined by:
\[
\langle A, B \rangle_F := \sum_{ij} A_{ij}B_{ij}
\]

\subsection{Expressivity and the Softmax Bottleneck}

The classical expressivity view of the softmax bottleneck is that the LM head imposes a rank constraint on the model's output log-probabilities, which can prevent the model from matching the true data distribution. We show, however, that this rank constraint does not affect top-1 predictions, motivating a closer look at the \emph{optimization} consequences in \Cref{sec:theory:gradientdescent}.

First, we revisit the characterization of the optimal achievable likelihood under no rank constraints.

Informally, we are trying to match $N$ in $\log$ space using the hidden states $H_\theta$ and the LM head $W_\theta$. However, element-wise $\log$ is not defined for null entries, which can be expected to be observed often in $N$. To allow for a direct use of $\log$ on $N$ while avoiding complex notations, we take advantage of the limit of $x \rightarrow x\log x$ in $0^+$ and we extend the Frobenius scalar product on matrices $\langle .,. \rangle_F$ with 
\[
\langle X, \log X \rangle_F := \sum_{ij} \mathbf{1}_{X_{ij}}X_{ij}\log X_{ij}
\]

\begin{proposition}[Gibbs inequality]\label{prop:gibbs}
Let $\tilde{N}$ denote the row-normalized version of $N$. Then
\begin{equation}
\mathcal{L}
\ge
-\frac{1}{T}
\langle N,\log\tilde{N}\rangle_F,
\end{equation}
with equality if and only if
$\sigma(H_\theta {W_\theta}^\top)=\tilde{N}$.
\end{proposition}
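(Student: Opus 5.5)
Since the loss \eqref{eq:loss-matrix} decomposes over rows (contexts), the plan is to prove the inequality context by context and then sum. Write $n_c=\sum_v N_{c,v}\ge 1$ for the number of occurrences of context $c$, so that $N_c = n_c \tilde N_c$. Write $P=\sigma(H_\theta W_\theta^\top)$, whose rows $P_c$ lie in the open simplex because softmax outputs are strictly positive. Then
\[
\mathcal{L} + \frac{1}{T}\langle N,\log\tilde N\rangle_F
= \frac{1}{T}\sum_{c\in\mathcal C} n_c \sum_{v:\,\tilde N_{c,v}>0} \tilde N_{c,v}\log\frac{\tilde N_{c,v}}{P_{c,v}}
= \frac{1}{T}\sum_{c} n_c\,\mathrm{KL}(\tilde N_c\,\|\,P_c).
\]
The convention $\langle X,\log X\rangle_F$ restricts the sum to positive entries, and on the $\mathcal L$ side the zero entries of $N$ contribute nothing. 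So the two inner sums range over the same index set $\{v:\tilde N_{c,v}>0\}$, and all logarithms are finite since $P_{c,v}>0$.

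Next I would show that each KL term is nonnegative by the standard Jensen argument. Let $S_c=\{v:\tilde N_{c,v}>0\}$. By concavity of $\log$,
\[
-\sum_{v\in S_c}\tilde N_{c,v}\log\frac{P_{c,v}}{\tilde N_{c,v}} \;\ge\; -\log\sum_{v\in S_c}P_{c,v} \;\ge\; -\log 1 = 0 .
\]
Equivalently, one can use $\log x\le x-1$ termwise, which handles the equality case cleanly. Because $n_c>0$ and $T>0$, summing over contexts gives the claimed bound.

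The equality case is the only step that needs care. Equality forces every KL term to vanish, since each weight $n_c$ is strictly positive. With $\log x\le x-1$, equality holds iff $P_{c,v}=\tilde N_{c,v}$ for all $v\in S_c$, and also $\sum_{v\in S_c}P_{c,v}=1$. The second condition makes $P_{c,v}=0$ off $S_c$, which together with the first gives $P_c=\tilde N_c$. Conversely, if $P=\tilde N$ the KL terms are zero. One subtlety should be flagged: softmax outputs are strictly positive, so when $\tilde N$ has zero entries equality is never attained, and the bound is only approached as an infimum. The "if and only if" statement remains formally correct as written, because the condition $\sigma(H_\theta W_\theta^\top)=\tilde N$ simply cannot hold in that case. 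I would note this explicitly, since it is consistent with the later top-1 expressivity discussion, where probabilities are matched only to arbitrary precision.
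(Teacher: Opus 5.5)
Your argument is correct and is the standard proof behind this result: the paper states the proposition without proof and simply invokes the classical Gibbs inequality. Your row-wise decomposition of the gap into count-weighted divergences $\frac{1}{T}\sum_c n_c\,\mathrm{KL}(\tilde N_c\,\|\,P_c)$, followed by $\log x\le x-1$ (which also settles the equality case), is exactly that inequality applied context by context. Your remark that equality cannot be attained when $\tilde N$ has zero entries, because softmax outputs are strictly positive, is accurate and matches the positivity assumption the paper adds in the subsequent corollary.
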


This proposition identifies the matrix of empirical conditional distributions $\tilde{N}$ as the unconstrained optimum. However, the equality condition is typically unattainable due to rank constraints on $H_\theta {W_\theta}^\top$.

\begin{proposition}[\citealt{softmax_bottleneck}]\label{prop:ranks}
We have
\[
\rank(H_\theta {W_\theta}^\top)\le D\;\;,
\quad
\rank(\log\sigma(H_\theta {W_\theta}^\top))\le D+1\;\;.
\]
\end{proposition}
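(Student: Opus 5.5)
We argue the two rank bounds one after the other, using only elementary linear algebra.

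\textbf{First bound.} Since $H_\theta \in \mathbb{R}^{C\times D}$ and ${W_\theta}^\top \in \mathbb{R}^{D\times V}$, every column of $H_\theta {W_\theta}^\top$ lies in the column space of $H_\theta$, which has dimension at most $D$. Equivalently, we can use subadditivity of rank under products: $\rank(AB)\le\min(\rank A,\rank B)\le D$. This gives $\rank(H_\theta {W_\theta}^\top)\le D$ directly.

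\textbf{Second bound.} We make the row-wise log-softmax explicit. For a logit row $l_c = H_{\theta,c}{W_\theta}^\top$, we have
\[
\log\sigma(l_c) = l_c - \operatorname{lse}(l_c)\,\mathbf{1}_V^\top ,
\]
where $\operatorname{lse}(l_c)=\log\sum_j e^{(l_c)_j}$ is a scalar. Stacking these rows gives the matrix identity
\[
\log\sigma(H_\theta {W_\theta}^\top) = H_\theta {W_\theta}^\top - z\,\mathbf{1}_V^\top ,
\]
where $z\in\mathbb{R}^C$ collects the log-partition values. The second term is an outer product, so it has rank at most $1$. Subadditivity of rank, $\rank(A+B)\le\rank A+\rank B$, combined with the first bound then gives $\rank(\log\sigma(H_\theta {W_\theta}^\top))\le D+1$.

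An alternative route writes the matrix as a product of a $C\times(D+1)$ factor $[H_\theta,\,-z]$ and a $(D+1)\times V$ factor $[{W_\theta}^\top;\,\mathbf{1}_V^\top]$. This makes the bound immediate from the same product argument as in the first step.

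There is no genuine obstacle here. The only step needing care is verifying that the log-softmax normalization acts row-wise as subtraction of a per-row constant. That constant is exactly what produces the rank-one correction $z\,\mathbf{1}_V^\top$.
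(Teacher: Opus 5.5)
Your proof is correct and is essentially the paper's argument: the paper also writes each row of $\log\sigma(A)$ as $A_i - z_i\mathbf{1}$, so the row space lies in the span of the rows of $A$ together with the all-ones vector; this is your rank-one correction $z\,\mathbf{1}_V^\top$, phrased in terms of row spaces. (One small wording slip: $\rank(AB)\le\min(\rank A,\rank B)$ is not ``subadditivity''; that term belongs to $\rank(A+B)\le\rank A+\rank B$, which you use correctly later.)
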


\begin{proof}
The first inequality is immediate. For the second, observe that for any matrix $A$, each row of $\log\sigma(A)$ equals $A_i - z_i\mathbf{1}$, where $z_i$ is the $\log$ of the softmax denominator. This implies that the row space is spanned by that of $A$ together with the all-ones vector, and thus has a rank bounded by $\rank(A) + 1$.
\end{proof}

\begin{corollary}[\citealt{softmax_bottleneck}]
We assume $\forall i,j, \tilde{N}_{ij} > 0$. The empirical minimum for \Cref{eq:loss-matrix} is attainable if and only if
\[
\rank(\log\tilde{N})\le D+1\;\;.
\]
\end{corollary}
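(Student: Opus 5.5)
The plan is to prove the two directions separately. Necessity will follow directly from \Cref{prop:gibbs,prop:ranks}. Sufficiency uses the shift invariance of the softmax, which lets me move $\log\tilde{N}$ by a rank-one term $z\mathbf{1}^\top$ before factorizing it through dimension $D$. Throughout I write $M := \log\tilde{N}$. It is well defined because $\tilde{N}_{ij}>0$, and each of its rows satisfies $\sum_j e^{M_{ij}} = 1$.

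\emph{Necessity.} Suppose the empirical minimum is attained. By the equality case of \Cref{prop:gibbs}, $\sigma(H_\theta W_\theta^\top) = \tilde{N}$, so $\log\sigma(H_\theta W_\theta^\top) = M$. \Cref{prop:ranks} then gives $\rank(M)\le D+1$ immediately.

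\emph{Sufficiency: reduction.} The key identity is $\sigma(A + z\mathbf{1}^\top) = \sigma(A)$ for every $z\in\mathbb{R}^C$, together with $\sigma(M) = \tilde{N}$, which holds because rows of $\tilde{N}$ sum to one. It therefore suffices to find $z$ with $\rank(M + z\mathbf{1}^\top) \le D$. Given such a $z$, factor $M + z\mathbf{1}^\top = H W^\top$ with $H\in\mathbb{R}^{C\times D}$ and $W\in\mathbb{R}^{V\times D}$, for instance via a truncated SVD, padding with zero columns if the rank is below $D$. Under the fully-expressive assumption these are admissible $H_\theta, W_\theta$. Then $\sigma(H_\theta W_\theta^\top) = \tilde{N}$, and \Cref{prop:gibbs} shows the bound is achieved. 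If $\rank(M)\le D$, take $z=0$ and we are done.

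\emph{Sufficiency: the case $\rank(M)=D+1$.} This is the main obstacle. I will take $z = -Mu$ for a vector $u$ with $\mathbf{1}^\top u = 1$, so that $M + z\mathbf{1}^\top = M(I - u\mathbf{1}^\top)$. The matrix $I - u\mathbf{1}^\top$ is a projection with image $\mathbf{1}^\perp$. By the formula $\rank(AB) = \rank(B) - \dim(\ker A \cap \operatorname{im} B)$, the rank is
\[
(V-1) - \dim\big(\ker M \cap \mathbf{1}^\perp\big).
\]
This equals $\rank(M)-1 = D$ exactly when $\ker M \subseteq \mathbf{1}^\perp$, that is, when $\mathbf{1}$ lies in the row space of $M$. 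Any admissible shift has this form up to terms that do not lower the rank further, so this condition is also what the argument needs in general.

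The remaining task is therefore to justify that $\mathbf{1}\in\operatorname{row}(M)$ whenever $\rank(M) = D+1$. I would try to extract this from the normalization constraint $\sum_j e^{M_{ij}}=1$ on each row. I expect this step to be the delicate point. The constraint is nonlinear, so it does not by itself force $\mathbf{1}$ into the row space, and for a generic rank-$(D+1)$ positive stochastic $\tilde{N}$ the statement may genuinely require this extra condition.

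If the attempt fails, the fallback is to state the characterization that the argument actually establishes: $\rank(M)\le D$, or $\rank(M)=D+1$ with $\mathbf{1}\in\operatorname{row}(M)$. Under this reading the necessity direction also sharpens, since $M = HW^\top - z\mathbf{1}^\top$ always has $\mathbf{1}$ in its row space once its rank reaches $D+1$.
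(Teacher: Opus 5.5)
Your necessity argument is correct and matches the paper's route. The paper gives no separate proof; it places the corollary as a consequence of the equality case of \Cref{prop:gibbs} together with \Cref{prop:ranks}, which is exactly your first paragraph. The cited softmax-bottleneck result is likewise only this direction. Your sufficiency reduction, the case $\rank(\log\tilde N)\le D$, and the computation that the shift $z=-Mu$ (with $\mathbf 1^\top u=1$) lowers the rank exactly when $\mathbf 1\in\operatorname{row}(M)$ are all correct.

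The step you flag, however, cannot be completed by any argument, because the ``if'' direction of the statement is false. The normalization $\sum_j e^{M_{ij}}=1$ does not force $\mathbf 1$ into the row space. Take $V=3$, $C=2$, $D=1$ and
\[
\tilde N=\begin{pmatrix}1/2&1/4&1/4\\1/4&1/2&1/4\end{pmatrix},\qquad \log\tilde N=-\log 2\begin{pmatrix}1&2&2\\2&1&2\end{pmatrix},
\]
which arises, for instance, from counts $N=\begin{pmatrix}2&1&1\\1&2&1\end{pmatrix}$. Then $\rank(\log\tilde N)=2=D+1$. But $\mathbf 1\notin\operatorname{row}(\log\tilde N)$: solving $a(1,2,2)+b(2,1,2)=\mathbf 1$ forces $a=b=1/3$ from the first two coordinates, and the third coordinate then fails.

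You can also check this directly. With $D=1$ the logits are $h_iw_j$.
\begin{itemize}
\item Row~1 requires $h_1(w_1-w_2)=\log 2$ and $h_1(w_2-w_3)=0$, so $h_1\neq 0$ and $w_2=w_3\neq w_1$.
\item Row~2 requires $h_2(w_2-w_1)=\log 2$ and $h_2(w_1-w_3)=0$, so $h_2\neq 0$ and $w_1=w_3$.
\end{itemize}
Together these give $w_1=w_2$, a contradiction. By the equality case of \Cref{prop:gibbs}, the empirical minimum is not attained, so the gap lies in the statement, not in your reasoning.

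You should therefore commit to your fallback rather than try to extract $\mathbf 1\in\operatorname{row}(M)$ from the softmax normalization. The correct statement is that the minimum is attainable if and only if $\min_{z\in\mathbb R^C}\rank(\log\tilde N+z\mathbf 1^\top)\le D$. Equivalently: either $\rank(\log\tilde N)\le D$, or $\rank(\log\tilde N)=D+1$ and $\mathbf 1\in\operatorname{row}(\log\tilde N)$.

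Your argument proves both directions once you replace the loose sentence ``any admissible shift has this form up to terms that do not lower the rank further'' with a clean one. Suppose $\mathbf 1\notin\operatorname{row}(M)$. The quotient map $\mathbb R^V\to\mathbb R^V/\operatorname{span}(\mathbf 1)$ is then injective on $\operatorname{row}(M)$, and it sends each row $m_i+z_i\mathbf 1$ of $M+z\mathbf 1^\top$ to the image of $m_i$. Hence $\rank(M+z\mathbf 1^\top)\ge\rank(M)$ for every $z$.

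Your sharpened necessity remark is also right. The space $\operatorname{row}(HW^\top)+\operatorname{span}(\mathbf 1)$ contains $\operatorname{row}(M)$ and has dimension at most $D+1$, so when $\rank(M)=D+1$ the two coincide and $\mathbf 1$ lies in $\operatorname{row}(M)$.

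The discrepancy only arises when $\rank(\log\tilde N)=D+1<V$. If $\log\tilde N$ has rank $V$, its row space is all of $\mathbb R^V$ and the original condition is correct. The ``only if'' direction, which is all the softmax-bottleneck argument uses, holds as stated.
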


This result formalizes the softmax bottleneck: even with perfect context representations, the output layer imposes a strict low-rank constraint on realizable log-probabilities. In other words, when $D \ll V$, the model simply cannot assign arbitrary probability distributions over the vocabulary, regardless of how the backbone is trained.

However, this expressivity limitation does not prevent the model from correctly identifying the \emph{most likely} next token and its probability up to arbitrary precision when $D \ge 2$:

\begin{proposition}[Proof in \Cref{proof:top1}]\label{prop:top1}
When $D \ge 2$, for all $\varepsilon > 0$ and $i \in [1, C]$, there exists $H \in \mathbb{R}^{C \times D}$ and $W \in \mathbb{R}^{V \times D}$ so that: 
\begin{equation}\label{eq:top1ineq}
\left| \sigma(H_iW^\top)_{i,{\argmax(\tilde{N}_i)}} - \tilde{N}_{i,\argmax(\tilde{N}_i)} \right| < \varepsilon\;\;.
\end{equation}
\end{proposition}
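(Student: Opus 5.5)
The plan is to prove a stronger statement: a single LM head $W$ works for every context at once, and equality holds exactly rather than only up to $\varepsilon$. The key idea is that when $D \ge 2$, all $V$ token embeddings can be placed in convex position, so that every token can be made the strict argmax. Its probability can then be tuned continuously by scaling the hidden state.

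\emph{Step 1 (construction of $W$).} Since $D \ge 2$, choose $V$ distinct angles $\alpha_1,\dots,\alpha_V \in [0,2\pi)$. Let the $k$-th row of $W$ be $u_k = (\cos\alpha_k, \sin\alpha_k, 0, \dots, 0) \in \mathbb{R}^D$. These are distinct unit vectors, so for $k \neq j$ the Cauchy--Schwarz equality case gives $\langle u_j, u_k\rangle < 1 = \langle u_j, u_j\rangle$. Let $\delta_j = 1 - \max_{k\neq j}\langle u_j,u_k\rangle$; by finiteness $\delta_j > 0$.

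\emph{Step 2 (one-parameter family for a given context).} Fix a context $i$. Let $j^\star$ be any index attaining $\argmax(\tilde{N}_i)$ (ties are broken arbitrarily) and set $p = \tilde{N}_{i,j^\star}$. Because $\tilde{N}_i$ is a probability vector over $V$ entries, its maximum satisfies $p \ge 1/V$, and of course $p \le 1$. Take $H_i = \lambda u_{j^\star}$ with $\lambda \ge 0$, and define
\[
g(\lambda) = \sigma(\lambda\, u_{j^\star}^\top W^\top)_{j^\star} = \Bigl(1 + \sum_{k\neq j^\star} e^{-\lambda(1-\langle u_{j^\star},u_k\rangle)}\Bigr)^{-1}.
\]
This function is continuous, with $g(0) = 1/V$. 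Moreover $g(\lambda) \ge (1+(V-1)e^{-\lambda\delta_{j^\star}})^{-1} \to 1$ as $\lambda \to \infty$.

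\emph{Step 3 (conclusion).} If $p < 1$, the intermediate value theorem yields $\lambda_i \ge 0$ with $g(\lambda_i) = p$ exactly, which gives \Cref{eq:top1ineq} for every $\varepsilon > 0$. If $p = 1$, choose $\lambda_i$ large enough that $1 - g(\lambda_i) \le (V-1)e^{-\lambda_i \delta_{j^\star}} < \varepsilon$. The same $W$ serves all contexts, so stacking the rows $H_i = \lambda_i u_{j^\star(i)}$ gives a single $H$ valid for all $i$ simultaneously. This is stronger than the statement as written, which only requires the inequality for one $i$.

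\emph{Main obstacle.} The only real subtlety is the case $p = 1$, which is reachable only in the limit; this is exactly why the statement is phrased with $\varepsilon$. Handling it requires the strict gap $\delta_{j^\star} > 0$, which holds because the embeddings are distinct points on a circle. This is the step where $D \ge 2$ is essential: for $D = 1$, only the two extreme tokens could ever become the argmax.
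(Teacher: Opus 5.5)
Your proof is correct and follows the same route as the paper: the rows of $W$ are distinct points on the unit circle (zero-padded when $D>2$), $H_i$ is a nonnegative multiple of the target token's row, and continuity carries $g$ from $g(0)=1/V$ toward the limit $1$. Your version also makes explicit the strict gap $\delta_{j^\star}>0$ behind the limit the paper calls clear, and it notes that exact equality holds when $p<1$ and that a single $(H,W)$ serves all contexts at once.
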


Therefore, even though the low-rank constraint limits the model expressivity, the set of reachable top-1 probabilities is not constrained, and the overall behavior of the language model is theoretically unaffected for greedy decoding. This means that the softmax bottleneck cannot be fully characterized as an expressivity issue alone, and motivates us to examine it through the lens of optimization dynamics.

\subsection{The Gradient Descent Viewpoint}\label{sec:theory:gradientdescent}

Our analysis shows how the softmax bottleneck affects \emph{training}, not just expressivity. The key insight is that backpropagating through the LM head compresses the gradient from $V$ dimensions down to at most $D$ dimensions, discarding the majority of the training signal. We formalize this by tracking how the logits evolve under gradient descent, and showing that the resulting update direction is fundamentally misaligned with the logit gradient whenever $D \ll V$.

Recall that the logits (i.e. the activations after the LM head projection, but before the softmax) are given by
\[
L_\theta^\tau = H_\theta^\tau (W_\theta^\tau)^\top\;\;,
\]
where $\tau$ denotes the gradient descent iteration. Understanding how the logits \mbox{$L_\theta^\tau \in \mathbb{R}^{C \times V}$} evolve is crucial, because the loss depends on $\theta$ only through the induced probability matrix \mbox{$P_\theta^\tau = \sigma(L_\theta^\tau) \in \mathbb{R}^{C \times V}$}.

\paragraph{Logit Dynamics Under Gradient Descent}
Let $P_\theta^\tau = \sigma(L_\theta^\tau)$, and define the empirical context frequency
\[
f_i := \frac{\sum_j N_{ij}}{T}\;\;.
\]
The LM head update with learning rate $\eta$ is given by
\[
W_\theta^{\tau+1}
= W_\theta^\tau
- \eta \cdot \,(P_\theta^\tau - \tilde{N})^\top \operatorname{diag}(f) H_\theta^\tau\;\;.
\]

\paragraph{Effective update direction.}
For convenience, we define the rescaled logit update
\[
\Delta_\theta^\tau = \frac{L_\theta^{\tau+1} - L_\theta^\tau}{\eta}\;\;.
\]
By construction, $\Delta_\theta^\tau$ is a sum of two matrices whose ranks are upper-bounded by $D$, which implies that
\begin{equation}\label{eq:delta}
\rank(\Delta_\theta^\tau) \leq 2D\;\;.
\end{equation}
As a consequence, however large $V$ may be, the actual global logit update is at most rank $2D$. By contrast, if the logits were optimized directly, i.e. if we performed gradient descent on the logit matrix taken as a parameter, the first-order optimal update direction in the limit $\eta \to 0$ would be given by the logit gradient
\begin{equation}\label{eq:logit_grad}
\nabla_L \mathcal{L}(\theta^\tau, X) = \operatorname{diag}(f) (P_\theta^\tau - \tilde{N})\;\;.
\end{equation}
The central question is therefore: \emph{under which conditions can the actual update direction $\Delta_\theta^\tau$ induced by gradient descent (\Cref{eq:delta}) match the logit-optimal direction (\Cref{eq:logit_grad})?} Answering this question requires analyzing the structure and rank properties of $(P_\theta^\tau - \tilde{N})$.

\paragraph{Rank of the Prediction Error}
We begin by identifying contexts for which the empirical next-token distribution is degenerate.

\begin{proposition}[Proof in \Cref{proof:rankPN}]\label{prop:rankPN}
Consider the set of contexts
\[
\mathcal{C}^u
= \{\mathbf{x}_{s,<t} \mid \tilde{N}_{x_{s,t}} = \mathbf{1}_{x_{s,t}}\}
\]
for which the empirical next-token probability is concentrated on a single token. Let
\[
\mathcal{C}^{uu}
= \{ x_{s,t} \mid \mathbf{x}_{s,<t} \in \mathcal{C}^u \}
\]
denote the set of corresponding unique next tokens. For any \mbox{$P \in (0,1)^{C \times V}$}, we have
\[
\rank(P - \tilde{N}) \geq \min(|\mathcal{C}^{uu}|, V-1) := \mathcal{D}^u(N)\;\;.
\]
\end{proposition}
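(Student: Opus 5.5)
We plan to exhibit a square submatrix of $P - \tilde{N}$ of size $\mathcal{D}^u(N)$ and show that it is nonsingular, using strict diagonal dominance and Gershgorin's circle theorem. Throughout we use that $P$ is row-stochastic, as every $P_\theta = \sigma(L_\theta)$ is, so each row of $P$ sums to $1$. We also use that all entries of $P$ lie in $(0,1)$. Let $k = |\mathcal{C}^{uu}|$. For each token $j \in \mathcal{C}^{uu}$, fix one context $c_j \in \mathcal{C}^u$ whose next token is $j$, so that $\tilde{N}_{c_j} = \mathbf{1}_j$. The contexts $c_j$ are pairwise distinct, since a context in $\mathcal{C}^u$ has a single next token. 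The row of $P - \tilde{N}$ indexed by $c_j$ is therefore
\[
(P - \tilde{N})_{c_j, l} = P_{c_j,l} - \delta_{l=j}.
\]
Its $j$-th entry is $P_{c_j,j} - 1 < 0$, and every other entry is $P_{c_j,l} > 0$.

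Next we choose a column set $S \subseteq \mathcal{C}^{uu}$. If $k \le V-1$, take $S = \mathcal{C}^{uu}$. If $k = V$, so that $\mathcal{C}^{uu}$ is the whole vocabulary, pick any token $m$ and take $S = \mathcal{C}^{uu} \setminus \{m\}$. In both cases $|S| = \min(k, V-1) = \mathcal{D}^u(N)$, and some token $m \notin S$ always exists. Let $M$ be the $|S| \times |S|$ submatrix of $P - \tilde{N}$ with rows $\{c_j\}_{j \in S}$ and columns $S$, ordered so that the row $c_j$ is paired with the column $j$.

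The key step is to show that $M$ is strictly diagonally dominant. Row-stochasticity gives, for each $j \in S$,
\[
|M_{jj}| = 1 - P_{c_j,j} = \sum_{l \ne j} P_{c_j,l}.
\]
The off-diagonal absolute row sum of $M$ is only $\sum_{l \in S,\, l \ne j} P_{c_j,l}$. This is strictly smaller, because the full sum also contains the term $P_{c_j,m} > 0$ for the token $m \notin S$. By Gershgorin's circle theorem, $0$ is not an eigenvalue of $M$, so $M$ is invertible. Hence $\rank(P - \tilde{N}) \ge \rank(M) = |S| = \mathcal{D}^u(N)$.

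The main subtlety we expect is the boundary case $k = V$. There the full $V \times V$ submatrix has zero row sums, like a graph Laplacian, and is singular. This is exactly why one column must be dropped, which yields the bound $V-1$ rather than $V$. The strict positivity $P \in (0,1)^{C\times V}$ is what makes the dominance strict: the dropped column carries positive mass in every selected row.
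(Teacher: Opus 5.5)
Your proof is correct and follows the paper's argument. As in the paper, you restrict $P-\tilde{N}$ to rows of contexts with a unique next token and columns of those tokens, then combine the sign pattern, row-stochasticity of $P$ and Gershgorin's theorem to get strict diagonal dominance. Two of your steps are cleaner versions of the paper's. You pick one representative context per token, whereas the paper assumes the extracted block is a row-permutation of the identity, which fails when several contexts share the same next token. You handle the case $|\mathcal{C}^{uu}|=V$ by deleting one column, where the paper appeals to the complete-graph Laplacian analogy.
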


Informally, this says that as long as many distinct tokens each appear as the unique observed continuation of some context, the prediction error matrix $(P - \tilde{N})$ has rank close to $V$ for any $P$.

In natural language, it is reasonable to expect that most tokens appear at least once in a context where the observed continuation is unique in the training set, especially for long sequences. Some continuations are even constrained by the language distribution, e.g., \emph{earth} following \emph{down-to-}. This motivates the hypothesis that in practical language modeling setups $|\mathcal{C}^{uu}| \simeq V$, or at least $|\mathcal{C}^{uu}| \gg D$. We corroborate these arguments with experimental observations in \Cref{app:valid_assumptions}.

\paragraph{Mismatch Between Ideal and Realizable Updates}
Combining the rank lower bound on $(P^\tau_\theta - \tilde{N})$ from \Cref{prop:rankPN} with the rank upper bound on the logit update direction $\Delta_\theta^\tau$ from \Cref{eq:delta}, we can now quantify the inevitable gap between the actual update and the logit gradient. We measure this gap in terms of minimal Frobenius distance. In practice, the observed $\Delta_\theta^\tau$ and $(P_\theta^\tau - \tilde{N})$ could have different norms; nevertheless, a general lower bound on the Frobenius distance holds through any rescaling of the logit update. Hence, the following result implies that the \emph{directions} of the logit gradient and the logit update cannot be aligned.

\begin{proposition}[Update direction residual]\label{prop:upd_res}
If \mbox{$\mathcal{D}^u(N) > D$}, then
\[
\|\Delta_\theta^\tau - (P_\theta^\tau - \tilde{N})\|_F
> \sqrt{\sum_{i=2D+1}^{\mathcal{D}^u(N)} \varsigma_i^2}
> 0\;\;,
\]
where $||.||_F$ is the Frobenius norm and $\varsigma_i$ are the singular values of $(P_\theta^\tau - \tilde{N})$ in decreasing order.
\end{proposition}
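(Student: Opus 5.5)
The plan is to combine the rank upper bound \Cref{eq:delta} with the rank lower bound of \Cref{prop:rankPN} through the Eckart--Young--Mirsky theorem. Write $E := P_\theta^\tau - \tilde{N}$ with singular values $\varsigma_1 \ge \varsigma_2 \ge \dots \ge 0$. Since $P_\theta^\tau = \sigma(L_\theta^\tau)$ has all entries in $(0,1)$ (the softmax is strictly positive and $V\ge 2$), \Cref{prop:rankPN} applies and gives $\rank(E) \ge \mathcal{D}^u(N)$. Hence $\varsigma_i > 0$ for every $i \le \mathcal{D}^u(N)$.

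The central step is Eckart--Young--Mirsky in Frobenius norm. For any matrix $M$ with $\rank(M) \le k$,
\[
\|M - E\|_F^2 \ge \sum_{i>k} \varsigma_i^2 ,
\]
with equality attained only by a truncated SVD of $E$. Apply this with $M = \Delta_\theta^\tau$ and $k = 2D$, which is allowed by \Cref{eq:delta}. Dropping the terms with index beyond $\mathcal{D}^u(N)$ then gives
\[
\|\Delta_\theta^\tau - E\|_F \ge \sqrt{\sum_{i=2D+1}^{\mathcal{D}^u(N)} \varsigma_i^2}.
\]
Because the bound holds for every matrix of rank at most $2D$, it also holds for every rescaling $c\,\Delta_\theta^\tau$. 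This is the sense in which the directions cannot be aligned.

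The main obstacle is the positivity claim and the strict inequalities. The right-hand side is strictly positive only when $\mathcal{D}^u(N) \ge 2D+1$, since it needs $\varsigma_{2D+1} > 0$. The stated hypothesis is only $\mathcal{D}^u(N) > D$, so for $D < \mathcal{D}^u(N) \le 2D$ the sum is empty. In that regime I would either strengthen the hypothesis to $\mathcal{D}^u(N) > 2D$, or use a sharper argument exploiting the structure of $\Delta_\theta^\tau$. That structure is
\[
\Delta_\theta^\tau = -\bigl(\operatorname{diag}(f)\,E\,W W^\top + H H^\top \operatorname{diag}(f)\,E\bigr).
\]
Its first term has row space inside $\operatorname{col}(W)$, which has dimension at most $D$. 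I expect, however, that the clean statement needs $\mathcal{D}^u(N) > 2D$, and I would note this. A related point is that \Cref{eq:delta} is an exact algebraic identity only at first order: the product of the two updates contributes an $\eta^2$ term. I would therefore treat $\Delta_\theta^\tau$ as the first-order direction, or simply note that its rank is still at most $2D$.

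For the first strict inequality, equality in Eckart--Young would force the omitted singular values beyond $\mathcal{D}^u(N)$ to vanish. It would also force $\Delta_\theta^\tau$ to be a best rank-$2D$ approximation of $E$, that is, a truncated SVD. I would argue that this is non-generic, or else weaken the claim to $\ge$. The second strict inequality then follows from $\varsigma_{2D+1} > 0$, which holds once $\mathcal{D}^u(N) \ge 2D+1$.
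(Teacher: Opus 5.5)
Your argument is the paper's: its entire proof is the single line ``a direct application of the Eckart--Young--Mirsky theorem'', which combines \Cref{eq:delta} with \Cref{prop:rankPN} exactly as you do.

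Both of your caveats are genuine defects of the statement, and the one-line proof does not address them. First, the sum is empty when $D<\mathcal{D}^u(N)\le 2D$, so the hypothesis must be $\mathcal{D}^u(N)>2D$. Second, Eckart--Young by itself yields only $\ge$ in the first inequality.

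The $\eta^2$ term is harmless. $L_\theta^{\tau+1}-L_\theta^\tau$ is a difference of two matrices of rank at most $D$, so $\rank(\Delta_\theta^\tau)\le 2D$ holds exactly, not just to first order.

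The step you leave as a genericity appeal can be closed for the first-order direction. Write $\Delta_\theta^\tau=-\bigl(FEWW^\top+HH^\top FE\bigr)$ with $F=\operatorname{diag}(f)$, whose diagonal is strictly positive. Every Eckart--Young minimizer has the form $M=\sum_{i\le 2D}\varsigma_i u_i v_i^\top$ for some SVD $E=\sum_i\varsigma_i u_i v_i^\top$. Hence $\langle M,FE\rangle_F=\sum_{i\le 2D}\varsigma_i^2\,u_i^\top F u_i>0$, since $\varsigma_1>0$. By contrast, $\langle \Delta_\theta^\tau,FE\rangle_F=-\|FEW\|_F^2-\|H^\top FE\|_F^2\le 0$. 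So $\Delta_\theta^\tau$ is never a minimizer, and the first inequality is strict. For the exact finite-$\eta$ update this survives for $\eta$ small enough, but not obviously for every $\eta$.
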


\begin{proof}
This is a direct application of the Eckart--Young--Mirsky theorem.
\end{proof}

In other words, whenever the logit gradient has intrinsic rank exceeding $2D$, which \Cref{prop:rankPN} shows to be the typical case, no gradient descent step through a rank-$D$ LM head can match it. The missing components correspond to the tail singular values of $(P_\theta^\tau - \tilde{N})$, and the bound worsens as $D$ decreases. In that sense, the LM head is a gradient bottleneck.

\subsubsection{Stochastic gradient descent}

One might hope that stochastic gradient descent (SGD) mitigates this issue: mini-batches see only a subset of contexts, so the effective $\tilde{N}^\mathcal{B}$ within a batch could in principle have lower rank, relaxing the bottleneck. This is not the case though. The high-rank structure of the logit gradient persists in the SGD setting, particularly as the model approaches convergence.

\paragraph{Notation.} Consider a batch $\mathcal{B} \in \mathcal{P}([1, S])$, where we recall that $S$ is the number of sequences in our observed dataset $X$, and $\mathcal{P}$ is the power set operator. This batch contains contexts $\mathcal{C_B} \in \mathcal{P}(\mathcal{C})$. 

We extract all batch-level objects that are relevant to the mini-batch optimization problem. Let $N^\mathcal{B}$ and $\tilde{N}^\mathcal{B}$ be the in-batch context-token count matrix and its row-normalized version, and  $(P^\tau_\theta)_\mathcal{B}$ the matrix extracted from $P^\tau_\theta$ by retaining rows corresponding to contexts in $\mathcal{C_B}$. 

\begin{proposition}[Proof in \Cref{proof:sgd}]\label{prop:sgd}
    Let $\mathcal{C_B}^u$ be the set of contexts $c$ that:
    \begin{itemize}
        \item have a unique and distinct next token $x_c$ in the batch $\mathcal{B}$ while having several observed next tokens in the dataset $X$;
        \item are such that $(\tilde{N}_{cx_{c'}})_{c, c'\in\mathcal{C}_\mathcal{B}^{u}}$ is analogous to the adjacency matrix of a connected graph.
    \end{itemize}
    There exists $\varepsilon_\mathcal{B} > 0$ such that if
    \begin{equation}\label{eq:eps_condition}
    \max_{c \in \mathcal{C_B}^{u}}||(P^\tau_{\theta})_c - \tilde{N}_c||_\infty < \varepsilon_\mathcal{B}
    \end{equation}
    then
    \[
    \rank ((P^\tau_\theta)_\mathcal{B} - \tilde{N}^\mathcal{B}) \geq \mathcal{D^B}(N) := \min(|\mathcal{C_B}^{u}|, V-1)\;\;.
    \]
\end{proposition}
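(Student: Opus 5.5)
Let $k = |\mathcal{C}_\mathcal{B}^u|$. The plan is to find, inside $(P^\tau_\theta)_\mathcal{B} - \tilde{N}^\mathcal{B}$, a square submatrix that is nonsingular, or singular only in a controlled Laplacian-like way, when $P^\tau_\theta$ is replaced by the full-dataset empirical distribution $\tilde{N}$. A lower-semicontinuity argument on the rank then transfers this to all $P^\tau_\theta$ in a small $\|\cdot\|_\infty$-neighbourhood, and that neighbourhood defines $\varepsilon_\mathcal{B}$.

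\textbf{Step 1 (extracting the submatrix).} For each $c \in \mathcal{C}_\mathcal{B}^u$ the in-batch continuation is unique, so $\tilde{N}^\mathcal{B}_c = \mathbf{1}_{x_c}$. Because the $x_c$ are distinct, I would restrict to the rows indexed by $\mathcal{C}_\mathcal{B}^u$ and the columns indexed by $\{x_{c'}\}_{c' \in \mathcal{C}_\mathcal{B}^u}$. This gives the $k \times k$ matrix
\[
M(P) = \big(P_{c x_{c'}} - \delta_{cc'}\big)_{c,c'}, \qquad M(\tilde{N}) = A - I,
\]
where $A = (\tilde{N}_{c x_{c'}})_{c,c'}$ is the matrix that the statement assumes to be the adjacency matrix of a connected graph.

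\textbf{Step 2 (structure of the limit matrix).} I would analyse $A - I$ in the same way as the proof of \Cref{prop:rankPN}.
\begin{itemize}
\item The diagonal entry of row $c$ has modulus $1 - \tilde{N}_{c x_c}$. This is strictly positive because $c$ has several observed next tokens in $X$.
\item The off-diagonal entries of row $c$ are nonnegative and sum to at most $1 - \tilde{N}_{c x_c}$.
\end{itemize}
So $A - I$ is weakly diagonally dominant. By the connectivity assumption it is also irreducible. Two cases follow.
\begin{itemize}
\item \emph{Some row is strictly dominant.} This happens when context $c$ puts dataset mass on a token outside the selected columns. The Taussky refinement of Gershgorin's theorem for irreducibly diagonally dominant matrices then gives $\det(A - I) \neq 0$, hence rank $k$.
\item \emph{Every row has equality.} Then every row sums to zero and $-(A - I)$ is exactly a weighted Laplacian of a connected graph, so its rank is $k - 1$.
\end{itemize}
To reach $\min(k, V-1)$ in the Laplacian case, I would add one column $j$ outside $\{x_{c'}\}$, or, if $k = V$, note that $k - 1 = V - 1$ already suffices. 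The idea is that the all-ones kernel vector of the Laplacian block is not in the kernel of the enlarged matrix. The caveat is that $\tilde{N}_{cj} = 0$ in the limit, so this has to be done at the level of $P$ itself, using $P \in (0,1)^{C \times V}$ and row sums of $P_c - \mathbf{1}_{x_c}$ equal to zero.

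\textbf{Step 3 (perturbation).} A $k \times k$ minor, or $(k-1) \times (k-1)$ minor, of $M(\tilde{N})$ is nonzero. Determinants are continuous in the entries, so there is $\varepsilon_\mathcal{B} > 0$ such that whenever every entry of $P_c - \tilde{N}_c$, for $c \in \mathcal{C}_\mathcal{B}^u$, is below $\varepsilon_\mathcal{B}$ in absolute value, the same minor of $M(P^\tau_\theta)$ stays nonzero. This is precisely condition \eqref{eq:eps_condition}. It gives $\rank((P^\tau_\theta)_\mathcal{B} - \tilde{N}^\mathcal{B}) \ge \rank M(P^\tau_\theta) \ge \mathcal{D}^\mathcal{B}(N)$. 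If needed, $\varepsilon_\mathcal{B}$ can be made explicit through Gershgorin margins, namely half the smallest strict-dominance gap.

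\textbf{Main obstacle.} I expect the hard part to be the equality (pure Laplacian) case of Step 2 when $k < V$. There the limit submatrix has only rank $k - 1$, and to get rank $k$ one must exploit a column outside the selected set, where the limiting entries vanish. I would first try to exhibit a nonzero $k \times k$ minor of $M(P)$ that uses one extra column, and expand it along that column to leading order in $\varepsilon$. If this does not close, I would settle for the Laplacian bound $k - 1$ in that case, mirroring the two-case structure of \Cref{prop:rankPN}.
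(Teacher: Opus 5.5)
\textbf{There is a genuine gap}, in exactly the case you flag as the main obstacle, and your Step~3 conclusion $\ge\mathcal{D}^\mathcal{B}(N)$ does not follow there. Take $k<V$ with every row tight in Step~2, i.e.\ each $\tilde N_c$, $c\in\mathcal{C}_\mathcal{B}^u$, is supported on $S=\{x_{c'}\}_{c'\in\mathcal{C}_\mathcal{B}^u}$. For example, take two contexts with dataset distribution $(\tfrac12,\tfrac12)$ on tokens $a,b$, batch continuations $a$ and $b$, and $V\ge 3$. Then $M(\tilde N)=A-I$ is a singular Laplacian of rank $k-1$. Lower semicontinuity of rank therefore gives only $\rank M(P^\tau_\theta)\ge k-1$, whereas the statement requires $\min(k,V-1)=k$. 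No argument that perturbs from the limit matrix can do better. The quantity that makes the block nonsingular is the outside mass $\sum_{j\notin S}(P^\tau_\theta)_{cj}$, and it tends to $0$ as $(P^\tau_\theta)_c\to\tilde N_c$. Your proposal leaves this case as an unexecuted leading-order expansion, with a fallback to $k-1$ that would not prove the proposition.

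\textbf{The missing idea} is to work with $M(P)$ itself rather than its limit, using that $P^\tau_\theta=\sigma(L^\tau_\theta)$ has strictly positive entries. No extra column is needed, because the outside mass already sits in the diagonal entry:
$|M(P)_{cc}|=1-P_{cx_c}=\sum_{c'\neq c}P_{cx_{c'}}+\sum_{j\notin S}P_{cj}$.
For $k<V$ the last sum is strictly positive, so $M(P)$ is strictly row diagonally dominant and hence nonsingular by Gershgorin. This is what the paper does for $|\mathcal{C}_\mathcal{B}^u|<V$, reusing the argument of \Cref{prop:rankPN}. For $|\mathcal{C}_\mathcal{B}^u|=V$, the paper reads the zero-row-sum block as a connected-graph Laplacian of rank $V-1$. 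Equivalently, deleting one row and its diagonal column leaves a strictly diagonally dominant block.

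Note that this direct route needs neither $\varepsilon_\mathcal{B}$ nor connectivity for the rank bound. Your Taussky and Laplacian steps, by contrast, also require ``connected'' to mean irreducible (strongly connected), since $A$ is not symmetric. Your extra-column plan can in fact be completed: the cofactors of $M(P)$ along a column $c_0$ converge to a nonzero multiple of the positive left null vector of $I-A$, and they pair with the positive column $(P_{cj})_c$ to give a nonzero determinant. But that is a detour around a one-line diagonal-dominance check.
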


\Cref{prop:sgd} indicates that when the model predictions $P^\tau_\theta$ get closer to the empirical data distribution $\tilde{N}$, the first-order optimal logit update direction for batch $\mathcal{B}$ is high-rank and the actual logits update is therefore suboptimal, by the same mechanism as in \Cref{prop:upd_res}. Informally, the in-batch empirical probability $\tilde{N}^\mathcal{B}_c$ is likely to be sparse even in contexts $c$ where the global empirical probability $\tilde{N}_c$ is dense. However, $(P^\tau_\theta)_c$ is not batch-dependent, so when $(P^\tau_\theta)_c$ globally converges towards $\tilde{N}$, it is still compared to a sparse $\tilde{N}^\mathcal{B}_c$ at batch level. We can thus use similar arguments to \Cref{prop:rankPN} to conclude about the rank of the logit gradient.

\begin{corollary}
\label{cor:sgd}
    We assume that the model's parameters $\theta$ are such that 
    $$
    \forall \mathcal{B}, |\mathcal{C}_{\mathcal{B}}^{u}| \geq V -1\;\;.
    $$
    Then, we have:
    $$
    ||\Delta_{\mathcal{B}\theta}^\tau - ((P^\tau_\theta)_\mathcal{B} - \tilde{N}^\mathcal{B})||_F > \sqrt{\sum_{i=2D+1}^{V-1}\varsigma_i^2} > 0\;\;,
    $$
    where $\varsigma_i$ are the singular values of $(P^\tau_\theta)_\mathcal{B} - \tilde{N}^\mathcal{B}$ in decreasing order.
\end{corollary}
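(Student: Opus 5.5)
The corollary combines \Cref{prop:sgd} with the rank bound of \Cref{eq:delta}, then applies the Eckart--Young--Mirsky theorem exactly as in \Cref{prop:upd_res}.

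\textbf{Step 1: the target is high-rank.} Under the hypothesis $|\mathcal{C}_\mathcal{B}^u| \geq V-1$, \Cref{prop:sgd} gives $\mathcal{D^B}(N) = \min(|\mathcal{C}_\mathcal{B}^u|, V-1) = V-1$. We read the hypothesis that $\theta$ is such that the sets $\mathcal{C}_\mathcal{B}^u$ are this large as including the proximity condition \Cref{eq:eps_condition} of \Cref{prop:sgd}. This is the regime ``near convergence'' described before the corollary, and it is part of what it means for $\mathcal{C}_\mathcal{B}^u$ to be defined relative to the current $P_\theta^\tau$. Hence $\rank((P^\tau_\theta)_\mathcal{B} - \tilde{N}^\mathcal{B}) \geq V-1$, so its singular values satisfy $\varsigma_1 \geq \dots \geq \varsigma_{V-1} > 0$.

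\textbf{Step 2: the realizable update is low-rank.} Next we check that the batch-level rescaled logit update $\Delta_{\mathcal{B}\theta}^\tau$ still has rank at most $2D$. Write the batch update as $\Delta_{\mathcal{B}\theta}^\tau = (H_\mathcal{B}\,\delta W^\top + \delta H_\mathcal{B}\, W^\top)/\eta$, where $\delta W$ and $\delta H_\mathcal{B}$ are the mini-batch gradient steps. Each term is a product through a $D$-dimensional inner index, so it has rank at most $D$, and their sum has rank at most $2D$, as in \Cref{eq:delta}. For a finite $\eta$ there is also a second-order term $\delta H_\mathcal{B}\,\delta W^\top$. Either it is dropped by the first-order convention already used in \Cref{eq:delta}, or it is absorbed by noting that $(H+\delta H)(W+\delta W)^\top - HW^\top = H\,\delta W^\top + \delta H\,(W+\delta W)^\top$, which is still a sum of two rank-$\leq D$ terms.

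\textbf{Step 3: Eckart--Young--Mirsky.} The best rank-$2D$ Frobenius approximation error of $M := (P^\tau_\theta)_\mathcal{B} - \tilde{N}^\mathcal{B}$ is $\sqrt{\sum_{i>2D}\varsigma_i^2}$. Therefore
\[
\|\Delta_{\mathcal{B}\theta}^\tau - M\|_F \geq \sqrt{\sum_{i=2D+1}^{V-1}\varsigma_i^2},
\]
and the right-hand side is positive as soon as $V-1 > 2D$, since $\varsigma_{2D+1} > 0$.

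\textbf{Main obstacle: strict inequalities.} Eckart--Young--Mirsky only yields ``$\geq$''. The paper's strict ``$>$'' for the first inequality needs extra work. Equality would force $\Delta_{\mathcal{B}\theta}^\tau$ to be a best rank-$2D$ approximation of $M$ and would require $\varsigma_i = 0$ for $i \geq V$. We would try to rule this out, or else state the bound with ``$\geq$'' and note that strictness holds generically. The second strict inequality also needs the implicit assumption $2D < V-1$, which should be made explicit, mirroring the assumption $\mathcal{D}^u(N) > D$ in \Cref{prop:upd_res}.
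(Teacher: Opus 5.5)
Your proposal is the paper's intended argument. The paper gives no separate proof; the corollary follows by the same mechanism as \Cref{prop:upd_res}, combining the rank lower bound $V-1$ from \Cref{prop:sgd}, the bound $\rank(\Delta_{\mathcal{B}\theta}^\tau)\le 2D$ from \Cref{eq:delta}, and Eckart--Young--Mirsky. Your caveats are correct: the theorem gives only ``$\geq$'', and the final positivity needs $2D<V-1$. Moreover, every row of $(P^\tau_\theta)_\mathcal{B}-\tilde{N}^\mathcal{B}$ sums to zero, so $\varsigma_i=0$ for $i\ge V$ automatically; hence equality holds exactly when $\Delta_{\mathcal{B}\theta}^\tau$ is an optimal rank-$2D$ approximant, which rank arguments cannot exclude. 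Your reinterpretation of the hypothesis is also unnecessary: softmax outputs are strictly positive, so the argument of \Cref{prop:rankPN} applies directly to the rows in $\mathcal{C}_\mathcal{B}^u$ without \Cref{eq:eps_condition}.
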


Taken together, \Cref{prop:sgd} and \Cref{cor:sgd} show that the gradient bottleneck is not an artifact of full-batch training: near convergence, even mini-batch gradients are high-rank, and the rank-$D$ LM head remains a fundamental obstacle.

The weakness of the connectivity condition in \Cref{prop:sgd} is justified for natural language because special and frequent tokens should often have non-null probability and thus behave similarly to high-degree vertices. Many next-token probability distributions should also have relatively high entropy and increase connectivity. We provide empirical elements to back these arguments in \Cref{app:valid_assumptions}. In practice, for batches containing $10^5-10^7$ tokens, the number of positive-entropy contexts that appear only once should not be greatly inferior to $V$.

\paragraph{Alternative LM Heads}
One might also wonder whether replacing the standard softmax with a more expressive output layer, as proposed in past work to address the expressivity bottleneck, resolves the optimization issue as well. We show that it does not. Past work~\citep{softmax_bottleneck,mixtape,pmlr-v97-ganea19a} has proposed alternatives to the softmax layer to improve the expressivity of models in terms of output probabilities. These alternatives break down the LM head into several components before recombining outputs, which theoretically boosts the maximum rank of the output log-probabilities. However, the proposed alternatives theoretically fail to address the softmax bottleneck from the optimization perspective. Indeed, if we consider any mapping \mbox{$g_\theta(H_\theta) = L_\theta$}, then
\begin{equation}\label{eq:jacobian}
\nabla_H \mathcal{L}(\theta^\tau, X) = \operatorname{diag}(f) (P_\theta^\tau - \tilde{N}) J_g(H_\theta)\;\;,
\end{equation}
where the Jacobian $J_g(H_\theta)$ is rank $D$ at most. The optimization process is thus limited -- in the first-order -- by the same effect we described in the softmax case, regardless of the specific choice of $g$.

Nevertheless, even though $H_\theta$ never benefits from a complete view of the loss gradient in the logits, some functions $g$ may provide less lossy compression through $J_g$, potentially through pre-conditioning or regularization. Such alternatives would not target full expressivity, but rather aim to preserve gradient flow through LM heads. We leave the exploration of such methods for future work.

\paragraph{Empirical Validation of the Theory} 
The theoretical results above hint to the fact that logit gradients are near full-rank in practice. We verify this directly: we estimate the empirical ranks of the logits' gradients of the Pythia models~\citep{biderman2023pythia} in \Cref{fig:pythia_grad_rank}, by measuring them on data taken from the Pile~\citep{gao2020pile}. We obtain their empirical ranks by counting non-null ($>$1e-6) diagonal $R$ values in their $QR$ decomposition.

We also study the relevance of the bounds in \Cref{prop:upd_res,cor:sgd}. In \Cref{fig:svd_components}, we measure the number of SVD components required to approximate the logit gradient as the tolerated approximation error increases, for various batch sizes. Although 70\% of the gradient can be expressed with less than 1{,}000 components, almost exact approximation (up to 99.9\%) requires an increasing number of components as we add tokens in the batch, with up to 30{,}000 components for our maximal batch size (500{,}000 tokens). This confirms that the singular value spectrum is genuinely broad and
the theoretical bound of \Cref{prop:upd_res} is non-vacuous.

\begin{figure}[t]
    \centering
    \begin{subfigure}[t]{0.47\textwidth}
        \centering
        \includegraphics[width=\linewidth]{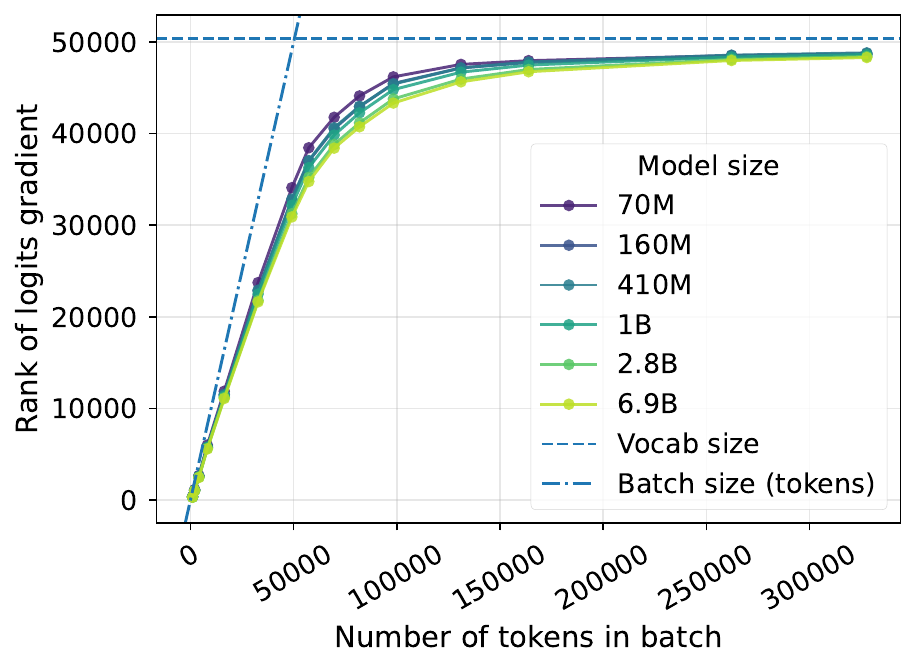}
    \caption{Empirical rank across Pythia models}\label{fig:pythia_grad_rank}
    \end{subfigure}
    \begin{subfigure}[t]{0.52\textwidth}
        \centering
        \includegraphics[width=\linewidth]{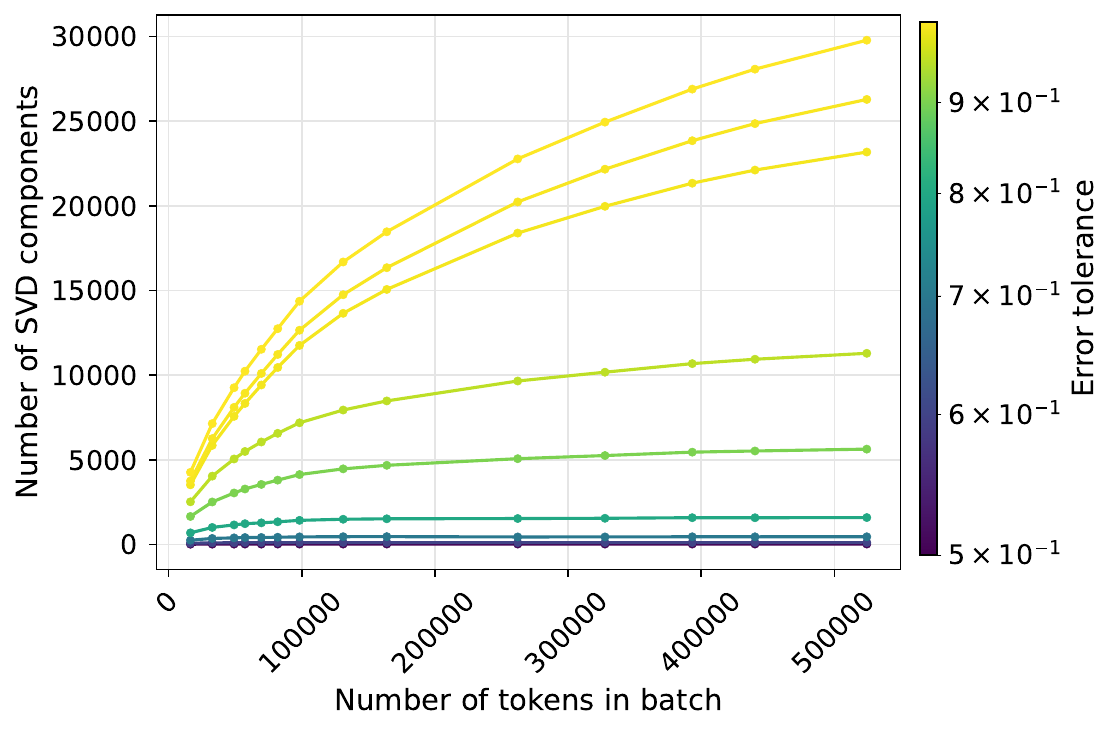}
    \caption{Effective rank analysis (Pythia-1B)}\label{fig:svd_components}
    \end{subfigure}
    \caption{
Empirical and effective rank of the logit gradient $\nabla_L \mathcal{L}$
as a function of batch size.
\textbf{(a)} Empirical rank across Pythia model sizes, measured on the
Pile~\citep{gao2020pile} via QR decomposition.
\textbf{(b)} Number of SVD components required to reconstruct $\nabla_L \mathcal{L}$
up to a given fraction of its total norm (color), for Pythia-1B.}
    \label{fig:rank_svd_qr}
\end{figure}

\section{Proofs}\label{app:proofs}

\subsection{\Cref{prop:top1}}\label{proof:top1}

\begin{proof}
If $D=2$, we can pick rows of $W$ as distinct points on the unit circle. We then set:
$$
H_i = \alpha_i W_{\argmax(\tilde{N}_i)}
$$
with $\alpha_i \ge 0$. We clearly have:
$$
\begin{cases}
    \lim_{\alpha_i \rightarrow 0} \sigma(H_iW^\top)_{i,\argmax(\tilde{N}_i)} = \frac{1}{V} \\
    \lim_{\alpha_i \rightarrow +\infty} \sigma(H_iW^\top)_{i,\argmax(\tilde{N}_i)} = 1
\end{cases}
$$
By definition, $\tilde{N}_{i,\argmax(\tilde{N}_i)} \in [\frac{1}{V}, 1]$. Hence, by continuity of the softmax function, for all $i \in [1, C]$ and $\varepsilon > 0$, there exists $\alpha_i$ that satisfies \Cref{eq:top1ineq}.

For $D > 2$, we can work on the first two dimensions of $H$ and $W$ and zero-out all other entries to reach the same conclusion.
\end{proof}

\subsection{\Cref{prop:rankPN}}\label{proof:rankPN}

\begin{proof}
We introduce $P^u$ and $\tilde{N}^u$, where we extract rows that correspond to $\mathcal{C}^u$ from $P$ and $\tilde{N}$. As $P_{ij} \in ]0, 1[$, we have:
    $$
    (P^u - \tilde{N}^u)_{ij} \begin{cases}
        <0\text{ if } j=w_{s, t} \\
        >0 \text{ else.}
    \end{cases}
    $$

    We also note that
    $$
    \sum_j (P^u - \tilde{N}^u)_{ij} = \sum_j (P - \tilde{N})_{ij} = 1-1 = 0
    $$

    Let us now have $P^{uu}$ and $\tilde{N}^{uu}$ by extracting columns from $P^u$ and $\tilde{N}^u$ that correspond to $\mathcal{C}^{uu}$. We thus notice that $\tilde{N}^{uu}$ corresponds to the identity $I_{|\mathcal{C}^{uu}|}$ up to a permutation of rows. We thus rearrange the rows of $P^{uu}$ and $\tilde{N}^{uu}$ so that $\tilde{N}^{uu} = I_{|\mathcal{C}^{uu}|}$ without loss of generality. This implies that
    $$
    (P^{uu} - \tilde{N}^{uu})_{ij} \begin{cases}
        <0\text{ if } i=j \\
        >0 \text{ else.}
    \end{cases}
    $$
    
    As we only removed columns where coefficients of $(P^u - \tilde{N}^u)$ were positive, we also have 
    $$
    \sum_j (P^{uu} - \tilde{N}^{uu})_{ij} \begin{cases}
        < 0\text{ if }|\mathcal{C}^{uu}| < V \\
        = 0\text{ if }|\mathcal{C}^{uu}| = V
    \end{cases}
    $$

    If $|\mathcal{C}^{uu}| = V$, then $(P^{uu} - \tilde{N}^{uu})$ is a square matrix without null coefficients, whose rows sum to $0$, and where only diagonal terms are negative. It is thus analogous to the Laplacian of a fully connected graph with $V$ vertices, and has rank $V-1$. Else, then $(P^{uu} - \tilde{N}^{uu})$ is strictly diagonally dominant, and is full-rank by Gershgorin's circle theorem. As $(P^{uu} - \tilde{N}^{uu})$ is a submatrix of $(P - \tilde{N})$, we have 
    $$
    \rank (P - \tilde{N}) \geq \rank (P^{uu} - \tilde{N}^{uu}) = \min(|\mathcal{C}^{uu}|, V-1)
    $$

\end{proof}
\subsection{\Cref{prop:sgd}}\label{proof:sgd}

\begin{proof}
By the definition of $\mathcal{C}_\mathcal{B}^u$, each context has a unique and distinct next token in batch $\mathcal{B}$ while having multiple possible next tokens in dataset $X$. Under condition \eqref{eq:eps_condition} with sufficiently small $\varepsilon_\mathcal{B}$, the submatrix of $(P^\tau_\theta - \tilde{N}^\mathcal{B})$ corresponding to rows $\mathcal{C}_\mathcal{B}^u$ preserves the sign pattern of $(\tilde{N} - \tilde{N}^\mathcal{B})$ on these rows.

Following the construction in \Cref{prop:rankPN}, we extract rows from $\mathcal{C}_\mathcal{B}^u$ and columns corresponding to the distinct unique batch tokens. Since these tokens are distinct by assumption, we obtain a square submatrix of size $|\mathcal{C}_\mathcal{B}^u| \times |\mathcal{C}_\mathcal{B}^u|$.

If $|\mathcal{C}_\mathcal{B}^u| = V$, the connectivity assumption ensures this submatrix is analogous to the Laplacian of a connected graph, yielding rank $V-1$.

If $|\mathcal{C}_\mathcal{B}^u| < V$, the same argument as in the proof of \Cref{prop:rankPN} establishes strict diagonal dominance, hence full rank $|\mathcal{C}_\mathcal{B}^u|$ by Gershgorin's circle theorem.

Therefore $\rank(P^\tau_\theta - \tilde{N}^\mathcal{B}) \geq \min(|\mathcal{C}_\mathcal{B}^u|, V-1)$.
\end{proof}

\section{Empirical Justifications for Assumptions}
\label{app:valid_assumptions}

This section reports basic experiments that help justify the assumptions made in \Cref{prop:rankPN} and \Cref{prop:sgd} for natural language data.

In \Cref{fig:uniques}, we compute basic statistics about unique tokens and contexts in batches of Fineweb documents. The results indicate that nearly all observed contexts are unique, and that most tokens appear at least once for batches of usual sizes (up to a few million tokens). This suggests that the bound in \Cref{prop:rankPN} should take high values, as unique contexts made of tokens spanning most of the vocabulary lead to bigger $\mathcal{C}^{uu}$ sets.

\begin{figure}[t]
    \centering
    \begin{subfigure}[t]{0.47\textwidth}
        \centering
        \includegraphics[width=\linewidth]{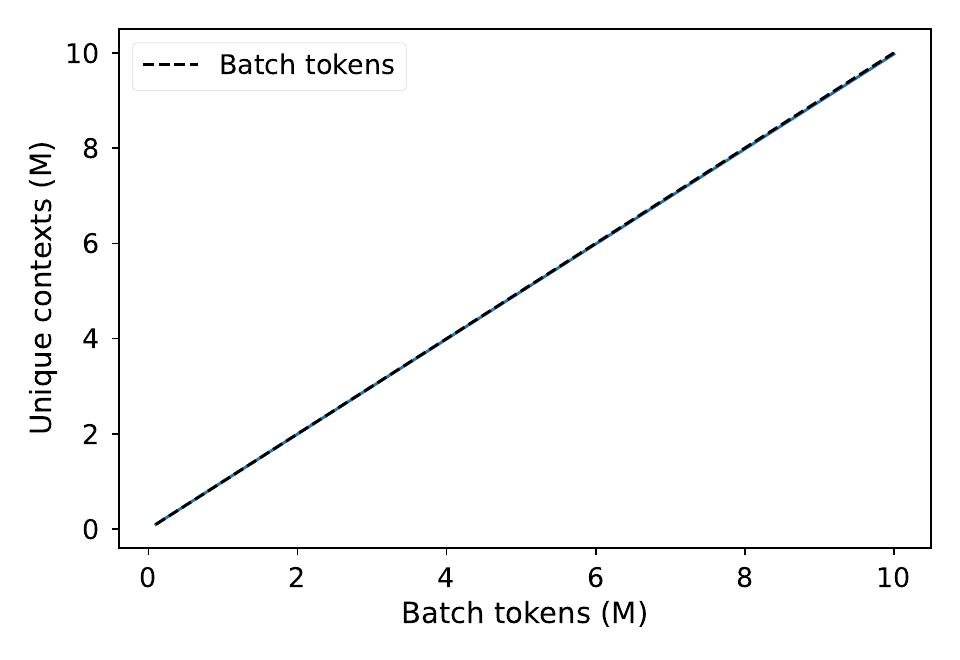}
    \caption{Unique contexts count}
    \end{subfigure}
    \begin{subfigure}[t]{0.47\textwidth}
        \centering
        \includegraphics[width=\linewidth]{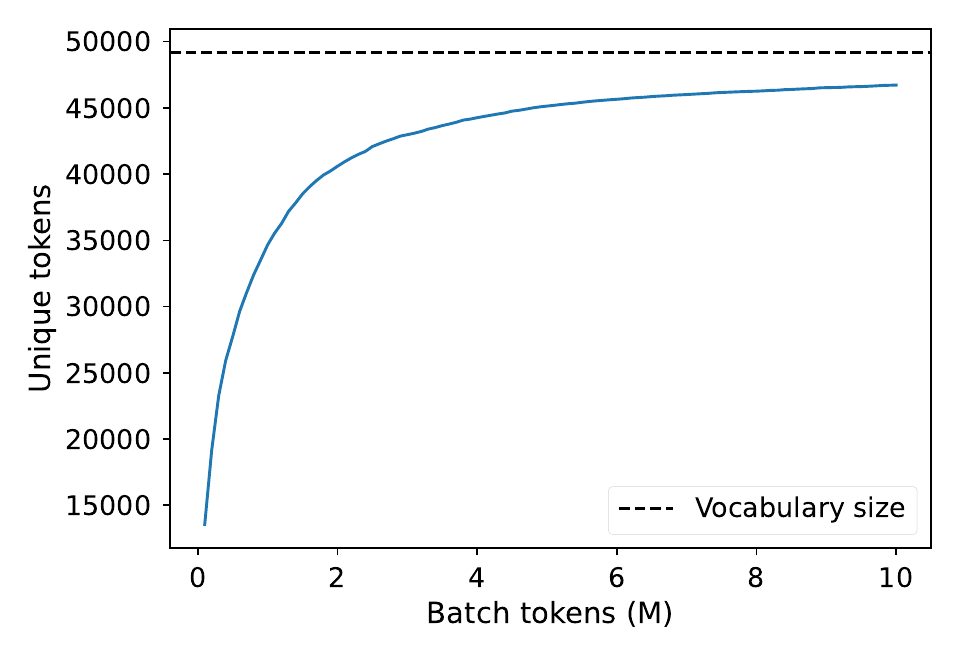}
    \caption{Unique tokens count}
    \end{subfigure}
    \caption{Unique tokens/contexts count as batch size increases. We use the SmolLM2 tokenizer on Fineweb documents.}
    \label{fig:uniques}
\end{figure}

In \Cref{fig:nt_ent}, we measure the next-token entropy distribution for Llama-3.1-8B on Fineweb, as a proxy of $\tilde{N}$. These results indicate that this distribution has relatively scattered support, with a non-negligible number of contexts resulting in entropy levels $>5$. Following the connection made in \Cref{prop:sgd} between $\tilde{N}$ and adjacency matrices, these contexts are likely to act as highly connected vertices, boosting the connectivity of the underlying graph.

\begin{figure}[tbh]
    \centering
    \includegraphics[width=0.6\linewidth]{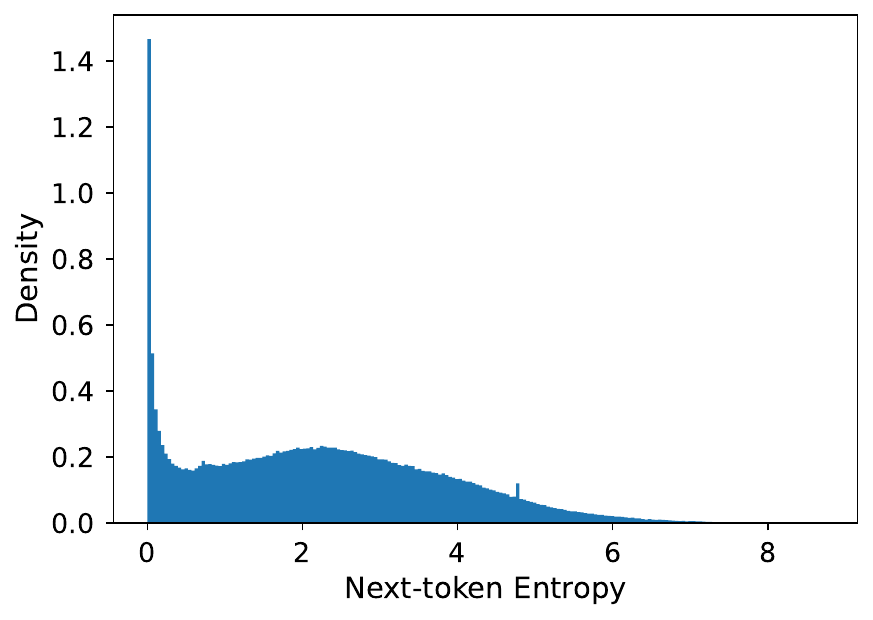}
    \caption{Next-token entropy distribution for Llama-3.1-8B on Fineweb documents.}
    \label{fig:nt_ent}
\end{figure}

\section{Analysis of the Gradient Compression}
\label{app:grad_comp}

In \Cref{fig:logit_grad_shape_l32b}, we report additional results for the experiment led in \Cref{ssec:exp:grad_comp} for Llama-3.1-8B and OLMo2-32B. We reach similar conclusions as for Llama-3.1-70B, namely that the main components of the gradients are smoothened by compression, and the tail of coefficients is noisier after projection.

\begin{figure*}[t!]
    \centering
    \begin{subfigure}[t]{0.47\textwidth}
        \centering
        \includegraphics[width=\textwidth]{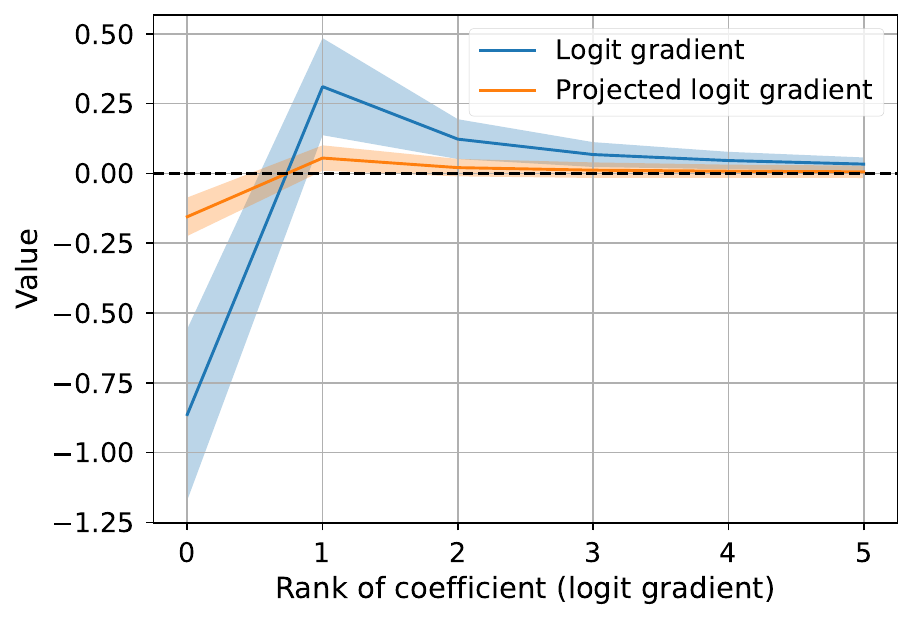}
        \caption{Llama-3.1-8B}
    \end{subfigure}%
    ~ 
    \begin{subfigure}[t]{0.47\textwidth}
        \centering
        \includegraphics[width=\textwidth]{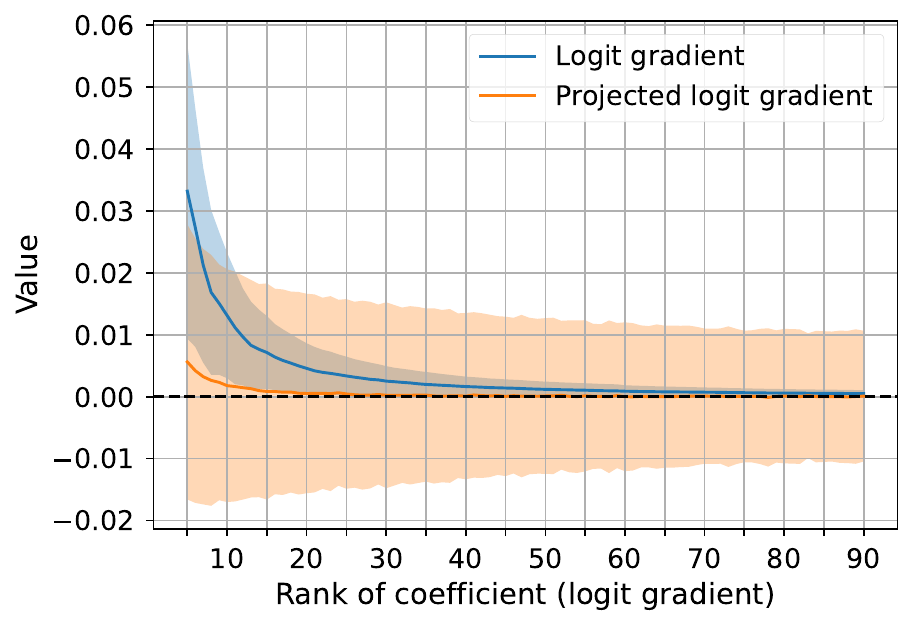}
        \caption{Llama-3.1-8B (continued)}
    \end{subfigure}
    \centering
    \begin{subfigure}[t]{0.47\textwidth}
        \centering
        \includegraphics[width=\textwidth]{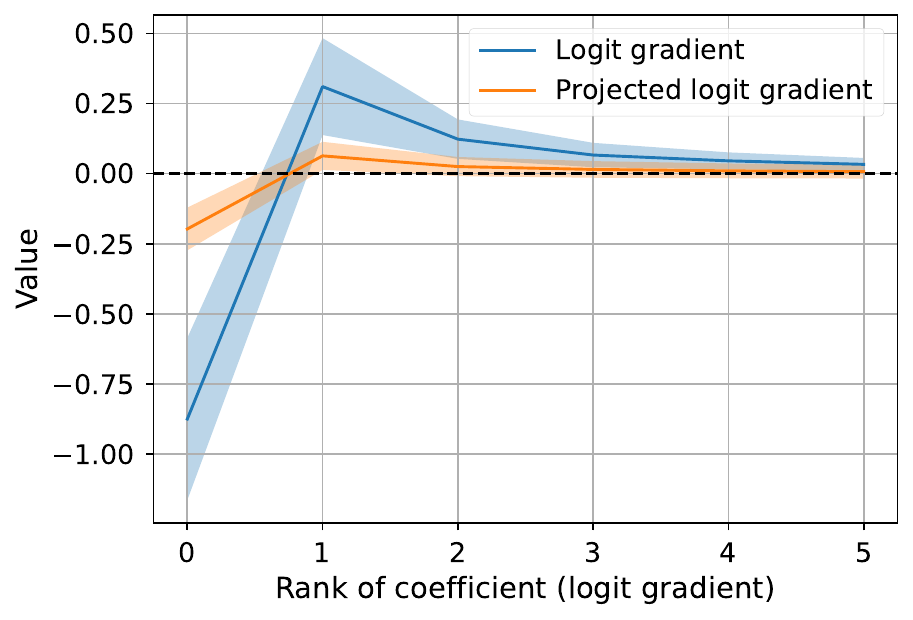}
        \caption{OLMo2-32B}
    \end{subfigure}%
    ~ 
    \begin{subfigure}[t]{0.47\textwidth}
        \centering
        \includegraphics[width=\textwidth]{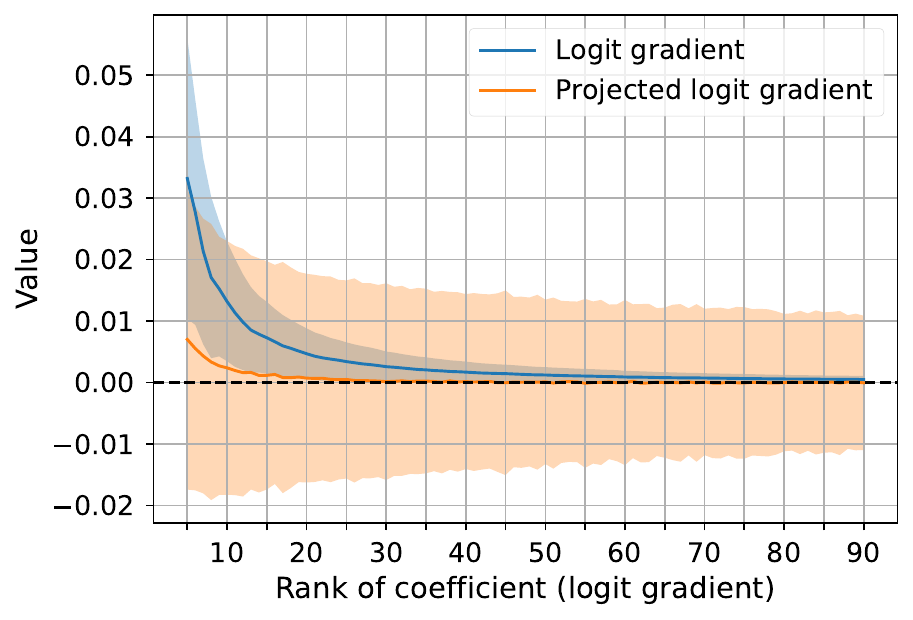}
        \caption{OLMo2-32B (continued)}
    \end{subfigure}
    \caption{Average full and projected logits gradient coefficients, sorted by absolute values of the full gradient. The gradients are taken for Llama-3.1-8B and OLMo2-32B on Fineweb documents. The plots on the left-hand side focus on the main coefficients, while the plots on the right-hand side zoom in on smaller coefficients. The standard deviation is reported using filled areas around the curves.}
    \label{fig:logit_grad_shape_l32b}
\end{figure*}
\FloatBarrier

\section{Training Dynamics}\label{app:training_dynamics}

In \Cref{fig:training_dynamics}, we report the same metrics as in \Cref{fig:gradient_compression} along training for OLMo-2-1B~\citep{olmo20252olmo2furious}. For each intermediate checkpoint, we compute the null space of $W_\theta^\top$, and we then measure both the fraction of projected norm and the cosine similarity between the original and projected gradients, using FineWeb samples. Overall, the metrics are stable throughout training, with two noticeable exceptions:
\begin{itemize}
    \item In the first few billion tokens, the compression slightly worsens to reach a plateau that it sustains throughout training;
    \item During the midtraining phase, a slight improvement in both compression metrics occurs. This coincides with both a change in the training data distribution and a learning rate decay to 0, which makes it unclear what caused this transition.
\end{itemize}
Even though this effect is minor, we leave further exploration for future work.

\begin{figure*}[t!]
    \centering
    \begin{subfigure}[t]{0.47\textwidth}
        \centering
        \includegraphics[width=\textwidth]{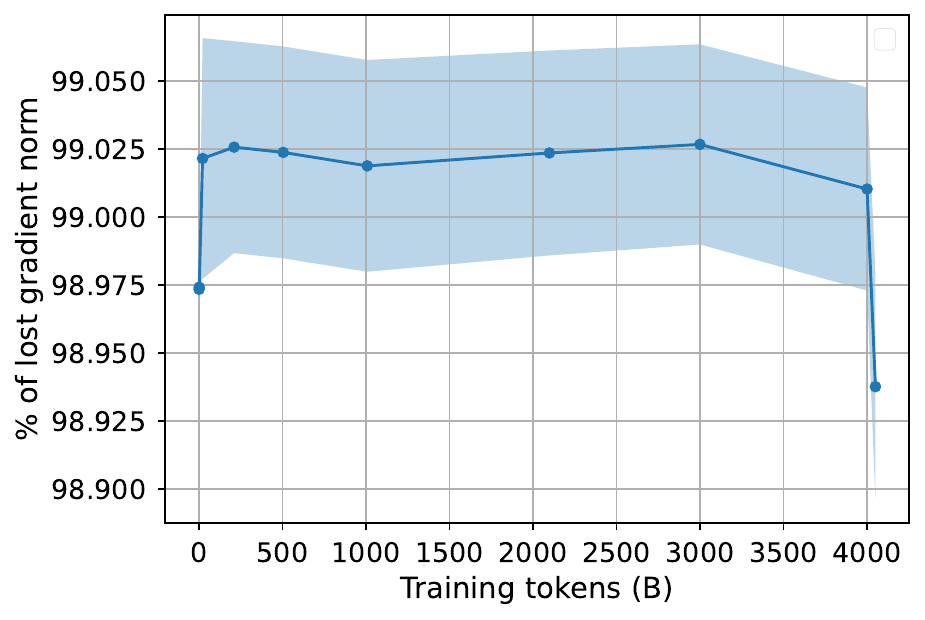}
        \caption{Fraction of projected gradient norm.}
    \end{subfigure}%
    ~ 
    \begin{subfigure}[t]{0.47\textwidth}
        \centering
        \includegraphics[width=\textwidth]{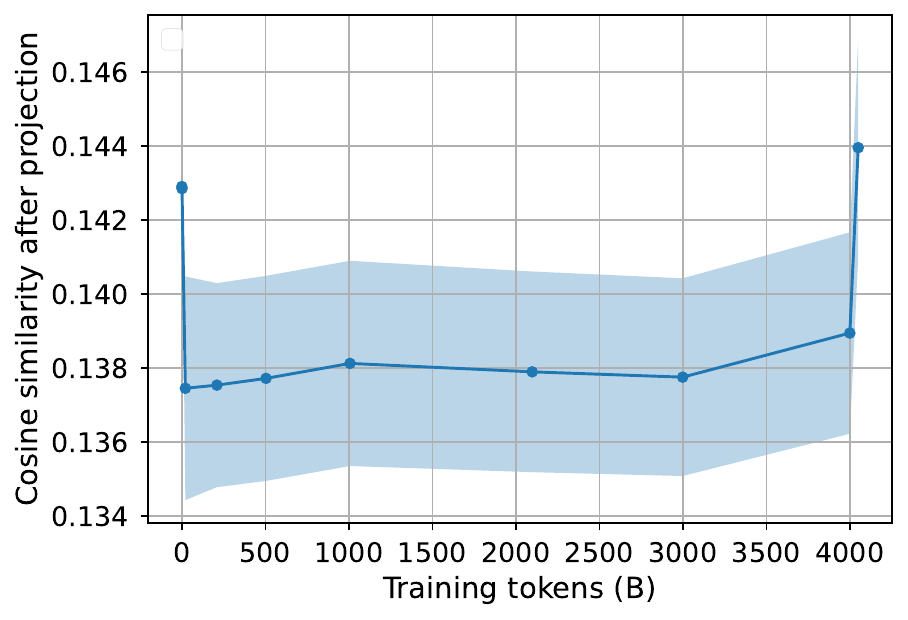}
        \caption{Cosine similarity after projection.}
    \end{subfigure}
    \caption{Gradient compression metrics along training for OLMo2-1B. Standard deviation is reported using filled areas.}
    \label{fig:training_dynamics}
\end{figure*}

\section{Downstream Evaluation -- Detailed Results}\label{app:bench_scores}

In \Cref{tab:benchmark_results}, we provide a detailed report of zero-shot performance of our rank-constrained models along training. We observe that, despite some benchmarks being more prone to variance effects, the overall trend favors models with higher values of $D$. We also report a graphical visualization in \Cref{fig:downstream_curves} to facilitate analysis.

\begin{table*}[t]
\centering \scriptsize
\begin{tabular}{c|ccccccc|cc}
\toprule
$D$ & ARC-C & ARC-E & HellaSwag & PIQA & SciQ & OpenBookQA & Lambada $\downarrow$ & Avg. & Avg. (weighted) \\
\midrule
\multicolumn{10}{c}{\textbf{5B tokens}} \\
\midrule
32 & 23.63 & 42.21 & 33.17 & 60.88 & 50.40 & 30.20 & 593.70 & 40.08 & 39.22 \\
64 & 25.77 & 45.50 & 34.57 & 62.62 & 60.50 & 31.80 & 186.90 & 43.46 & 41.46 \\
128 & 26.96 & 51.30 & 36.31 & 64.36 & 66.60 & \textbf{34.00} & 104.50 & 46.59 & 43.96 \\
256 & 28.24 & 52.23 & 37.39 & 64.58 & 69.30 & 33.40 & 70.99 & 47.52 & 44.93 \\
512 & 27.82 & 53.41 & 37.66 & 62.68 & 72.90 & 32.40 & 62.47 & 47.81 & 45.06 \\
1024 & 28.84 & 52.02 & 38.65 & 66.05 & 72.40 & 31.80 & 56.16 & 48.29 & 46.01 \\
2048 & 28.84 & \textbf{55.09} & 38.64 & 66.16 & \textbf{73.70} & 31.60 & 55.75 & \textbf{49.01} & \textbf{46.50} \\
4096 & \textbf{29.10} & 53.79 & \textbf{38.87} & \textbf{66.54} & 72.40 & 32.60 & \textbf{54.39} & 48.88 & 46.49 \\
\midrule
\multicolumn{10}{c}{\textbf{8.5B tokens}} \\
\midrule
32 & 26.11 & 43.43 & 34.99 & 60.72 & 53.20 & 32.20 & 496.49 & 41.77 & 40.74 \\
64 & 27.22 & 47.52 & 36.90 & 64.47 & 61.80 & 31.20 & 162.42 & 44.85 & 43.48 \\
128 & 26.54 & 52.06 & 37.63 & 64.74 & 67.50 & \textbf{35.80} & 85.05 & 47.38 & 44.92 \\
256 & 27.90 & 54.80 & 38.80 & 66.21 & 73.10 & 32.80 & 59.74 & 48.94 & 46.49 \\
512 & 27.73 & 52.95 & 39.65 & 66.21 & 72.90 & 34.60 & 52.77 & 49.01 & 46.75 \\
1024 & 28.41 & 55.39 & 40.90 & 66.32 & 73.40 & 32.40 & 45.46 & 49.47 & 47.79 \\
2048 & \textbf{31.40} & 55.85 & 41.09 & 66.87 & \textbf{74.10} & 33.60 & 45.50 & 50.48 & 48.31 \\
4096 & 29.69 & \textbf{56.10} & \textbf{41.78} & \textbf{67.36} & 72.40 & \textbf{35.80} & \textbf{45.23} & \textbf{50.52} & \textbf{48.67} \\
\midrule
\multicolumn{10}{c}{\textbf{11B tokens}} \\
\midrule
32 & 25.94 & 44.19 & 35.70 & 61.86 & 52.90 & 31.60 & 406.56 & 42.03 & 41.37 \\
64 & 28.58 & 47.52 & 37.69 & 64.09 & 61.00 & 32.20 & 129.72 & 45.18 & 43.92 \\
128 & 26.28 & 50.38 & 38.48 & 64.85 & 64.40 & 33.20 & 73.17 & 46.27 & 44.93 \\
256 & 28.75 & 54.92 & 39.81 & 67.03 & 71.70 & 32.20 & 47.94 & 49.07 & 47.17 \\
512 & 30.29 & 54.59 & 40.99 & 66.59 & 69.70 & 33.20 & 43.29 & 49.23 & 47.72 \\
1024 & 29.86 & 55.77 & 41.41 & 67.79 & \textbf{73.80} & 33.60 & 38.06 & 50.37 & 48.52 \\
2048 & 30.20 & 55.18 & 42.11 & 67.79 & \textbf{73.80} & 34.00 & 37.58 & 50.51 & 48.86 \\
4096 & \textbf{31.06} & \textbf{56.52} & \textbf{42.64} & \textbf{68.72} & 71.30 & \textbf{34.40} & \textbf{37.27} & \textbf{50.77} & \textbf{49.41} \\
\bottomrule
\end{tabular}
\caption{Benchmark scores grouped by training tokens (5B, 8.5B, 11B) and ordered by effective hidden dimension $D$. We report both weighted and unweighted averages.}
\label{tab:benchmark_results}
\end{table*}

\begin{figure*}[t!]
    \centering
    \begin{subfigure}[t]{0.4\textwidth}
        \centering
        \includegraphics[width=\textwidth]{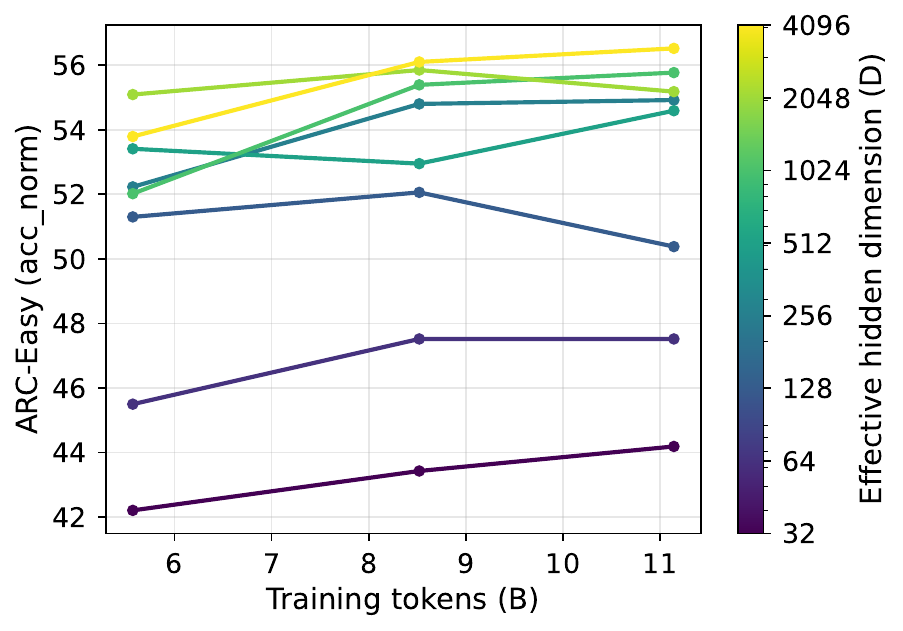}
        \caption{ARC-Easy}
    \end{subfigure}
    \begin{subfigure}[t]{0.4\textwidth}
        \centering
        \includegraphics[width=\textwidth]{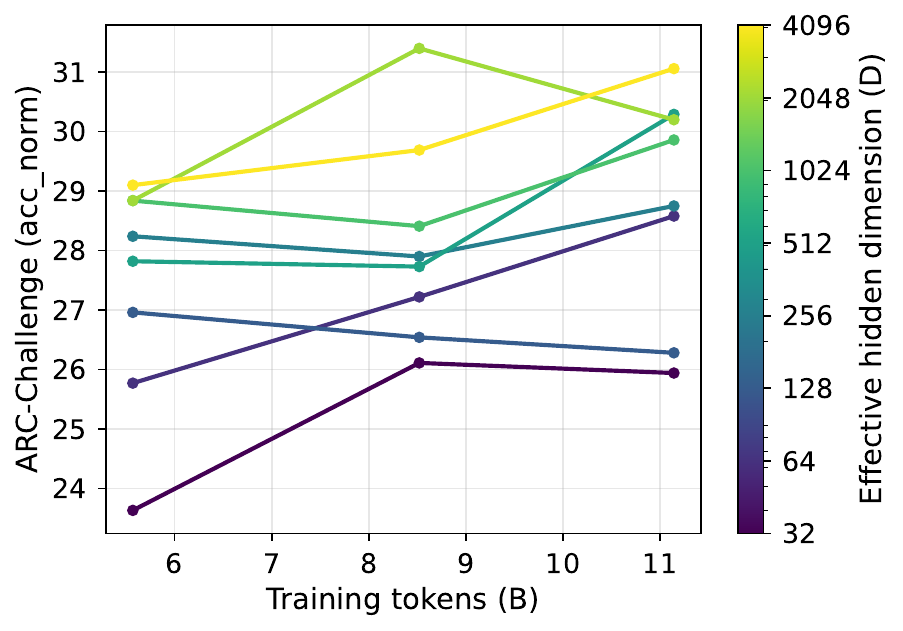}
        \caption{ARC-Challenge}
    \end{subfigure}
    \begin{subfigure}[t]{0.4\textwidth}
        \centering
        \includegraphics[width=\textwidth]{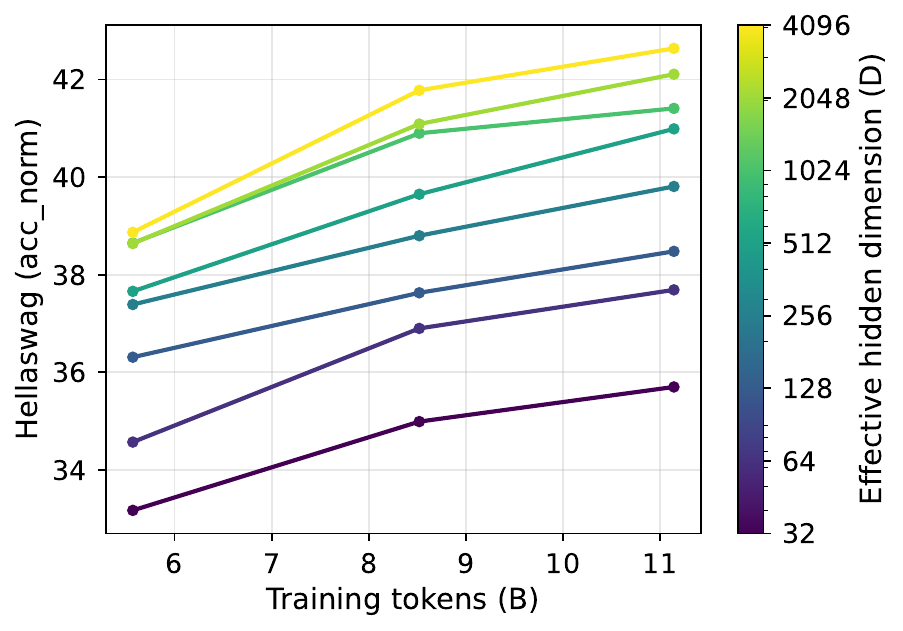}
        \caption{Hellaswag}
    \end{subfigure}
    \begin{subfigure}[t]{0.4\textwidth}
        \centering
        \includegraphics[width=\textwidth]{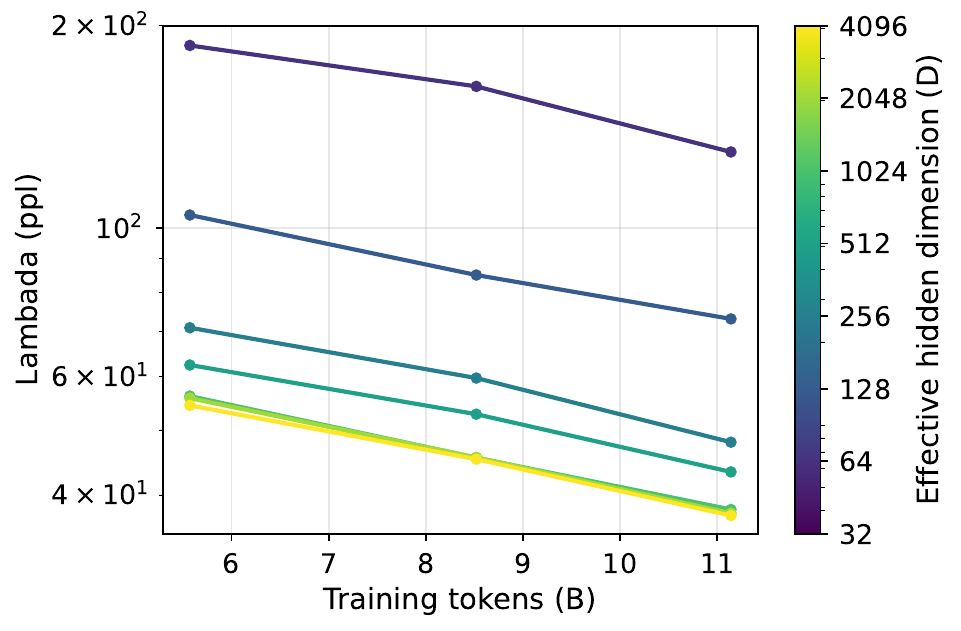}
        \caption{Lambada}
    \end{subfigure}
    \begin{subfigure}[t]{0.4\textwidth}
        \centering
        \includegraphics[width=\textwidth]{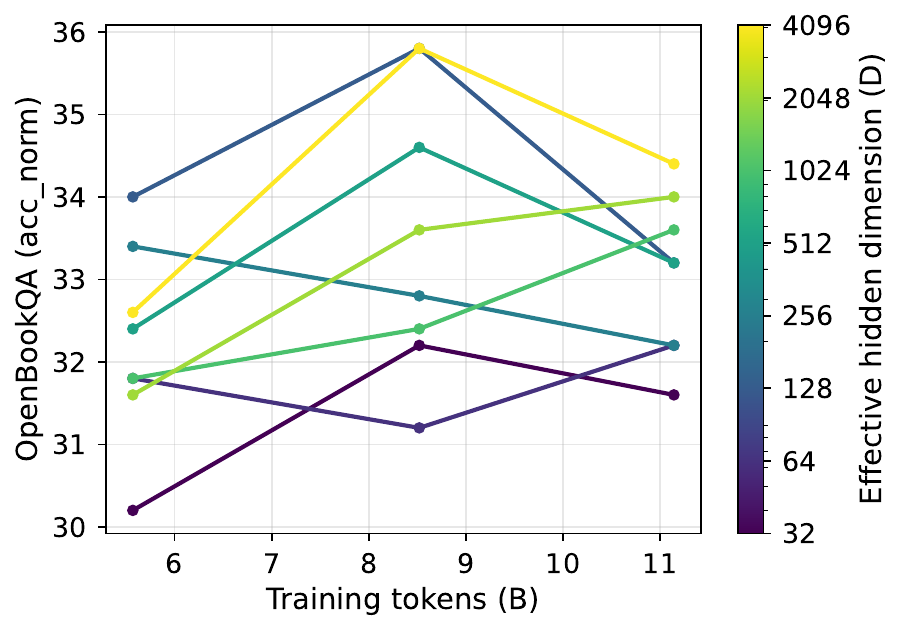}
        \caption{OpenBookQA}
    \end{subfigure}
    \begin{subfigure}[t]{0.4\textwidth}
        \centering
        \includegraphics[width=\textwidth]{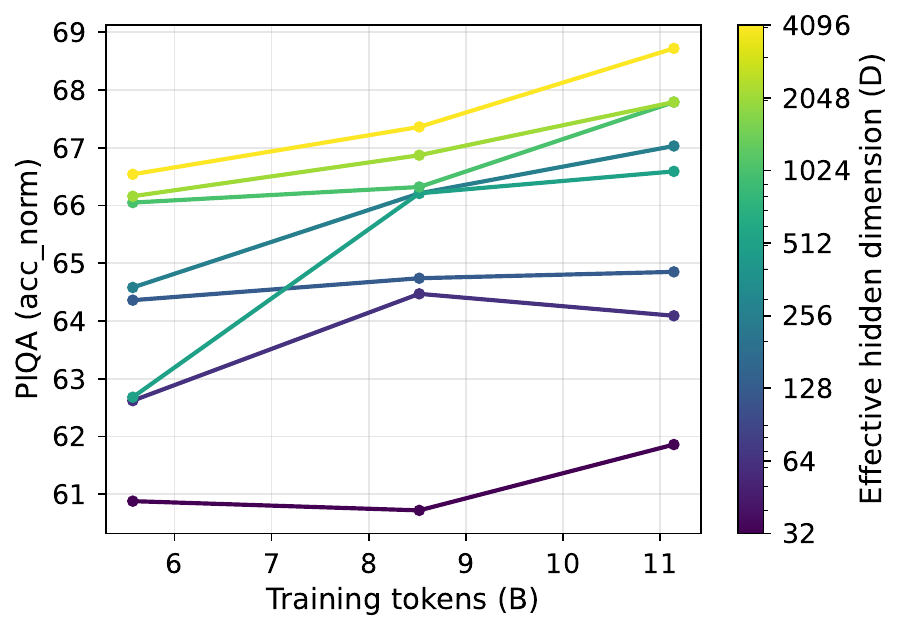}
        \caption{PIQA}
    \end{subfigure}
    \begin{subfigure}[t]{0.4\textwidth}
        \centering
        \includegraphics[width=\textwidth]{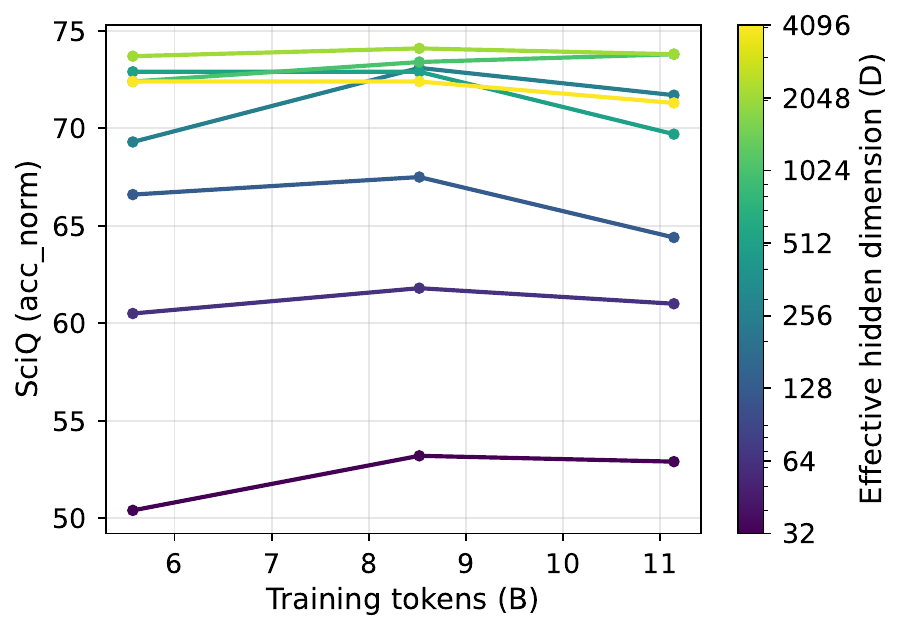}
        \caption{SciQ}
    \end{subfigure}%
    \caption{Downstream scores evolution along training for different benchmarks.}
    \label{fig:downstream_curves}
\end{figure*}

\FloatBarrier
\section{Experimental setups}
\label{app:exp_setups}

\subsection{\texttt{SpamLang} Experiments}
\label{app:spamlang_hyperparams}
We report hyperparameters for \texttt{SpamLang} experiments in \Cref{tab:spamlang_hyperparams}. Additionally, our initial experimentations on the impact of weight decay on the training dynamics showed minimal impact. We ran experiments for $D=32,768$ using a learning rate of 1e-4, and obtain final losses within the $[0.2, 0.4]$ range.  

\begin{table}[tbh]
\centering
\begin{tabular}{ll}
\toprule
\textbf{Hyperparameter} & \textbf{Value} \\
\midrule
Num. Layers & 30 \\
Head Dimension & 64 \\
Num. Attention Heads & 9 \\
Num. KV Heads & 3 \\
Hidden size & \num{576} \\
Intermediate size & \num{1536} \\
\midrule
Precision & Mixed (\texttt{bfloat16}) \\
Sequence Length & 64 \\
Batch Size & 128 \\
Total Training Steps & 5000 \\
Warmup Steps & 500 \\
Weight Decay & 0.01 \\
Optimizer & AdamW \\
Adam $\beta_1, \beta_2$ & (0.9, 0.95) \\
Adam $\epsilon$ & $1 \times 10^{-8}$ \\
Learning Rate Schedule & Cosine Decay \\
Gradient Clipping & 1.0 \\
\bottomrule
\end{tabular}
\caption{Training hyperparameters used for the \texttt{SpamLang} experiment. Experiments were run on single A6000 NVIDIA GPUs.}
\label{tab:spamlang_hyperparams}
\end{table}

Additionally, we provide a side-by-side comparison of our learning rate and vocabulary size sweeps in \Cref{fig:spamlang_final_comparision}.

\begin{figure*}[tbh!]
    \centering
    \begin{subfigure}[t]{0.45\textwidth}
        \centering
        \includegraphics[width=\textwidth]{imgs/loss_heatmap_tied.pdf}
        \caption{Tied embeddings}
    \end{subfigure}%
    ~ 
    \begin{subfigure}[t]{0.45\textwidth}
        \centering
        \includegraphics[width=\textwidth]{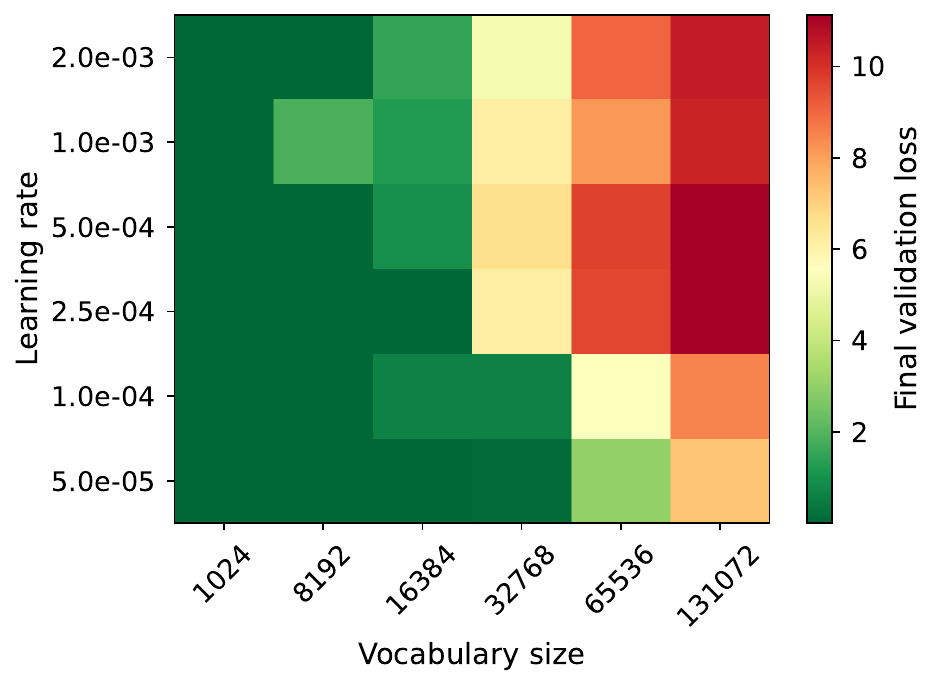}
        \caption{Untied embeddings}
    \end{subfigure}
    \caption{Final validation loss for 106M (non-embedding) parameter Transformer models trained on the \texttt{SpamLang} synthetic language for various vocabulary sizes and learning rates. We report untied and tied embedding setups, and observe no strong effect from this hyper-parameter choice. The hidden dimension is set to 576 for all models.}
    \label{fig:spamlang_final_comparision}
\end{figure*}

\FloatBarrier

\subsection{Pretraining Experiments}
\label{app:pretrain_hyperparams}

We report hyperparameters for our pretraining experiments in \Cref{tab:gpt_bottleneck_hyperparams}. We perform the decay phase of the WSD schedule at 3 intermediate steps:
\begin{itemize}
    \item After $\num{80000}$ steps, where we decay for 5000 steps;
    \item After $\num{120000}$ steps, where we decay for 10000 steps;
    \item After $\num{170000}$ steps, where we decay for 10000 steps.
\end{itemize}
We adjust the decay length to remain within a 5-10\% ratio of the total training steps. For every model, the decay phase is performed on the same data as the stable phase. We observe considerable performance gains during the decay phase, as reported in~\citet{hu2024minicpm}.

\begin{table}[tbh]
\centering
\begin{tabular}{lc}
\toprule
\textbf{Hyperparameter} & \textbf{Value} \\
\midrule
Num. Layers & 6 \\
Head Dimension & 128 \\
Num. Attention Heads & 32 \\
Hidden size & \num{4096} \\
Intermediate size & \num{16384} \\
Weight Tying & No \\
\midrule
Precision & Mixed (\texttt{bfloat16}) \\
Sequence Length & 512 \\
Train Batch Size & 128 \\
Total Training Steps & \num{170000} \\
Learning Rate & $3 \times 10^{-4}$ \\
Warmup Steps & \num{2000} \\
Cooldown Steps & \num{10000} \\
Weight Decay & 0.1 \\
Optimizer & AdamW \\
Adam $\beta_1, \beta_2$ & (0.9, 0.95) \\
Adam $\epsilon$ & $1 \times 10^{-8}$ \\
LR Schedule & Warm-up Stable Decay (WSD) \\
Decay Type & Cosine (5-10\%) \\
Gradient Clipping & 1.0 \\
\bottomrule
\end{tabular}
\caption{Training hyperparameters used for the pretraining experiments with constrained heads. Experiments were run on B200 GPUs, totaling $\sim$760 hours.}
\label{tab:gpt_bottleneck_hyperparams}
\end{table}

\section{Gradient Compression Example}\label{app:gradcomp_ex}

In \Cref{fig:cp_wayne,fig:cp_may,fig:cp_1907}, we display the strongest coefficients (in absolute value) of the gradients projected on the null space of the LM head for target tokens ``\textit{Wayne}'', ``\textit{May}'' and ``\textit{1907}'' in the sentence ``\textit{John Wayne was born on May 26th, 1907.}''. We also report the logit gradient coefficients as a comparison point. The models are taken from the GPT-2 suite~\citep{radford2019language}.
\begin{figure*}[t!]
    \centering
    \begin{subfigure}[t]{0.9\textwidth}
        \centering
        \includegraphics[width=\textwidth]{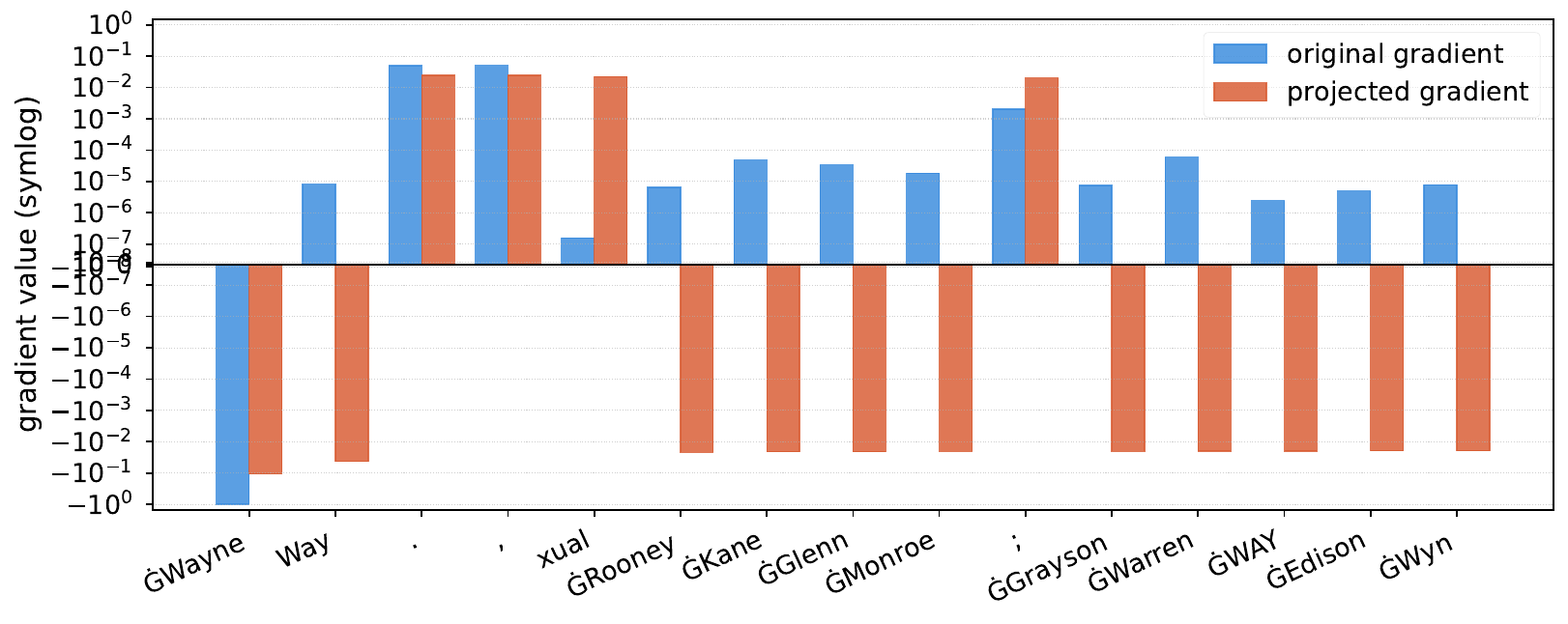}
        \caption{GPT2-Small}
    \end{subfigure}
    \begin{subfigure}[t]{0.9\textwidth}
        \centering
        \includegraphics[width=\textwidth]{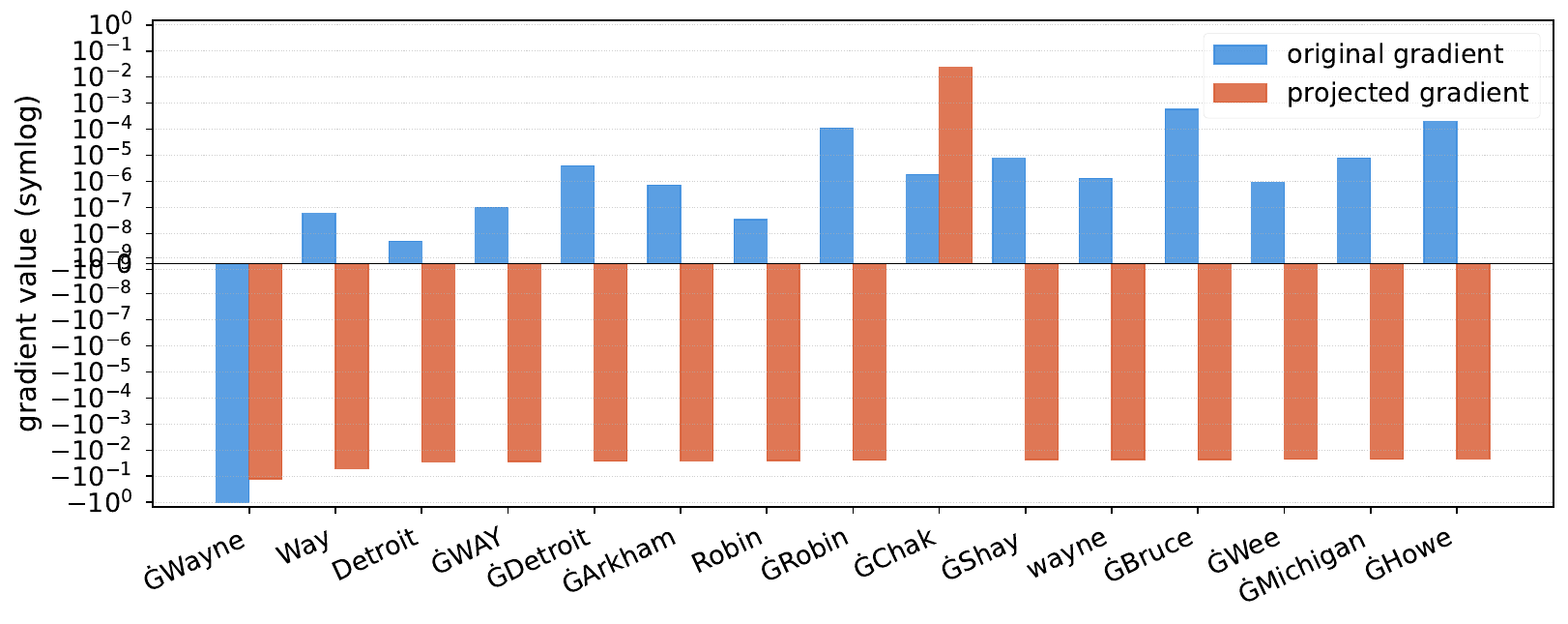}
        \caption{GPT2-Medium}
    \end{subfigure}
    \begin{subfigure}[t]{0.9\textwidth}
        \centering
        \includegraphics[width=\textwidth]{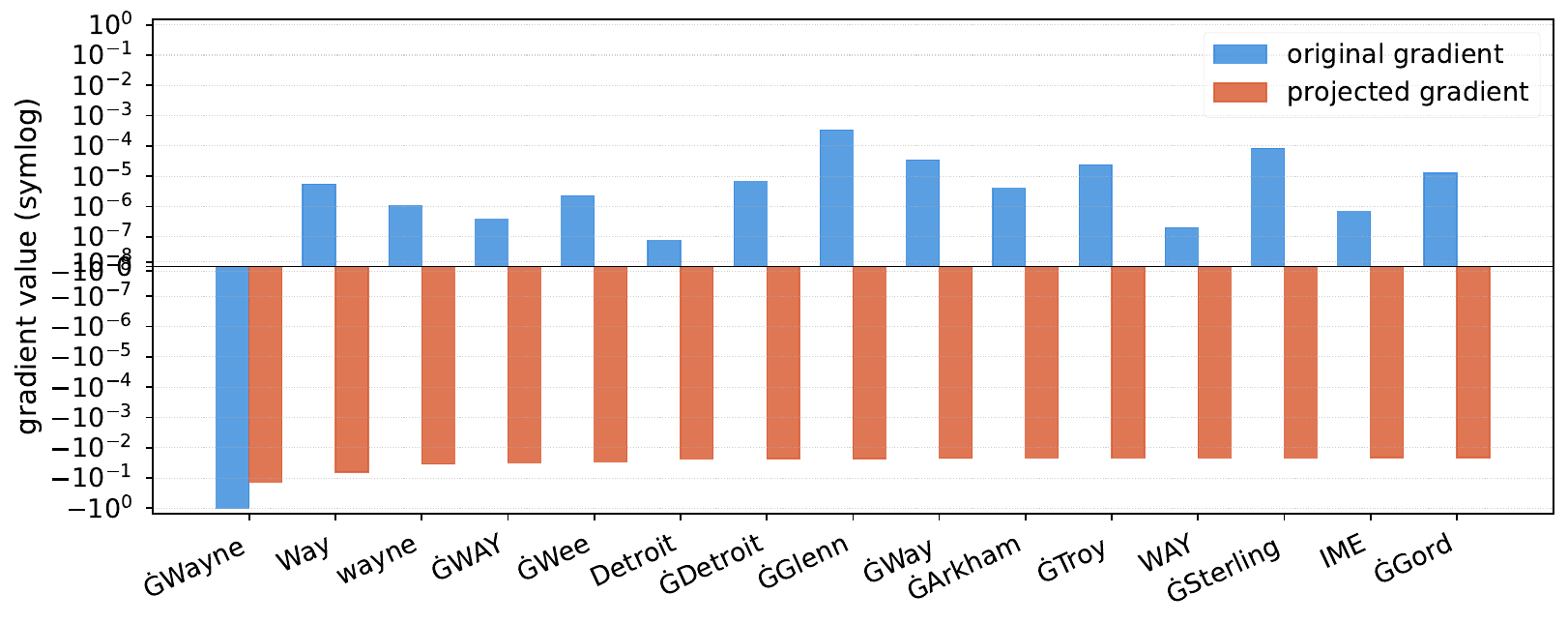}
        \caption{GPT2-Large}
    \end{subfigure}
    \begin{subfigure}[t]{0.9\textwidth}
        \centering
        \includegraphics[width=\textwidth]{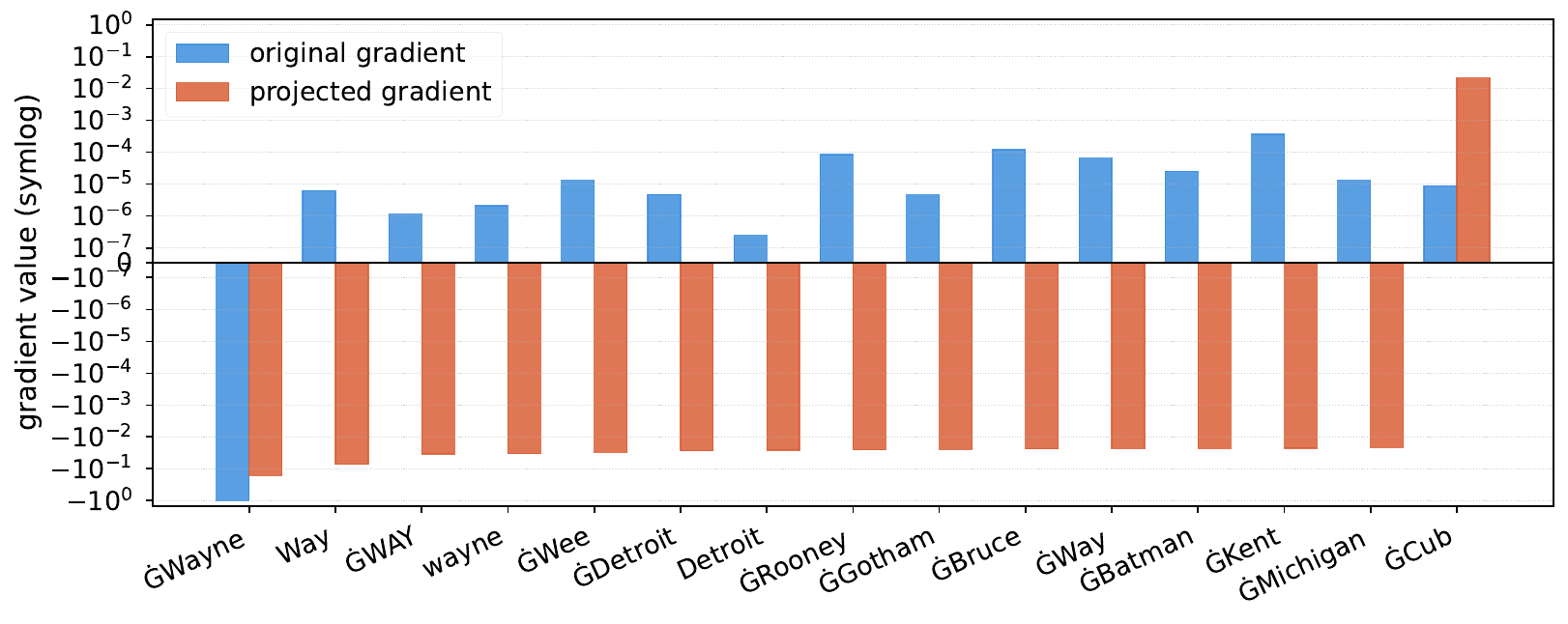}
        \caption{GPT2-XL}
    \end{subfigure}
    \caption{Most important gradient coefficients ranked by post-projection highest absolute values (normalized). The sentence is ``\textit{John \textbf{Wayne} was born on May 26th, 1907.}'', where the target token for this specific gradient is bolded.}
    \label{fig:cp_wayne}
\end{figure*}

\begin{figure*}[t!]
    \centering
    \begin{subfigure}[t]{0.9\textwidth}
        \centering
        \includegraphics[width=\textwidth]{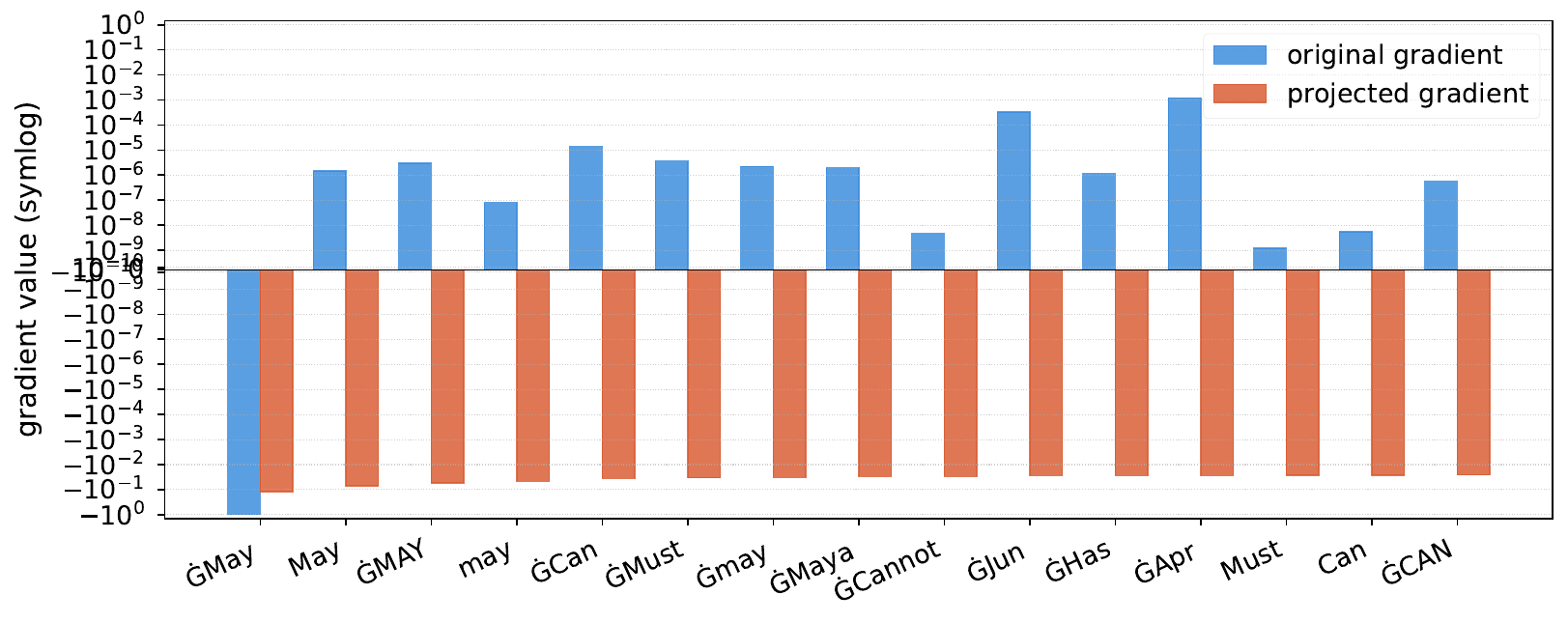}
        \caption{GPT2-Small}
    \end{subfigure}
    \begin{subfigure}[t]{0.9\textwidth}
        \centering
        \includegraphics[width=\textwidth]{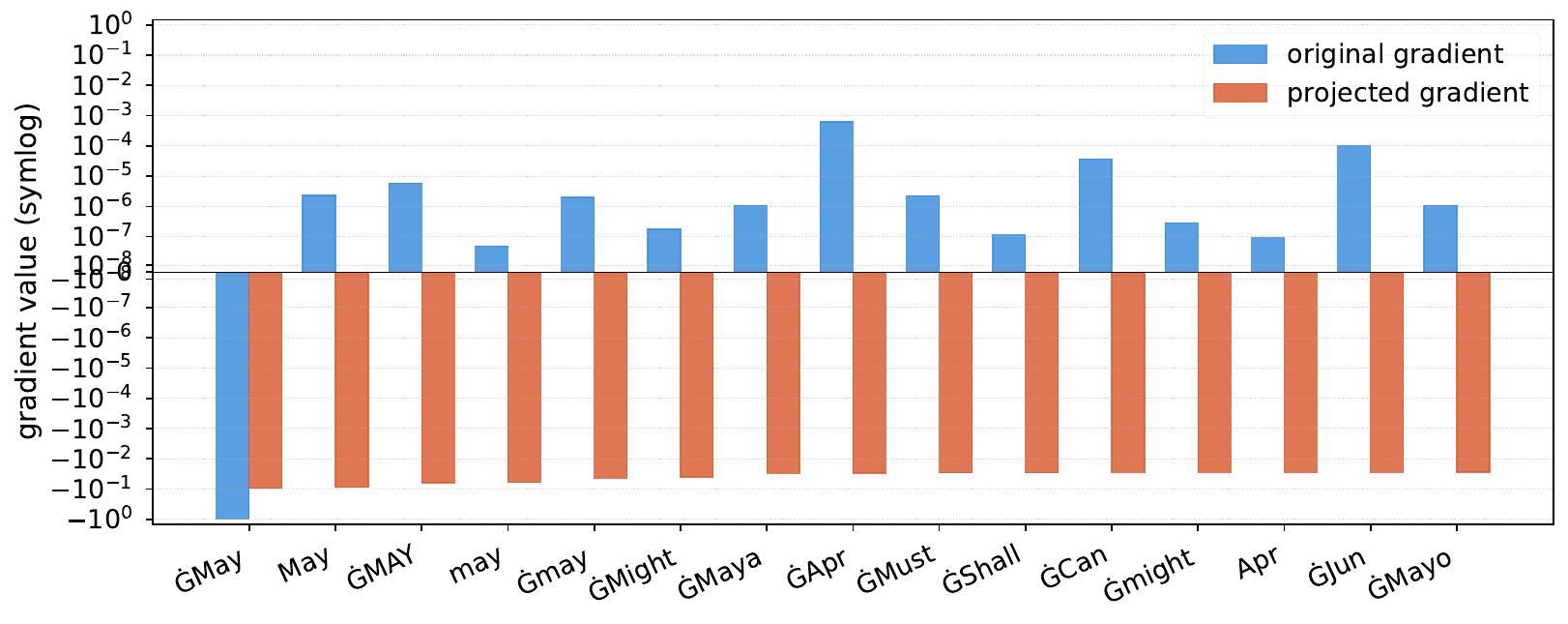}
        \caption{GPT2-Medium}
    \end{subfigure}
    \begin{subfigure}[t]{0.9\textwidth}
        \centering
        \includegraphics[width=\textwidth]{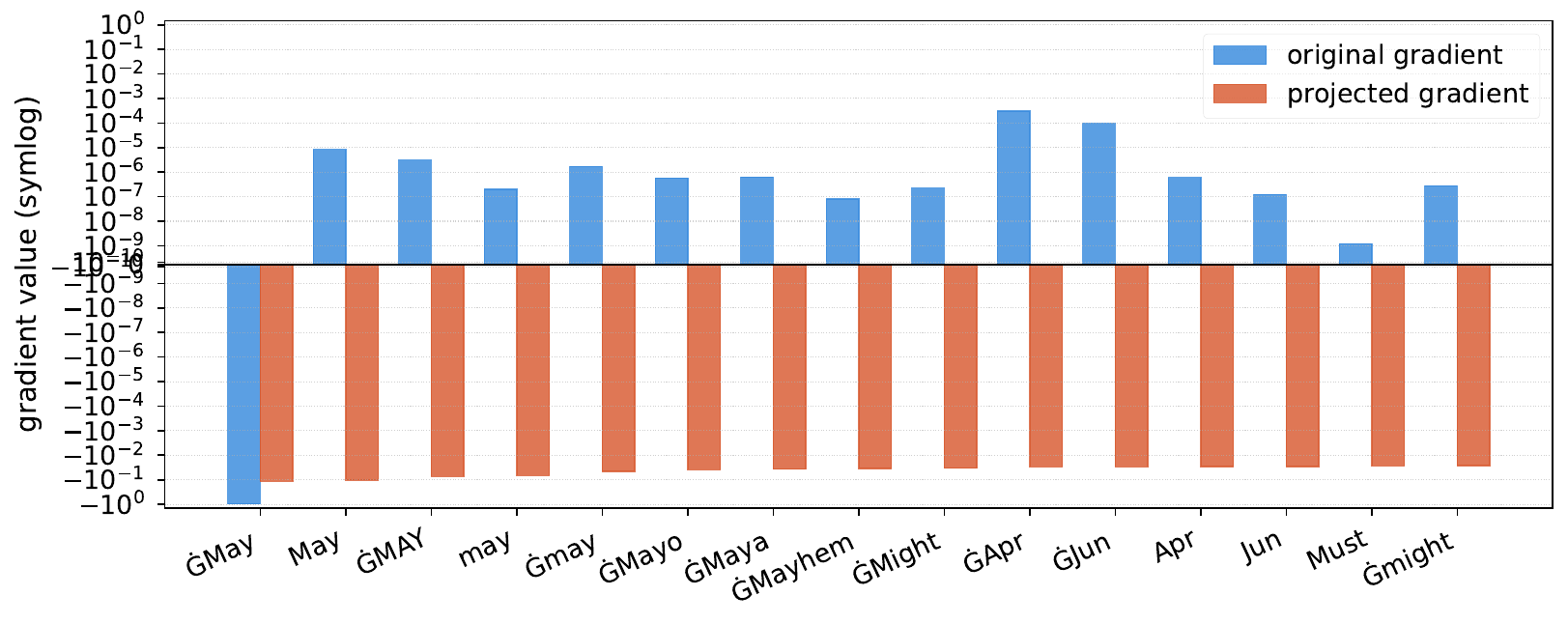}
        \caption{GPT2-Large}
    \end{subfigure}
    \begin{subfigure}[t]{0.9\textwidth}
        \centering
        \includegraphics[width=\textwidth]{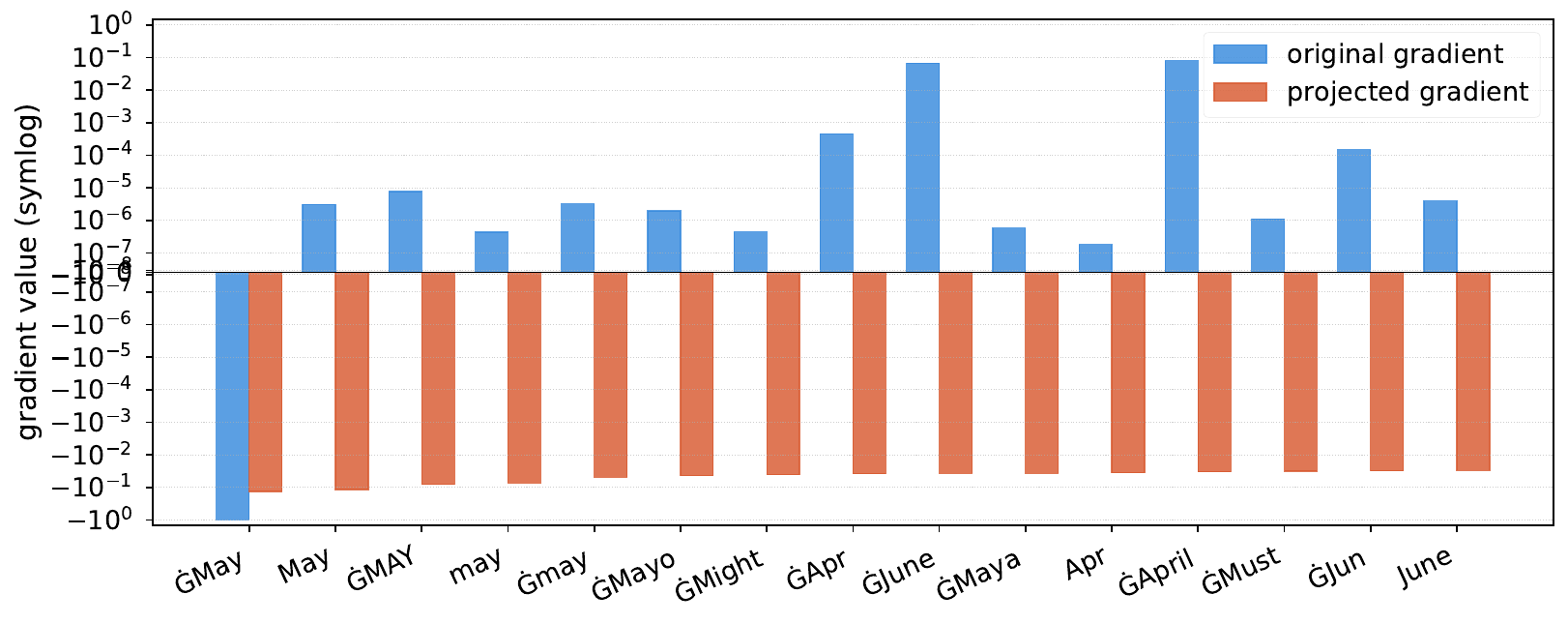}
        \caption{GPT2-XL}
    \end{subfigure}
    \caption{Most important gradient coefficients ranked by post-projection highest absolute values (normalized). The sentence is \textit{John Wayne was born on \textbf{May} 26th, 1907.}, where the target token for this specific gradient is bolded.}
    \label{fig:cp_may}
\end{figure*}

\begin{figure*}[t!]
    \centering
    \begin{subfigure}[t]{0.9\textwidth}
        \centering
        \includegraphics[width=\textwidth]{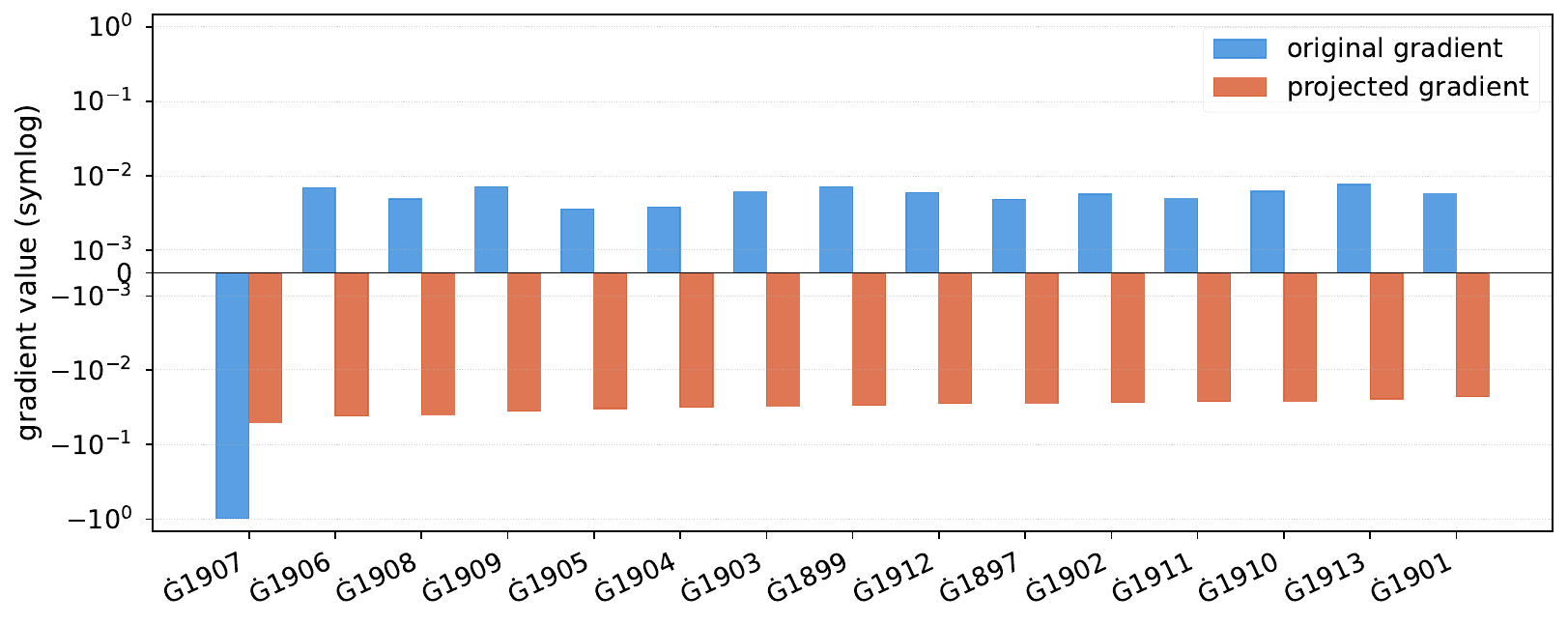}
        \caption{GPT2-Small}
    \end{subfigure}
    \begin{subfigure}[t]{0.9\textwidth}
        \centering
        \includegraphics[width=\textwidth]{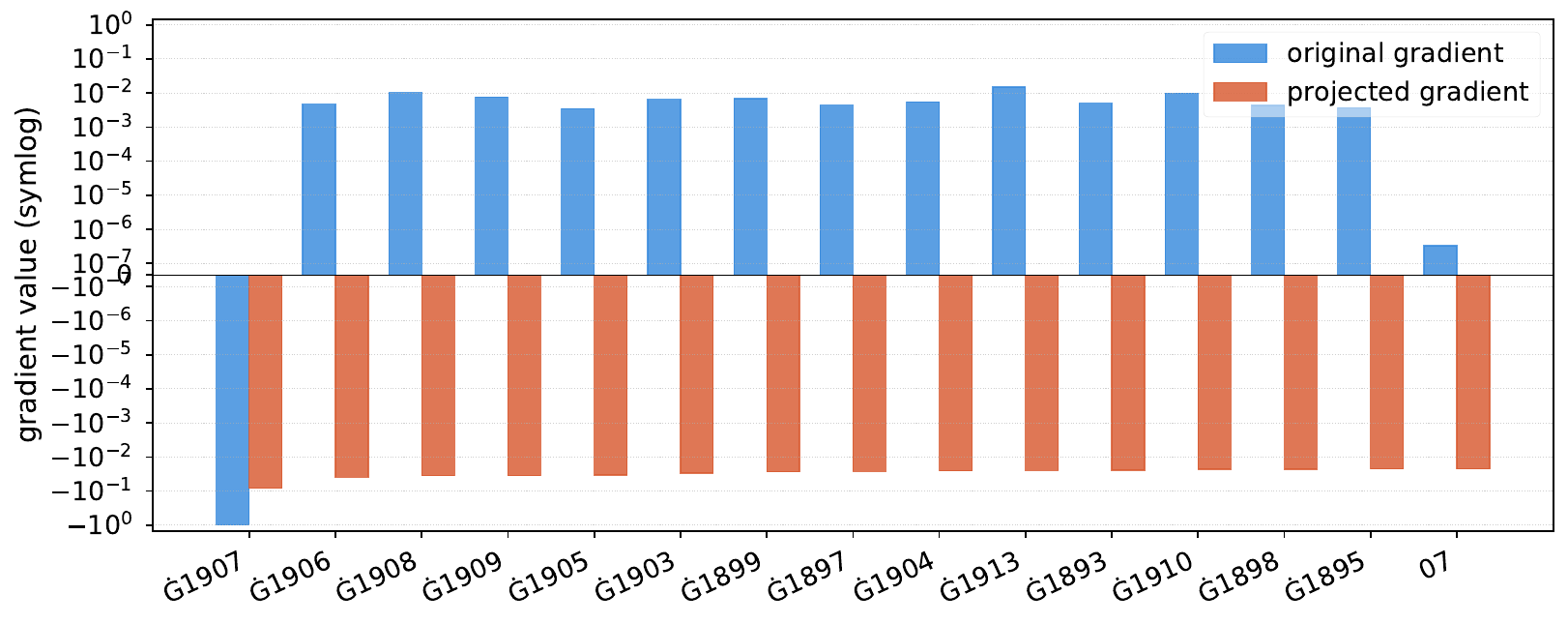}
        \caption{GPT2-Medium}
    \end{subfigure}
    \begin{subfigure}[t]{0.9\textwidth}
        \centering
        \includegraphics[width=\textwidth]{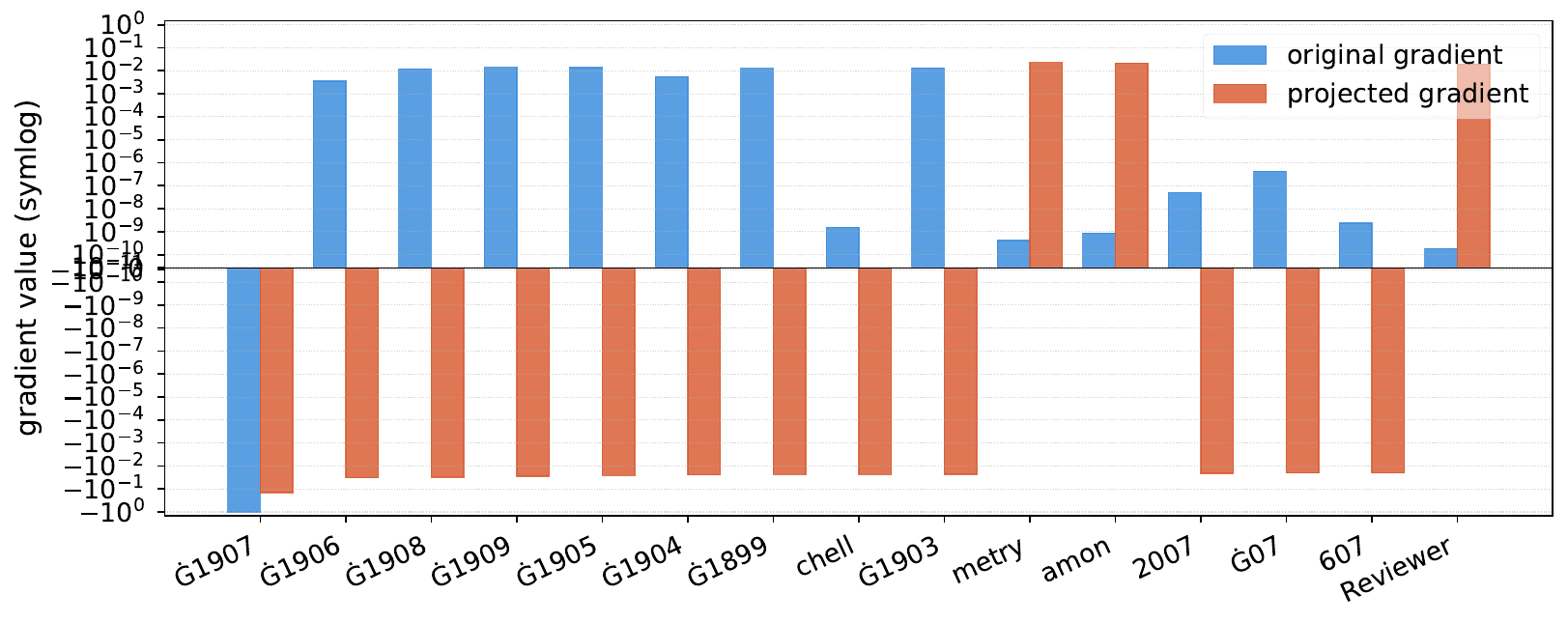}
        \caption{GPT2-Large}
    \end{subfigure}
    \begin{subfigure}[t]{0.9\textwidth}
        \centering
        \includegraphics[width=\textwidth]{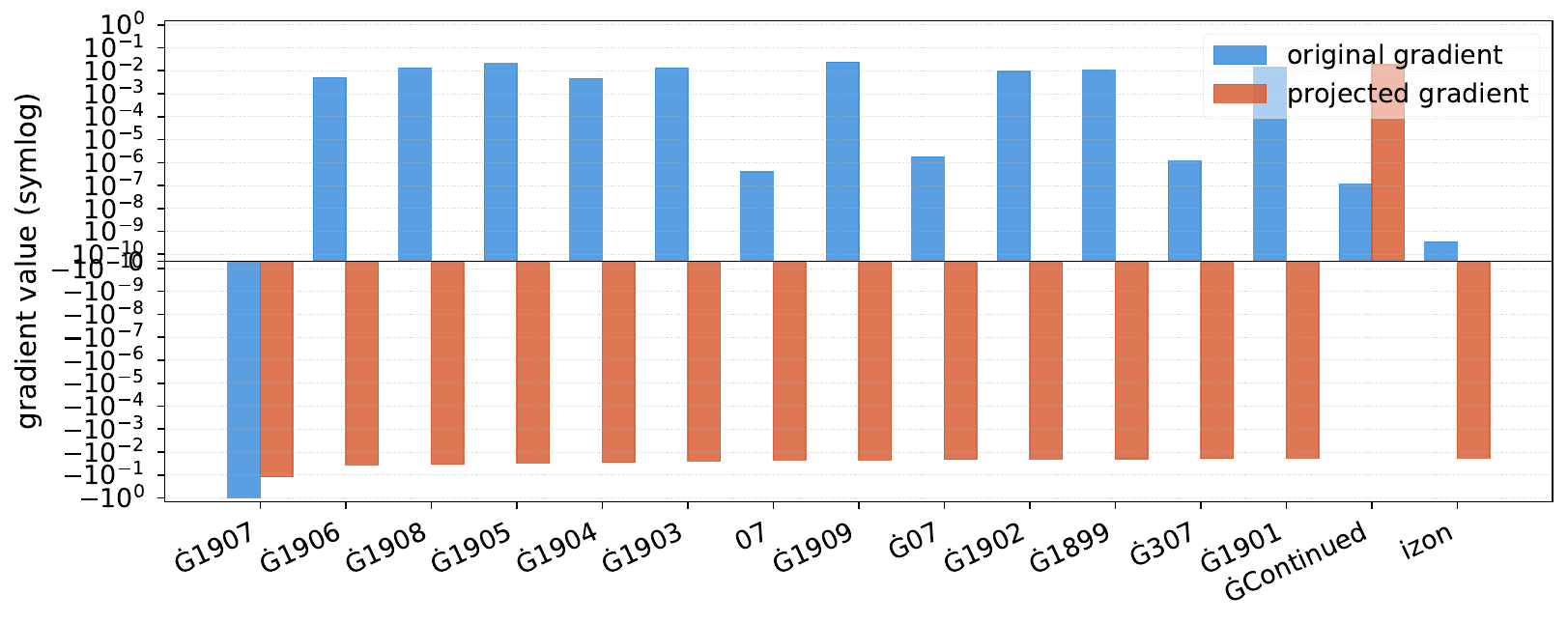}
        \caption{GPT2-XL}
    \end{subfigure}
    \caption{Most important gradient coefficients ranked by post-projection highest absolute values (normalized). The sentence is \textit{John Wayne was born on May 26th, \textbf{1907}.}, where the target token for this specific gradient is bolded.}
    \label{fig:cp_1907}
\end{figure*}
\FloatBarrier
\Cref{fig:cp_wayne,fig:cp_may,fig:cp_1907} show very distinct patterns that let us better understand how the bottleneck affects the shape of the training feedback provided to the backbone. Overall, we observe that in all cases, the projected gradient is poorly aligned with the original gradient. Even though the strongest coefficient always coincides with the observed ground-truth token, we notice a clear distortion of the signal for the rest of the tokens. We do not observe strong high-level differences across model sizes for these specific examples.

In \Cref{fig:cp_wayne}, where ``\textit{Wayne}'' is the target token coming after ``\textit{John}'', we notice that tokens that are differently related to the form ``\textit{Wayne}'' are associated to a non-negligible negative projected gradient coefficient. Notably, part of the training feedback is reinforcing tokens like ``\textit{Rooney}'' (as in the football player Wayne Rooney), ``\textit{Bruce}''/``\textit{Robin}''/``\textit{Arkham}'' (as in the Batman comics), or ``\textit{Detroit}''/``\textit{Michigan}'' (Wayne is a suburb of Detroit, Michigan (United States)). Across all model sizes, we also notice that subtokens and/or script variants of ``\textit{Wayne}'' carry almost as much gradient as the target token, which is reminiscent of~\citet{finlayson2024closing}. 

In \Cref{fig:cp_may}, where ``\textit{May}'' is the target token coming after ``\textit{John Wayne was born on}'', we notice a similar effect, where gradient coefficients are scattered across typographical variants of ``\textit{May}'', and other months which would be semantically valid in this context. We also notice that tokens that are semantically close to the verb ``\textit{May}'', such as ``\textit{Shall}'', ``\textit{Can}'', ``\textit{Might}'', or ``\textit{Must}'', are reinforced. Finally, tokens that are only lexically close to ``May'', such as ``Mayhem'' or ``Maya'', also absorb part of the gradient.

In \Cref{fig:cp_1907}, where ``\textit{1907}'' is the target token coming after ``\textit{John Wayne was born on May 26th,}'', we observe that the tokens with the highest gradient values besides ``\textit{1907}'' correspond to other years that are close to 1907. We also observe that some tokens that seem unrelated to 1907 appear in the top values, such as ``\textit{metry}'', ``\textit{amon}'', or ``\textit{Continued}''.

In conclusion, the weight of the top component of the original gradient is transferred to tokens that are generally related to the target token. However, this entanglement of the training signal is not very meaningful as it seems to be non-contextual and mixes different meaning of the same word form (e.g. \textit{May} $\rightarrow$ \textit{Might}). It also hampers the clarity of the training signal: it dilutes precise feedback (e.g. 1907) into broader categories (e.g. years surrounding 1907) or noise (e.g. ``\textit{metry}'').

\end{document}